\title{Randomness Helps Rigor: A Probabilistic Learning Rate Scheduler Bridging Theory and Deep Learning Practice}
\newtheorem{theorem}{Theorem}
\newtheorem{remark}{Remark}
\newtheorem{definition}{Definition}
\newtheorem{assumption}{Assumptions}
\newtheorem{lemma}{Lemma} 
\newcommand\norm[1]{\left\lVert#1\right\rVert}
\newcommand\abs[1]{\left\lvert #1\right\rvert}
\newcommand{\BlackBox}{\rule{1.5ex}{1.5ex}}
\newenvironment{proofsketch}{\par\noindent{\bf Proof Sketch\ }}{\hfill\BlackBox\\[2mm]}
\author{%
  Dahlia Devapriya\\
  Department of Electrical Engineering\\
  IIT Madras\\
  Chennai, India.\\
  \texttt{ee22d003@smail.iitm.ac.in} \\
   \And
   Thulasi Tholeti\thanks{Authors contributed equally} \\
  Institute for Experiential AI\\
  Northeastern University\\
  Boston, USA.\\
  \texttt{t.tholeti@northeastern.edu} \\
   \And
   Janani Suresh$^{*}$\\
  Department of Electrical Engineering\\
  IIT Madras\\
  Chennai, India.\\
  \texttt{ee22s079@smail.iitm.ac.in} \\
   \And
   Sheetal Kalyani\\
  Department of Electrical Engineering\\
  IIT Madras\\
  Chennai, India.\\
  \texttt{skalyani@ee.iitm.ac.in} 
}
\begin{document}
\maketitle

\begin{abstract}
  Learning rate schedulers have shown great success in speeding up the convergence of learning algorithms in practice. However, their convergence to a minimum has not been proven theoretically. This difficulty mainly arises from the fact that, while traditional convergence analysis prescribes to monotonically decreasing (or constant) learning rates, schedulers opt for rates that often increase and decrease through the training epochs. In this work, we aim to bridge the gap by proposing a probabilistic learning rate scheduler (PLRS) that does not conform to the monotonically decreasing condition, with provable convergence guarantees. To cement the relevance and utility of our work in modern day applications, we show experimental results on deep neural network architectures such as ResNet, WRN, VGG, and DenseNet on  CIFAR-10, CIFAR-100, and Tiny ImageNet datasets. We show that PLRS performs as well as or better than existing state-of-the-art learning rate schedulers in terms of convergence as well as accuracy. For example, while training ResNet-110 on the CIFAR-100 dataset, we outperform the state-of-the-art knee scheduler by $1.56\%$ in terms of classification accuracy. Furthermore, on the Tiny ImageNet dataset using ResNet-50 architecture, we show a significantly more stable convergence than the cosine scheduler and a better classification accuracy than the existing schedulers.
\end{abstract}

\section{Introduction}
Over the last two decades, there has been an increased interest in analyzing the convergence of gradient descent-based algorithms. This can be majorly attributed to their extensive use in the training of neural networks and their numerous derivatives. Stochastic Gradient Descent (SGD) and their adaptive variants such as Adagrad  \citep{adagrad}, Adadelta  \citep{adadelta}, and Adam \citep{adam} have been the choice of optimization algorithms for most machine learning practitioners, primarily due to their ability to process enormous amounts of data in batches. Even with the introduction of adaptive optimization techniques that use a default learning rate, the use of stochastic gradient descent with a tuned learning rate was quite prevalent, mainly due to its generalization properties \citep{zhou2020towards}. However, tuning the learning rate of the network can be computationally intensive and time consuming.

Various methods to efficiently choose the learning rate without excessive tuning have been explored. One of the initial successes in this domain is the random search method \citep{bergstra2012random}; here, a learning rate is randomly selected from a specified interval across multiple trials, and the best performing learning rate is ultimately chosen. Following this, more advanced methods such as Sequential Model-Based Optimization (SMBO) \citep{bergstra2013making} for the choice of learning rate became prevalent in practice. SMBO represents a significant advancement over random search by tracking the effectiveness of learning rates from previous trials and using this information to build a model that suggests the next optimal learning rate. A tuning method for shallow neural networks based on theoretical computation of the Hessian Lipschitz constant was proposed by Tholeti et al. \citep{tholeti2020tune}.

     Several works on training deep neural networks prescribed the use of a decaying Learning Rate (LR)\footnote{We abbreviate learning rate only in the context of learning rate scheduler as LR scheduler.} scheduler \citep{resnet,robust, googlenet}. Recently, much attention has been paid to cyclically varying learning rates  \citep{cyclic}. By varying learning rates in a triangular schedule within a predetermined range of values, the authors hypothesize that the optimal learning rate lies within the chosen range, and the periodic high learning rate helps escape saddle points. 
    Although no theoretical backing has been provided, it was shown to be a valid hypothesis owing to the presence of many saddle points in a typical high dimensional learning task \citep{dauphin2014}. Many variants of the cyclic LR scheduler have henceforth been used in various machine learning tasks  \citep{ulm,dhillon2019baseline,andriushchenko2020understanding}. A cosine-based cyclic LR scheduler proposed by Loshchilov et al. \citep{sgdr} has also found several applications, including Transformers \citep{zamir2022restormer,caron2021emerging}. Following the success of the cyclic LR schedulers, a one-cycle LR scheduler proposed by Smith et al. \citep{smith2019super} has been observed to provide faster convergence empirically; this was attributed to the injection of `good noise' by higher learning rates which helps in convergence. Although empirical validation and intuitions were provided to support the working of these LR schedulers, a theoretical convergence guarantee has not been provided to the best of our knowledge.
    
There is extensive research on the convergence behavior of perturbed SGD methods, where noise is added to the gradient during updates.
In Jin et al. \citep{jin2017escape}, the vanilla gradient descent is perturbed by samples from a ball whose radius is fixed using the optimization function-specific constants. They show escape from a saddle point by characterizing the distribution around a perturbed iterate as uniformly distributed over a perturbation ball along which the region corresponding to being stuck at a saddle point is shown to be very small. In Ge at al. \citep{ge2015escaping}, the saddle point escape for a perturbed stochastic gradient descent is proved using the second-order Taylor approximation of the optimization function, where the perturbation is applied from a unit ball to the stochastic gradient descent update. Following Ge at al. \citep{ge2015escaping}, several works prove the convergence of noisy stochastic gradient descent in the additive noise setting \citep{zhang2017hitting,jin2021,arjevani2023lower,yiming2025efficiently}. In contrast to the above works which operate in the \textit{additive} noise setting, our proposed LR scheduler results in \textit{multiplicative} noise. Analyzing the convergence behavior under the new multiplicative noise setting is fairly challenging and results in a non-trivial addition to the literature.

\subsection{Motivation}
Traditional convergence analysis of gradient descent algorithms and its variants requires the use of a constant or a decaying learning rate \citep{nesterov}. However, with the introduction of LR schedulers, the learning rates are no longer monotonically decreasing. Rather, their values heavily fluctuate, with the occasional use of very large learning rates. Although there are ample justifications provided for the success of such methods, there are no theoretical results which prove that stochastic gradient descent algorithms with fluctuating learning rates converge to a local minimum in a non-convex setting. With the increase of emphasis on trustworthy artificial intelligence, we believe that it is important to no longer treat optimization algorithms as black-box models, and instead provide provable convergence guarantees while deviating from the proven classical implementation of the descent algorithms. In this work, we aim to bridge the gap by providing rigorous mathematical proof for the convergence of our proposed probabilistic LR scheduler with SGD.
\renewcommand*{\thefootnote}{\arabic{footnote}}
\subsection{Our contributions}\label{sec:contri}
\begin{enumerate}
    \item We propose a new Probabilistic Learning Rate Scheduler (PLRS) where we model the learning rate as an instance of a random noise distribution. 
    \item We provide convergence proofs to show that SGD with our proposed PLRS converges to a local minimum
    in Section \ref{sec:proof}. To the best of our knowledge, we are the first to theoretically prove convergence of SGD with a LR scheduler that does not conform to constant or monotonically decreasing rates. We show how our LR scheduler, in combination with inherent SGD noise, speeds up convergence by escaping saddle points.
    \item Our proposed probabilistic LR scheduler, while provably convergent, can be seamlessly ported into practice without the knowledge of theoretical constants (like gradient and Hessian-Lipschitz constants). We illustrate the efficacy of the PLRS through extensive experimental validation, where we compare the accuracies with state-of-the-art schedulers in Section \ref{sec:results}. We show that the proposed method outperforms popular schedulers such as cosine annealing \citep{sgdr}, one-cycle \citep{smith2019super}, knee \citep{knee} and the multi-step scheduler when used with ResNet-110 on CIFAR-100, VGG-16 on CIFAR-10 and ResNet-50 on Tiny ImageNet, while displaying comparable performances on other architectures like DenseNet-40-12 and WRN-28-10 when trained on CIFAR-10 and CIFAR-100 datasets respectively. We also observe lesser spikes in the training loss across epochs which leads to a faster and more stable convergence.
    We provide our base code with all the hyper-parameters for reproducibility\footnote{https://github.com/janani-suresh-97/uniform\_lr\_scheduler}.
\end{enumerate}
\section{Probabilistic learning rate scheduler}
Let $f: \mathbb{R}^d \rightarrow \mathbb{R}$ be the function to be minimized. The unconstrained optimization, $
    \min_{\textbf{x} \in \mathbb{R}^d} f(\bf x)$, can be solved iteratively using stochastic gradient descent whose update equation at time step $t$ is given by
\begin{equation}
   \mathbf{x}_{t+1} = \mathbf{x}_t -\eta_{t+1} g(\mathbf{x}_t).
\end{equation}
Here, $\eta_{t+1} \in \mathbb{R}$ is the learning rate and $g(\mathbf{x}_t)$ is the stochastic gradient of $f(\bf x)$ at time $t$. 
In this work, we propose a new LR scheduler, in which the learning rate $\eta_{t+1}$ is sampled from a uniform random variable, 
\begin{equation}
    \eta_{t+1} \sim \mathcal{U}[L_{min},L_{max}], \quad 0<L_{min}<L_{max}<1.
\end{equation}
Note that contrary to existing LR schedulers, which are deterministic functions, we propose that the learning rate at each time instant be a realization of a uniformly distributed random variable. Although the learning rate in our method is not scheduled, but is rather chosen as a random sample at every time step, we call our proposed method Probabilistic LR scheduler to keep in tune with the body of literature on LR schedulers. 
In order to represent our method in the conventional form of the stochastic gradient descent update, we split the learning rate $\eta_{t+1}$ into a constant learning rate $\eta_c$ and a random component, as $\eta_{t+1} = \eta_c + u_{t+1}$, where $u_{t+1}\sim \mathcal{U}[L_{min}-\eta_c,L_{max}-\eta_c]$. The stochastic gradient descent update using the proposed PLRS (referred to as SGD-PLRS) takes the form
\begin{equation}\label{update}
        \mathbf{x}_{t+1} = \mathbf{x}_{t} -(\eta_c +u_{t+1})g(\mathbf{x}_{t}) =\mathbf{x}_{t}-\eta_c\nabla f(\mathbf{x}_{t})-\mathbf{w}_{t},
    \end{equation}
    where we define $\mathbf{w}_{t}$ as
    \begin{equation}\label{noise_term}
        \mathbf{w}_{t}=\eta_c g(\mathbf{x}_{t}) - \eta_c\nabla f(\mathbf{x}_{t}) +  u_{t+1}g(\mathbf{x}_{t}).
    \end{equation}
    Here, $\nabla f(\mathbf{x}_t)$ refers to the true gradient, i.e., $\nabla f(\mathbf{x}_t) = \mathbb{E} [g(\mathbf{x}_{t})]$. Note that in \eqref{update}, the term $\mathbf{x}_{t}-\eta_c\nabla f(\mathbf{x}_{t})$ resembles the vanilla gradient descent update and $\mathbf{w}_{t}$ encompasses the noise in the update; the noise is inclusive of both the randomness due to the stochastic gradient as well as the randomness from the proposed LR scheduler. We set $\eta_c = \frac{L_{min}+L_{max}}{2}$ so that the noise $\mathbf{w}_{t}$ is zero mean, which we prove later in Lemma \ref{lemma:zeromean}.
\begin{remark}
   Note that a periodic LR scheduler such as triangular, or cosine annealing based scheduler can be considered as a single instance of our proposed PLRS. The range of values assigned to the learning rate $\eta_{t+1}$ is pre-determined in both cases. In fact, for any LR scheduler, the basic mechanism is to vary the learning rate between a low and a high value - the high learning rates help escape the saddle point by perturbing the iterate, whereas the low values help in convergence. This pattern of switching between high and low values can be achieved through both stochastic and deterministic mechanisms. While the current literature explores the deterministic route (without providing analysis), we propose and explore the stochastic variant here and also provide a detailed analysis.
\end{remark}

\section{Preliminaries and definitions}
We denote the Hessian of a function $f:\mathbb{R}^d\rightarrow\mathbb{R}$ at $\mathbf{x}\in\mathbb{R}^d$ as $\textbf{H}(\textbf{x})\coloneq \nabla^2f(\mathbf{x})$ and the minimum eigenvalue of the Hessian as $\lambda_{min} (\textbf{H}(\textbf{x}))\coloneq \lambda_{min}(\nabla^2f(\mathbf{x}))$ respectively.
\begin{definition}
    A function $f: \mathbb{R}^d \rightarrow \mathbb{R}$ is said to be $\beta$-smooth (also referred to as $\beta$-gradient Lipschitz) if, $\exists$ $\beta \geq 0$ such that, 
    \begin{equation}
     \norm{\nabla f(\mathbf{x}) - \nabla f(\mathbf{y})} \leq \beta \norm{\mathbf{x}-\mathbf{y}}, \quad\forall \mathbf{x},\mathbf{y} \in \mathbb{R}^d.
 \end{equation}
\end{definition}
\begin{definition}
     A function $f: \mathbb{R}^d \rightarrow \mathbb{R}$ is said to be $\rho$-Hessian Lipschitz if, $\exists$ $\rho \geq 0$ such that, 
     \begin{equation}
    \norm{\textbf{H}(\textbf{x}) - \textbf{H}(\textbf{y})} \leq \rho \norm{\mathbf{x}-\mathbf{y}},\quad \forall \mathbf{x},\mathbf{y} \in \mathbb{R}^d.
\end{equation}
 \end{definition}
Informally, a function is said to be gradient/Hessian Lipschitz, if the rate of change of the gradient/Hessian with respect to its input is bounded by a constant, i.e., the gradient/Hessian will not change rapidly.
 We now proceed to define approximate first and second-order stationary points of a given function $f$. 
 \begin{definition} \label{defn:fosp}
    For a function $f: \mathbb{R}^d \rightarrow \mathbb{R}$ that is differentiable, we say $\mathbf{x} \in \mathbb{R}^d$ is a $\nu$- first-order stationary point ($\nu$-FOSP), if for a small positive value of $\nu$, $\norm{\nabla f(\mathbf{x})} \leq \nu.$
\end{definition}
Before we define an $\epsilon$-second order stationary point, we define a saddle point.
\begin{definition}
    For a $\rho$-Hessian Lipschitz function $f: \mathbb{R}^d \rightarrow \mathbb{R}$ that is twice differentiable, we say $\textbf{x}\in\mathbb{R}^d$ is a saddle point if, 
    \begin{equation*}
        \norm{\nabla f(\mathbf{x}) } \leq \nu \quad \text{and}  \quad \lambda_{min} (\textbf{H}(\textbf{x})) \leq -\gamma,
    \end{equation*}
   where $\nu,\gamma>0$ are arbitrary constants.
\end{definition}
For a convex function, it is sufficient if the algorithm is shown to converge to the $\nu$-FOSP as it would be the global minimum. However, in the case of a non-convex function, a point satisfying the condition for a $\nu$-FOSP may not necessarily be a local minimum, but could be a saddle point or a local maximum. Hence, the Hessian of the function is required to classify it as a second-order stationary point, as defined below. Note that, in our analysis, we prove convergence of SGD-PLRS to the approximate second-order stationary point.
\begin{definition} \label{defn:sosp}
    For a $\rho$-Hessian Lipschitz function $f: \mathbb{R}^d \rightarrow \mathbb{R}$ that is twice differentiable, we say $\mathbf{x} \in \mathbb{R}^d$ is a $\nu$-second-order stationary point ($\nu$-SOSP) if, 
\begin{equation}
 \norm{\nabla f(\mathbf{x}) } \leq \nu \quad \text{and}  \quad \lambda_{min} (\textbf{H}(\textbf{x})) \geq -\gamma,
\end{equation}
where $\nu,\gamma>0$ are arbitrary constants.
\end{definition}
\begin{definition}\label{defn:saddle}
    A function $f: \mathbb{R}^d \rightarrow \mathbb{R}$ is said to possess the strict saddle property at all $\textbf{x}\in\mathbb{R}^d$ if $\textbf{x}$ fulfills any one of the following conditions: (i) $\norm{\nabla f(\mathbf{x})}\geq \nu$, (ii) $\lambda_{min} (\textbf{H}(\textbf{x})) \leq -\gamma,$ (iii) $\mathbf{x}$ is close to a local minimum.
\end{definition}
The strict saddle property ensures that an iterate stuck at a saddle point has a direction of escape.
\begin{definition}
    A function $f: \mathbb{R}^d \rightarrow \mathbb{R}$ is $\alpha-$strongly convex if $\lambda_{min}(\textbf{H}(\textbf{x}))\geq \alpha\quad \forall \mathbf{x}\in\mathbb{R}^d$.
\end{definition}

We now provide the formal definitions of two common terms in time complexity.
\begin{definition}
A function $f(s)$ is said to be $O(g(s))$ if $\exists$ a constant $c>0$ such that $|f(s)|\leq c|g(s)|$.  Here $s\in S$ which is the domain of the functions $f$ and $g$.
\end{definition}
\begin{definition}
    A function $f(s)$ is said to be $\Omega(g(s))$ if $\exists$ a constant $c>0$ such that  $|f(s)|\geq c|g(s)|$.
\end{definition}
In our analysis, we introduce the notations $\tilde{O}(.)$ and $\tilde{\Omega}(.)$ which hide all factors (including $\beta$, $\rho$, $d$, and $\alpha$) except $\eta_c$, $L_{min}$ and $L_{max}$ in $O$ and $\Omega$ respectively.

\section{Proof of convergence}\label{sec:proof}
We present our convergence proofs to theoretically show that the proposed PLRS method converges to a $\nu$-SOSP in finite time. We first state the assumptions that are instrumental for our proofs. 
\begin{assumption}
We now state the assumptions regarding the function $f:\mathbb{R}^d\rightarrow\mathbb{R}$ that we require for proving the theorems.
\begin{itemize}
    \item[\hypertarget{hyper:a1}{\textbf{A1}}] The function $f$ is $\beta$-smooth.
    \item[\hypertarget{hyper:a2}{\textbf{A2}}] The function $f$ is $\rho$-Hessian Lipschitz.
    \item[\hypertarget{hyper:a3}{\textbf{A3}}] The norm of the stochastic gradient noise is bounded i.e, $\norm{g(\mathbf{x}_t)-\nabla f(\mathbf{x}_t)}\leq Q\quad \forall t\geq 0$. Further, $\mathbb{E}[Q^2]\leq\sigma^2$.
    \item[\hypertarget{hyper:a4}{\textbf{A4}}] The function $f$ has strict saddle property.
        \item[\hypertarget{hyper:a5}{\textbf{A5}}] The function $f$ is bounded i.e., $|f(\mathbf{x})|\leq B,\; \forall \mathbf{x}\in \mathbb{R}^d$.
        \item[\hypertarget{hyper:a6}{\textbf{A6}}] The function $f$ is locally $\alpha-$strongly convex i.e, in the $\delta$-neighborhood of a locally optimal point $\mathbf{x}^{*}$ for some $\delta>0$.
\end{itemize}
\end{assumption}
\begin{remark}
     If $\nabla\tilde{f}(\tilde{\mathbf{x}}_t)$ and $\tilde{g}(\tilde{\mathbf{x}}_t)$ are the gradient and stochastic gradient of the second order Taylor approximation of $f$ about the iterate $\tilde{\mathbf{x}}_t$, from Assumption \hyperlink{hyper:a3}{\textbf{A3}}, it is implied that $\norm{\tilde{g}(\tilde{\mathbf{x}}_t)-\nabla \tilde{f}(\tilde{\mathbf{x}}_t)}\leq \tilde{Q}$. Further, $\mathbb{E}[\tilde{Q}^2]\leq\tilde{\sigma}^2$.
\end{remark}

Note that these assumptions are similar to those in the perturbed gradient literature \citep{ge2015escaping,jin2017escape,jin2021}. 
We call attention to two significant differences in our approach compared to other perturbed gradient methods such as \citep{jin2017escape,ge2015escaping,jin2021}: (i) In contrast to the isotropic \textit{additive} perturbation commonly added to the SGD update, we introduce randomness in our learning rate, manifested as \textit{multiplicative} noise in the update. This makes the characterization of the total noise dependent on the gradient, making the analysis challenging. 
    (ii) The magnitude of noise injected is computed through the smoothness constants in the work by Jin et al. \citep{jin2017escape,jin2021}; instead, we treat the parameters $L_{min}$ and $L_{max}$ as hyperparameters to be tuned. This enables our PLRS method to be easily applied to training deep neural networks where the computation of these smoothness constants could be infeasible due to sheer computational complexity.

\noindent We reiterate the update equations of the proposed SGD-PLRS.  
\begin{equation}\tag{\ref{update}}
        \mathbf{x}_{t+1} = \mathbf{x}_{t}-\eta_c\nabla f(\mathbf{x}_{t})-\mathbf{w}_{t}.
    \end{equation}
    \begin{equation}\tag{\ref{noise_term}}
        \mathbf{w}_{t}=\eta_c g(\mathbf{x}_{t}) - \eta_c\nabla f(\mathbf{x}_{t}) +  u_{t+1}g(\mathbf{x}_{t}).
    \end{equation}

Note that the term $\mathbf{w}_t$ has zero mean and we state this formally in the lemma below.
\begin{lemma}[Zero mean property] \label{lemma:zeromean}
        The mean of $\mathbf{w}_{t-1} \; \forall t\geq1$ is $0$.
        \end{lemma}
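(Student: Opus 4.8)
The plan is to compute $\mathbb{E}[\mathbf{w}_{t-1}]$ directly from the definition \eqref{noise_term} by splitting it into its three summands and taking expectations term by term, using the tower rule conditioned on the iterate $\mathbf{x}_{t-1}$ (equivalently, on the $\sigma$-algebra generated by all the randomness up to and including step $t-1$). The first step is to recall that, by definition of the stochastic gradient, $\mathbb{E}[g(\mathbf{x}_{t-1})\mid \mathbf{x}_{t-1}] = \nabla f(\mathbf{x}_{t-1})$; hence the contribution of the first two terms, $\eta_c g(\mathbf{x}_{t-1}) - \eta_c \nabla f(\mathbf{x}_{t-1})$, vanishes in conditional expectation.

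The second step handles the multiplicative term $u_t\, g(\mathbf{x}_{t-1})$. Here I would use two facts. First, the scheduler noise $u_t$ is sampled independently of the mini-batch randomness that produces $g(\mathbf{x}_{t-1})$ and of the past iterates, so the conditional expectation factorizes: $\mathbb{E}[u_t\, g(\mathbf{x}_{t-1}) \mid \mathbf{x}_{t-1}] = \mathbb{E}[u_t]\,\mathbb{E}[g(\mathbf{x}_{t-1})\mid \mathbf{x}_{t-1}]$. Second, since $u_t \sim \mathcal{U}[L_{min}-\eta_c, L_{max}-\eta_c]$ and we have set $\eta_c = \tfrac{L_{min}+L_{max}}{2}$, this interval is symmetric about the origin, so $\mathbb{E}[u_t] = 0$. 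Consequently the third term also vanishes in conditional expectation, and we conclude $\mathbb{E}[\mathbf{w}_{t-1}\mid \mathbf{x}_{t-1}] = \mathbf{0}$. Taking total expectation via the tower property then yields $\mathbb{E}[\mathbf{w}_{t-1}] = \mathbf{0}$ for every $t\geq 1$.

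There is no substantial obstacle here — the lemma is essentially a bookkeeping computation — but the one point that must be stated with care is the independence structure: the learning-rate randomness $\{u_t\}$ is drawn independently of the stochastic-gradient oracle (this is exactly how the update \eqref{update} is defined), and it is this independence that licenses the factorization $\mathbb{E}[u_t\, g(\mathbf{x}_{t-1})\mid\mathbf{x}_{t-1}] = \mathbb{E}[u_t]\,\mathbb{E}[g(\mathbf{x}_{t-1})\mid\mathbf{x}_{t-1}]$. Making the choice $\eta_c = \tfrac{L_{min}+L_{max}}{2}$ explicit in the proof is what forces $\mathbb{E}[u_t]=0$ and hence the zero-mean conclusion; this is also the property that will later let $\mathbf{w}_t$ play the role of a martingale-difference noise in the convergence analysis.
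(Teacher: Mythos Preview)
Your proposal is correct and follows essentially the same approach as the paper: both arguments split $\mathbf{w}_{t-1}$ into the stochastic-gradient error term and the $u_t g(\mathbf{x}_{t-1})$ term, then use $\mathbb{E}[g(\mathbf{x}_{t-1})]=\nabla f(\mathbf{x}_{t-1})$ together with $\mathbb{E}[u_t]=0$ (from $\eta_c=\tfrac{L_{min}+L_{max}}{2}$) to conclude. Your version is slightly more explicit about the conditioning and independence structure via the tower property, which the paper leaves implicit, but the substance is identical.
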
 
\begin{proof}
    \begin{equation}
    \begin{aligned}
        \mathbb{E}[\mathbf{w}_{t-1}] &= \mathbb{E}\left[\eta_c g(\mathbf{x}_{t-1}) - \eta_c\nabla f(\mathbf{x}_{t-1})\right] + \mathbb{E}\left[u_t g(\mathbf{x}_{t-1})\right] \\
        &= \mathbf{0} \qquad \forall t \ge 1.
    \end{aligned}
    \end{equation}
    This follows as $\mathbb{E}[u_{t}] = \frac{L_{\min} + L_{\max} - 2\eta_c}{2} = 0$ and $\mathbb{E}\left[g(\mathbf{x}_{t-1})\right] = \nabla f(\mathbf{x}_{t-1})$. \qedhere
\end{proof}
For a function satisfying the Assumptions \hyperlink{hyper:a1}{\textbf{A1}}-\hyperlink{hyper:a6}{\textbf{A6}}, there are three possibilities for the iterate $\textbf{x}_t$ with respect to the function's gradient and Hessian, namely, \hypertarget{hyper:b1}{\textbf{B1:}} Gradient is large; \hypertarget{hyper:b2}{\textbf{B2:}} Gradient is small and iterate is around a saddle point; \hypertarget{hyper:b3}{\textbf{B3:}} Gradient is small and iterate is around a $\nu$-SOSP.

We now present three theorems corresponding to each of these cases. Our first result pertains to the case \hyperlink{hyper:b1}{\textbf{B1}} where the gradient of the iterate is large. 
\begin{theorem} \label{th:1}
Under the assumptions \hyperlink{hyper:a1}{\textbf{A1}} and \hyperlink{hyper:a3}{\textbf{A3}} with $L_{max} < \frac{1}{\beta}$, for any point $\mathbf{x}_t$ with $\norm{\nabla f(\mathbf{x}_t)}\geq \sqrt{3\eta_c\beta\sigma^2}$ where $\sqrt{3\eta_c\beta\sigma^2}<\epsilon$, after one iteration, we have
\[
\mathbb{E}[f(\mathbf{x}_{t+1})] - f(\mathbf{x}_t) \leq - \tilde{\Omega}(L_{max}^2).
\]
\end{theorem}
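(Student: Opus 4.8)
The plan is to use the standard descent lemma for $\beta$-smooth functions applied to the SGD-PLRS update \eqref{update}, then take expectations and exploit the zero-mean property of $\mathbf{w}_t$ (Lemma \ref{lemma:zeromean}) together with the bound on the stochastic-gradient noise from \hyperlink{hyper:a3}{\textbf{A3}}. First I would write $\mathbf{x}_{t+1} = \mathbf{x}_t - \eta_c \nabla f(\mathbf{x}_t) - \mathbf{w}_t$ and apply $\beta$-smoothness to get
\[
f(\mathbf{x}_{t+1}) \le f(\mathbf{x}_t) + \langle \nabla f(\mathbf{x}_t), \mathbf{x}_{t+1}-\mathbf{x}_t\rangle + \frac{\beta}{2}\norm{\mathbf{x}_{t+1}-\mathbf{x}_t}^2.
\]
Substituting the update, the inner-product term yields $-\eta_c\norm{\nabla f(\mathbf{x}_t)}^2 - \langle\nabla f(\mathbf{x}_t),\mathbf{w}_t\rangle$, and expanding the squared-norm term gives $\eta_c^2\norm{\nabla f(\mathbf{x}_t)}^2 + 2\eta_c\langle\nabla f(\mathbf{x}_t),\mathbf{w}_t\rangle + \norm{\mathbf{w}_t}^2$, all scaled appropriately.

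Next I would take the conditional expectation given $\mathbf{x}_t$. By Lemma \ref{lemma:zeromean}, $\mathbb{E}[\mathbf{w}_t\mid\mathbf{x}_t]=\mathbf{0}$, so the cross terms $\langle\nabla f(\mathbf{x}_t),\mathbf{w}_t\rangle$ vanish in expectation. This leaves
\[
\mathbb{E}[f(\mathbf{x}_{t+1})\mid\mathbf{x}_t] \le f(\mathbf{x}_t) - \eta_c\Big(1 - \frac{\beta\eta_c}{2}\Big)\norm{\nabla f(\mathbf{x}_t)}^2 + \frac{\beta}{2}\mathbb{E}[\norm{\mathbf{w}_t}^2\mid\mathbf{x}_t].
\]
The remaining task is to bound $\mathbb{E}[\norm{\mathbf{w}_t}^2]$. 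Expanding $\mathbf{w}_t = \eta_c(g(\mathbf{x}_t)-\nabla f(\mathbf{x}_t)) + u_{t+1}g(\mathbf{x}_t)$ and using $\norm{g(\mathbf{x}_t)-\nabla f(\mathbf{x}_t)}\le Q$ with $\mathbb{E}[Q^2]\le\sigma^2$, plus $\norm{g(\mathbf{x}_t)}^2 \le 2\norm{\nabla f(\mathbf{x}_t)}^2 + 2Q^2$ and the variance of the uniform $u_{t+1}$ (which is $O((L_{max}-L_{min})^2)$, hence $\tilde O(L_{max}^2)$), I would obtain a bound of the form $\mathbb{E}[\norm{\mathbf{w}_t}^2\mid\mathbf{x}_t] \le c_1\sigma^2 + c_2 (L_{max}-L_{min})^2\norm{\nabla f(\mathbf{x}_t)}^2$ for explicit constants depending on $\eta_c,\beta$.

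Finally I would collect terms: the $\norm{\nabla f(\mathbf{x}_t)}^2$ coefficient becomes $-\eta_c + O(\eta_c^2\beta) + O(\beta(L_{max}-L_{min})^2)$, which the condition $L_{max}<1/\beta$ (and $\eta_c \le L_{max}$) keeps strictly negative and of order $\eta_c \sim \Theta(L_{max})$; using $\norm{\nabla f(\mathbf{x}_t)}^2 \ge 3\eta_c\beta\sigma^2$ then dominates the residual $\frac{\beta}{2}c_1\sigma^2$ term, so that the net decrease is $-\tilde\Omega(\eta_c \cdot \eta_c\beta\sigma^2) + \ldots$; after absorbing all $\beta,\sigma$ factors into the $\tilde\Omega$ notation this reads $-\tilde\Omega(L_{max}^2)$, since $\eta_c = (L_{min}+L_{max})/2 = \Theta(L_{max})$. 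The main obstacle I anticipate is the bookkeeping in controlling $\mathbb{E}[\norm{\mathbf{w}_t}^2]$: because the noise is \emph{multiplicative}, the term $u_{t+1}g(\mathbf{x}_t)$ reintroduces a $\norm{\nabla f(\mathbf{x}_t)}^2$ contribution, and one must verify that its coefficient $O(\beta(L_{max}-L_{min})^2)$ is genuinely dominated by the $\eta_c$ from the descent term — this is exactly where the hypothesis $L_{max}<1/\beta$ is used, and care is needed to ensure the constants line up so the gradient coefficient stays negative rather than merely small.
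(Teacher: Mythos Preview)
Your proposal is correct and follows essentially the same route as the paper's proof: descent lemma from $\beta$-smoothness, vanishing of cross terms via Lemma~\ref{lemma:zeromean}, explicit expansion of $\mathbb{E}[\norm{\mathbf{w}_t}^2]$, and then the hypothesis $\norm{\nabla f(\mathbf{x}_t)}^2\ge 3\eta_c\beta\sigma^2$ combined with $L_{max}<1/\beta$ to make the net coefficient negative of order $\eta_c^2$. The only cosmetic difference is that the paper bounds $\mathbb{E}[\norm{g(\mathbf{x}_t)}^2]\le\sigma^2+\norm{\nabla f(\mathbf{x}_t)}^2$ directly (via $\mathbb{E}[g^T\nabla f]=\norm{\nabla f}^2$) rather than your looser $\norm{g}^2\le 2\norm{\nabla f}^2+2Q^2$, which lets the constants in the final line close exactly at $-\beta\eta_c^2\sigma^2/3$; your version would still yield $-\tilde\Omega(L_{max}^2)$ after absorbing constants.
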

This theorem suggests that, for any iterate $\mathbf{x}_t$ for which the gradient is large, the expected functional value of the subsequent iterate $f(\mathbf{x}_{t+1})$ decreases, and the corresponding decrease $\mathbb{E}[f(\mathbf{x}_{t+1})] - f(\mathbf{x}_t)$ is in the order of $\tilde{\Omega}(L_{max}^2)$. The formal proof for this theorem can be found in Appendix \ref{ap:th1proof}.

The next theorem corresponds to the case \hyperlink{hyper:b2}{\textbf{B2}} where the gradient is small and the Hessian is negative.
\begin{theorem} \label{th:2}
Consider $f$ satisfying Assumptions \hyperlink{hyper:a1}{\textbf{A1}} - \hyperlink{hyper:a5}{\textbf{A5}}. Let $\{\textbf{x}_t\}$ be the SGD iterates of the function $f$ using PLRS. Let $\norm{\nabla f(\textbf{x}_0)}\leq \sqrt{3\eta_c\beta\sigma^2}<\epsilon$ and $ \lambda_{min}(\textbf{H}(\textbf{x}_0)) \leq  -\gamma$ where $\epsilon,\gamma>0$. Then, there exists a $T=\tilde{O}\left(L_{max}^{-1/4}\right)$ such that with probability at least $1-\tilde{O}\left(L_{max}^{7/2}\right)$, 
\[\mathbb{E}[f(\textbf{x}_T) - f(\textbf{x}_0)] \leq - \tilde{\Omega}\left(L_{max}^{3/4}\right).\]
\end{theorem}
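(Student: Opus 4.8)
The plan is to adapt the classical "escape from saddle point" argument (as in Ge et al. and Jin et al.) to our multiplicative-noise setting. Since the iterate $\mathbf{x}_0$ has small gradient and the Hessian has a sufficiently negative eigenvalue $\lambda_{\min}(\mathbf{H}(\mathbf{x}_0)) \leq -\gamma$, the strategy is to show that the stochastic dynamics, driven by the noise $\mathbf{w}_t$ which now scales with the gradient, amplify displacement along the most-negative-curvature direction fast enough to produce a net decrease in function value of order $\tilde\Omega(L_{\max}^{3/4})$ within $T = \tilde O(L_{\max}^{-1/4})$ steps. First I would set up a second-order Taylor expansion of $f$ about $\mathbf{x}_0$, writing $\tilde f(\mathbf{x}) = f(\mathbf{x}_0) + \langle \nabla f(\mathbf{x}_0), \mathbf{x}-\mathbf{x}_0\rangle + \tfrac12 (\mathbf{x}-\mathbf{x}_0)^\top \mathbf{H}(\mathbf{x}_0)(\mathbf{x}-\mathbf{x}_0)$, and control the deviation between the true trajectory and the quadratic-model trajectory using the $\rho$-Hessian-Lipschitz assumption (A2) — this is the standard "coupling" step, but here the error terms inherit the gradient-dependent noise scaling, so the bookkeeping is heavier.

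The key steps, in order: (i) Decompose the update \eqref{update} into its quadratic-model part plus a perturbation, and track the component $\mathbf{y}_t$ of $\mathbf{x}_t - \mathbf{x}_0$ along the eigenvector $\mathbf{e}_1$ corresponding to $\lambda_{\min}(\mathbf{H}(\mathbf{x}_0))$. In the quadratic model this component obeys an approximately linear recursion $y_{t+1} \approx (1 + \eta_c\gamma) y_t + (\text{noise projection})$, so it grows geometrically with rate $(1+\eta_c\gamma)$. (ii) Lower-bound the "energy" injected by the noise along $\mathbf{e}_1$: use Lemma \ref{lemma:zeromean} for zero mean, and compute the variance of $\mathbf{w}_t$ projected onto $\mathbf{e}_1$, which contains the $u_{t+1} g(\mathbf{x}_t)$ term whose variance is $\Theta((L_{\max}-L_{\min})^2)$ times $\|g(\mathbf{x}_t)\|^2$ plus the SGD-noise contribution $\eta_c^2 \tilde\sigma^2$ — this guarantees a strictly positive lower bound on the per-step variance as long as the iterate hasn't already escaped. (iii) Combine geometric amplification with the injected variance to show $\mathbb{E}[y_T^2]$ reaches a threshold of order, say, $\tilde\Omega(L_{\max}^{1/2})$ after $T = \tilde O(\tfrac{1}{\eta_c\gamma}\log(1/L_{\max})) = \tilde O(L_{\max}^{-1/4})$ steps (choosing $\eta_c, \gamma$ relations so that the log factor and the powers of $L_{\max}$ line up as claimed). (iv) Translate displacement into function decrease: once $\|\mathbf{x}_T - \mathbf{x}_0\|$ is large along a negative-curvature direction, the quadratic term $\tfrac12 \mathbf{y}_T^\top \mathbf{H} \mathbf{y}_T \leq -\tfrac{\gamma}{2}\|\mathbf{y}_T\|^2$ dominates, and the $\rho$-Hessian-Lipschitz error $O(\rho\|\mathbf{x}_T-\mathbf{x}_0\|^3)$ plus the accumulated noise terms are lower-order, yielding the claimed $\tilde\Omega(L_{\max}^{3/4})$ decrease. (v) Collect the failure probabilities — the event that the linear approximation breaks down prematurely, the event that the noise projection is atypically small (a small-ball / anti-concentration estimate for the uniform-times-gradient noise), and Markov/Chebyshev bounds on the martingale remainder — into the stated $\tilde O(L_{\max}^{7/2})$ bound via a union bound.

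The main obstacle I expect is step (ii) together with the anti-concentration part of step (v): in the additive-isotropic-noise setting one gets a clean, iterate-independent lower bound on the variance along $\mathbf{e}_1$, but here the noise magnitude is $\|g(\mathbf{x}_t)\|$-dependent, so if the stochastic gradient happens to be small the injected energy could momentarily vanish. I would handle this by a case split: either $\|g(\mathbf{x}_t)\|$ stays bounded below over the window (in which case the multiplicative term supplies the needed variance), or it drops, in which case $\|\nabla f(\mathbf{x}_t)\|$ is small and the SGD-noise floor $\eta_c^2\tilde\sigma^2$ from Assumption \hyperlink{hyper:a3}{\textbf{A3}} (via the Remark) takes over; either way a uniform positive lower bound on the per-step variance along $\mathbf{e}_1$ survives. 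A secondary difficulty is bookkeeping the cross-correlations between $u_{t+1}$ and $g(\mathbf{x}_t)$ across time steps when summing the accumulated noise, which requires being careful that the filtration is set up so that $u_{t+1}$ is independent of $\mathbf{x}_t$ and of the SGD noise at step $t$.
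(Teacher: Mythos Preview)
Your proposal is essentially the paper's approach: second-order Taylor model at $\mathbf{x}_0$, coupling of the true iterates to the quadratic-model iterates $\tilde{\mathbf{x}}_t$, and exploitation of the eigendecomposition of $\mathbf{H}(\mathbf{x}_0)$ so that the noise amplified along the negative-curvature direction produces the $\tilde\Omega(L_{\max}^{3/4})$ drop in $T=\tilde O(L_{\max}^{-1/4})$ steps. Two refinements relative to your plan: (a) rather than tracking the scalar $y_t$ along $\mathbf{e}_1$ and then separately arguing the other directions don't spoil the decrease, the paper computes the full quadratic-model drop $\mathbb{E}[\tilde\zeta]=\mathbb{E}\big[\nabla f(\mathbf{x}_0)^\top(\tilde{\mathbf{x}}_T-\mathbf{x}_0)+\tfrac12(\tilde{\mathbf{x}}_T-\mathbf{x}_0)^\top\mathbf{H}(\mathbf{x}_0)(\tilde{\mathbf{x}}_T-\mathbf{x}_0)\big]$ directly from the closed-form recursion, summing over all eigendirections at once and choosing $T$ so that the $-\gamma_o$ term dominates the $(d-1)$ positive-$\lambda_i$ terms; (b) your anti-concentration worry in step (v) is unnecessary --- the negative term comes from the unconditional second moment $\mathbb{E}[|\tilde{\mathbf{w}}_{\tau,i}|^2]$, and the only high-probability ingredient is an \emph{upper-tail} control of $\|\tilde{\mathbf{x}}_t-\mathbf{x}_0\|$ and $\|\mathbf{x}_t-\tilde{\mathbf{x}}_t\|$ (Hoeffding and an Azuma--Hoeffding supermartingale, respectively), which confines the cubic remainder and the coupling error to the good event $C_T$ with failure probability $\tilde O(L_{\max}^{7/2})$.
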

\noindent The formal proof of this theorem is provided in Appendix \ref{ap:th2proof}. The sketch of the proof is given below.\\
\begin{proofsketch}
    This theorem shows that the iterates obtained using PLRS escape from a saddle point $\textbf{x}_0$ (where the gradient is small, and the Hessian has atleast one negative eigenvalue), i.e, it shows the decrease in the expected value of the function $f$ after $T=\tilde{O}\left(L_{max}^{-1/4}\right)$ iterations. Note that for a $\rho-$Hessian smooth function,
\begin{equation}\label{sketch1}
    f(\textbf{x}_T)\leq f(\textbf{x}_0)+\nabla f(\textbf{x}_0)^T(\textbf{x}_T-\textbf{x}_0)+\frac{1}{2}(\textbf{x}_T-\textbf{x}_0)^T \textbf{H}(\textbf{x}_0)(\textbf{x}_T-\textbf{x}_0)+\frac{\rho}{6}\norm{\textbf{x}_T-\textbf{x}_0}^3.
\end{equation}
To evaluate $\mathbb{E}[f(\textbf{x}_T) - f(\textbf{x}_0)]$ from \eqref{sketch1}, we require an analytical expression for $\textbf{x}_T-\textbf{x}_0$, which is not tractable. Hence, we employ the second-order Taylor approximation of the function $f$, which we denote as $\tilde{f}$. We then apply SGD-PLRS on $\tilde{f}$ to obtain $\tilde{\textbf{x}}_T$. Following this, we write $\textbf{x}_T-\textbf{x}_0$ = $(\textbf{x}_T-\tilde{\mathbf{x}}_T) + (\tilde{\mathbf{x}}_T -\textbf{x}_0$) and derive expressions for upper bounds on $\tilde{\textbf{x}}_T-\textbf{x}_0$ and $\textbf{x}_T-\tilde{\textbf{x}}_T$ which hold with high probability in Lemmas \ref{lemma:xt_x0} and \ref{lemma:xt_tildext}, respectively  (given in Appendix \ref{ap:lemma1} and \ref{ap:lemma2}). 
We split the quadratic term in \eqref{sketch1} into two parts corresponding to $\tilde{\textbf{x}}_T-\textbf{x}_0$ and $\textbf{x}_T-\tilde{\textbf{x}}_T$. We further decompose the term, say $\mathcal{Y}=(\tilde{\textbf{x}}_T-\textbf{x}_0)^T \textbf{H}(\textbf{x}_0)(\tilde{\textbf{x}}_T-\textbf{x}_0)$ into its eigenvalue components along each dimension with corresponding eigenvalues $\lambda_1,\hdots,\lambda_d$ of $\textbf{H}(\textbf{x}_0)$. Our main result in this theorem proves that the term $\mathcal{Y}$ dominates over all the other terms of \eqref{sketch1}, and that it is bounded by a negative value, thereby, proving $\mathbb{E}[f(\textbf{x}_T)]\leq f(\textbf{x}_0)$. This main result uses a two-pronged proof. Firstly, we use our assumption that the initial iterate $\textbf{x}_0$ is at a saddle point and hence at least one of $\lambda_i,\quad 1\leq i\leq d$ is negative. We formally show that the eigenvector corresponding to this eigenvalue points to the direction of escape.
Secondly, we use the second order statistics of our noise, to show that the magnitude of $\mathcal{Y}$ is large enough to dominate over the other terms of \eqref{sketch1}. Note that our noise term involves the stochasticity in the gradient and the probabilistic learning rate.
Hence, we have shown that the negative eigenvalue of the Hessian at a saddle point and the unique characterization of the noise is sufficient to force a descent along the negative curvature safely out of the region of the saddle point within $T$ iterations. 
\end{proofsketch}
As each SGD-PLRS update is noisy, we need to ensure that once we escape a saddle point and move towards a local minimum (case \hyperlink{hyper:b3}{\textbf{B3}}), we do not overshoot the minimum but rather, stay in the $\delta-$neighborhood of an SOSP, with high probability.
We formalize this in Theorem \ref{th:3}.
\begin{theorem}\label{th:3}
    Consider $f$ satisfying the assumptions \hyperlink{hyper:a1}{\textbf{A1}}-\hyperlink{hyper:a6}{\textbf{A6}}. Let the initial iterate $\textbf{x}_0$ be $\delta$ close to a local minimum $\textbf{x}^{*}$ such that $\norm{\textbf{x}_0-\textbf{x}^{*}}\leq\tilde{O}(\sqrt{L_{max}})<\delta$. With probability at least $1-\xi$,  $\forall t\leq T$ where $T= \tilde{O}\left(\frac{1}{L_{max}^2}\log\frac{1}{\xi}\right)$,
    \begin{equation*}
        \norm{\textbf{x}_t-\textbf{x}^{*}}\leq \tilde{O}\left(\sqrt{L_{max}\log\frac{1}{L_{max}\xi}}\right)<\delta
    \end{equation*}
\end{theorem}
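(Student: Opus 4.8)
\textbf{Proof proposal for Theorem~\ref{th:3}.}
The plan is to show that, once the iterate enters the $\delta$-neighborhood of a local minimum $\mathbf{x}^{*}$ where $f$ is $\alpha$-strongly convex (Assumption~\hyperlink{hyper:a6}{\textbf{A6}}), the SGD-PLRS dynamics behave like a contraction with a bounded stochastic forcing term, so the iterate cannot escape the neighborhood within the stated horizon $T$. First I would write the update~\eqref{update} locally around $\mathbf{x}^{*}$. Using $\nabla f(\mathbf{x}^{*})=\mathbf{0}$ and the fundamental theorem of calculus, $\nabla f(\mathbf{x}_t) = \bar{\mathbf{H}}_t(\mathbf{x}_t-\mathbf{x}^{*})$ for an averaged Hessian $\bar{\mathbf{H}}_t$ along the segment, which satisfies $\alpha I \preceq \bar{\mathbf{H}}_t \preceq \beta I$ as long as the whole segment stays in the neighborhood. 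Subtracting $\mathbf{x}^{*}$ from both sides of~\eqref{update} gives $\mathbf{x}_{t+1}-\mathbf{x}^{*} = (I-\eta_c\bar{\mathbf{H}}_t)(\mathbf{x}_t-\mathbf{x}^{*}) - \mathbf{w}_t$, and since $\eta_c \le L_{max} < 1/\beta$ we get the deterministic contraction $\norm{I-\eta_c\bar{\mathbf{H}}_t}\le 1-\eta_c\alpha$.

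Next I would set $r_t \coloneq \norm{\mathbf{x}_t-\mathbf{x}^{*}}$ and build a supermartingale-type bound for $r_t^2$. Taking the squared norm and then conditional expectation (conditioning on $\mathbf{x}_t$), the cross term $-2\langle (I-\eta_c\bar{\mathbf{H}}_t)(\mathbf{x}_t-\mathbf{x}^{*}), \mathbf{w}_t\rangle$ has zero mean by Lemma~\ref{lemma:zeromean}, leaving $\mathbb{E}[r_{t+1}^2 \mid \mathbf{x}_t] \le (1-\eta_c\alpha)^2 r_t^2 + \mathbb{E}[\norm{\mathbf{w}_t}^2\mid \mathbf{x}_t]$. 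The noise second moment must be bounded: from~\eqref{noise_term}, $\norm{\mathbf{w}_t} \le \eta_c Q + \abs{u_{t+1}}\norm{g(\mathbf{x}_t)}$, and using $\norm{g(\mathbf{x}_t)}\le \norm{\nabla f(\mathbf{x}_t)} + Q \le \beta r_t + Q$ together with $\abs{u_{t+1}}\le \tfrac{L_{max}-L_{min}}{2}$ and $\mathbb{E}[Q^2]\le\sigma^2$, one obtains $\mathbb{E}[\norm{\mathbf{w}_t}^2 \mid \mathbf{x}_t] \le \tilde{O}(L_{max}^2)(1 + r_t^2)$. As long as $r_t$ stays $\tilde{O}(\sqrt{L_{max}})$, the $r_t^2$ contribution to the noise is a lower-order correction to the contraction, so effectively $\mathbb{E}[r_{t+1}^2\mid\mathbf{x}_t] \le (1-\eta_c\alpha + \tilde{O}(L_{max}^2)) r_t^2 + \tilde{O}(L_{max}^2)$, which has a fixed point at $r^2 = \tilde{O}(L_{max})$ (matching the claimed $\tilde{O}(\sqrt{L_{max}})$ radius).

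Then I would turn the one-step estimate into a uniform-over-$t\le T$ high-probability bound. The clean route is to define the stopping time $\tau = \min\{t : r_t > R\}$ with $R = \tilde{O}(\sqrt{L_{max}\log\frac{1}{L_{max}\xi}})$, run the above recursion on the stopped process $r_{t\wedge\tau}$ (where the Hessian bounds and hence the contraction are valid by construction), and show via a Freedman/Azuma-type concentration inequality for the martingale increments $\mathbf{w}_t$ — whose conditional variance is $\tilde{O}(L_{max}^2)$ and which are bounded by $\tilde{O}(L_{max})$ on the event $\{t<\tau\}$ — that the fluctuations of $r_t^2$ around its $\tilde{O}(L_{max})$ mean level stay below $R^2$ for all $t\le T = \tilde{O}(L_{max}^{-2}\log\frac1\xi)$ with probability at least $1-\xi$. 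The logarithmic inflation of the radius and the $L_{max}^{-2}$ horizon come precisely from balancing the per-step variance $\tilde{O}(L_{max}^2)$ against the contraction rate $\eta_c\alpha = \Theta(L_{max})$ over $T$ steps and applying a union bound over the $T$ time indices. On the event $\{\tau > T\}$ the bound $\norm{\mathbf{x}_t-\mathbf{x}^{*}}\le R < \delta$ holds for all $t\le T$, which is the claim.

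The main obstacle I anticipate is handling the $r_t$-dependence of the noise second moment rigorously while keeping the neighborhood invariant: the bound $\mathbb{E}[\norm{\mathbf{w}_t}^2\mid\mathbf{x}_t]\le\tilde{O}(L_{max}^2)(1+r_t^2)$ means the recursion is only a genuine contraction \emph{conditional on not having left the neighborhood already}, so the argument has to be set up with the stopping time $\tau$ from the start, and the concentration inequality must be applied to the stopped martingale with its bounded, small-variance increments. A secondary subtlety is that $\mathbf{w}_t$ is not symmetric or sub-Gaussian a priori — it is only bounded (via Assumption~\hyperlink{hyper:a3}{\textbf{A3}}) and has controlled second moment — so one should use a Bernstein/Freedman inequality (which needs only a variance bound and an almost-sure increment bound) rather than a Gaussian tail bound, and verify that the almost-sure bound $\norm{\mathbf{w}_t}\le\tilde{O}(L_{max})$ indeed holds on $\{t<\tau\}$ since there $\norm{g(\mathbf{x}_t)}\le\beta R + Q$ is controlled.
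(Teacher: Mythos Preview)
Your proposal is essentially the same argument the paper gives: derive a one-step recursion for $\norm{\mathbf{x}_t-\mathbf{x}^*}^2$ using local strong convexity and $\beta$-smoothness, control $\mathbb{E}[\norm{\mathbf{w}_t}^2]$ as $\tilde O(L_{max}^2)(1+r_t^2)$, restrict to the event that all previous iterates stayed in the neighborhood (your stopping time $\tau$ is exactly the paper's indicator $\mathbf{1}_{D_{t-1}}$ with $D_{t-1}=\{\forall\tau\le t-1,\ r_\tau\le R\}$), apply a martingale concentration inequality, and finish with a union bound over $t\le T$.

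The only real difference is the packaging of the concentration step. The paper does not invoke Freedman/Bernstein directly; instead it builds an explicit supermartingale
\[
J_t \;=\; \Bigl(1+\tfrac{5\alpha L_{max}}{3}\Bigr)^{-t}\Bigl(\norm{\mathbf{x}_t-\mathbf{x}^*}^2+\tfrac{L_{max}\sigma^2}{\alpha}\Bigr),
\]
shows $J_t\mathbf{1}_{D_{t-1}}$ is a supermartingale with bounded differences of order $\tilde O(L_{max}^{3/2})$, and applies Azuma--Hoeffding. Your Freedman route is arguably cleaner (it uses the variance bound you already computed rather than just the a.s.\ increment bound), but it lands in the same place with the same $\sqrt{L_{max}\log\frac{1}{L_{max}\xi}}$ radius and $L_{max}^{-2}$ horizon. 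Either implementation is fine; the ideas match.
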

 This theorem deals with the case that the initial iterate $\textbf{x}_0$ is $\delta$-close to a local minimum $\textbf{x}^{*}$ (case \hyperlink{hyper:b3}{\textbf{B3}}). We prove that the subsequent iterates are also in the same neighbourhood, i.e., $\delta$ close to the local minimum, with high probability. In other words, we prove that the sequence $\{\norm{\textbf{x}_t-\textbf{x}^{*}}\}$ is bounded by $\delta$ for $t\leq T$. 
 In the neighbourhood of the local minimum, gradients are small and subsequently, the change in iterates, $\textbf{x}_t-\textbf{x}_{t-1}$ are minute. Therefore, the iterates stay near the local minimum with high probability. It is worth noting that the nature of the noise, which is comprised of stochastic gradients (whose stochasticity is bounded by $Q$) multiplied with a bounded uniform random variable (owing to PLRS), aids in proving our result. We provide the formal proof in Appendix \ref{ap:th3_proof}.
 
 \section{Empirical evaluation}\label{sec:results}
We provide results on CIFAR-10, CIFAR-100 \citep{cifar} and Tiny ImageNet \citep{le2015tiny} and compare with: (i) cosine annealing with warm restarts \citep{sgdr}, (ii) one-cycle scheduler \citep{smith2019super}, (iii) knee scheduler \citep{knee}, (iv) constant learning rate and (v) multi-step decay scheduler. We run experiments for 500 epochs for the CIFAR datasets and for 100 epochs for the Tiny ImageNet dataset using the SGD optimizer for all schedulers
\footnote{We provide results without momentum to be consistent with our theoretical framework. When we used the SGD optimizer with momentum for PLRS, we obtain results better than those reported without momentum.}.
We also set all other regularization parameters, such as weight decay and dampening, to zero. We use a batch size of 64 for DenseNet-40-12, 50 for ResNet-50, and 128 for the others. We conduct all our experiments in a single NVIDIA GeForce RTX 2080 GPU card.\\
To determine the parameters $L_{min}$ and $L_{max}$ for PLRS, we perform a range test, where we observe the training loss for a range of learning rates as is done in state-of-the-art LR schedulers such as one-cycle \citep{smith2019super} and knee schedulers \citep{knee}. As the learning rate is gradually increased, we first observe a steady decrease in the training loss, then followed by a drastic increase. We note the learning rate at which there is an increase of training loss, say $\bar{L}$ and choose the maximum learning rate $L_{max}$ to be just below $\bar{L}$, where the loss is still decreasing.
We then tune $L_{min}$ such that $0<L_{min}<L_{max}$.
We choose the parameters for the baseline schedulers as suggested in the original papers (further details of parameters are provided in Appendix \ref{ap:parameters}). 

While there are ample works which prove the convergence of SGD with additive noise as in \citep{ge2015escaping}, they cannot be ported into practice for deep neural networks. They require smoothness constants \citep{zhang2017hitting,jin2021,arjevani2023lower} or functional bounds on the norms of the function derivatives \citep{yiming2025efficiently} to be computed for the additive noise injection, which can not be obtained for the loss functions of neural networks or can only be approximated locally \citep{latorre2020lipschitz}. Further, the empirical convergence properties of noisy SGD are not demonstrated through examples in the majority of these analytical works which makes it hard to compare their convergence with PLRS. However, we compare our proposed PLRS against the noisy SGD mechanism proposed by Ge et al. \citep{ge2015escaping} providing convergence results on the online tensor decomposition problem using the code provided by the authors in Appendix \ref{ap:tensor}. 
\subsection{Results for CIFAR-10}
\begin{table}
\begin{center}
\begin{tabular}{cccc}
\hline

\textbf{Architecture} & \textbf{Scheduler} & \textbf{Max acc.} & \textbf{Mean acc. (S.D)}\\ \hline

VGG-16 & Cosine & 96.87 & 96.09 (0.78) \\
VGG-16 & {Knee} & 96.87 & \textbf{96.35} (0.45)\\ 
VGG-16 & {One-cycle} & {90.62} & 89.06 (1.56)\\ 
VGG-16 & Constant & 96.09 & 96.06 (0.05) \\
VGG-16 & Multi-step & 92.97 & 92.45 (0.90) \\
VGG-16 & PLRS (ours) & \bf{97.66} & 96.09 (1.56)\\ \hline
WRN-28-10 & Cosine & 92.03 & 91.90 (0.13) \\
WRN-28-10 & Knee & \bf{92.04} & 91.64 (0.63) \\
WRN-28-10 & {One-cycle} & 87.76 & 87.37 (0.35) \\
WRN-28-10 & Constant & \bf{92.04} & \textbf{92.00} (0.08) \\
WRN-28-10 & Multi-step & 88.94 & 88.80 (0.21) \\
WRN-28-10 & PLRS (ours) & 92.02 & 91.43 (0.54)\\ \hline

\end{tabular}
\caption{ Maximum and mean (with standard deviation) test accuracies over 3 runs for CIFAR-10.}
\label{table_cifar10}
\end{center}
\end{table}

We consider VGG-16 \citep{vgg} and WRN-28-10 \citep{wrn} and use $L_{min}=0.07$ and $L_{max}=0.1$ for both the networks. We record the maximum and mean test accuracies across different LR schedulers in Table \ref{table_cifar10}. The highest accuracy across schedulers is recorded in bold. For the VGG-16 network, we rank the highest in terms of maximum test accuracy. 
    In terms of the mean test accuracy over $3$ runs, the knee scheduler outperforms the rest. Note that the second highest mean test accuracy is achieved by both PLRS and the cosine annealing schedulers. Unsurprisingly, the constant scheduler has the lowest standard deviation. In the WRN-28-10 network, note that the maximum test accuracies for the cosine, knee, constant and the PLRS schedulers are very similar (difference in the order of $10^{-2}$). Their similar performance is also reflected in the mean test accuracies although the constant learning rate edges out the other schedulers marginally. To study the convergence of the schedulers we also plot the training loss across epochs in Figure \ref{fig:cif10}. We observe that our proposed PLRS achieves one of the fastest rates of convergence in terms of the training loss compared across all the schedulers for both networks. Note that the cosine annealing scheduler records several spikes across the training. 
    \begin{figure}
    \centering
    \begin{subfigure}[b]{0.5\textwidth}
        \centering
        \includegraphics[width=1\textwidth]{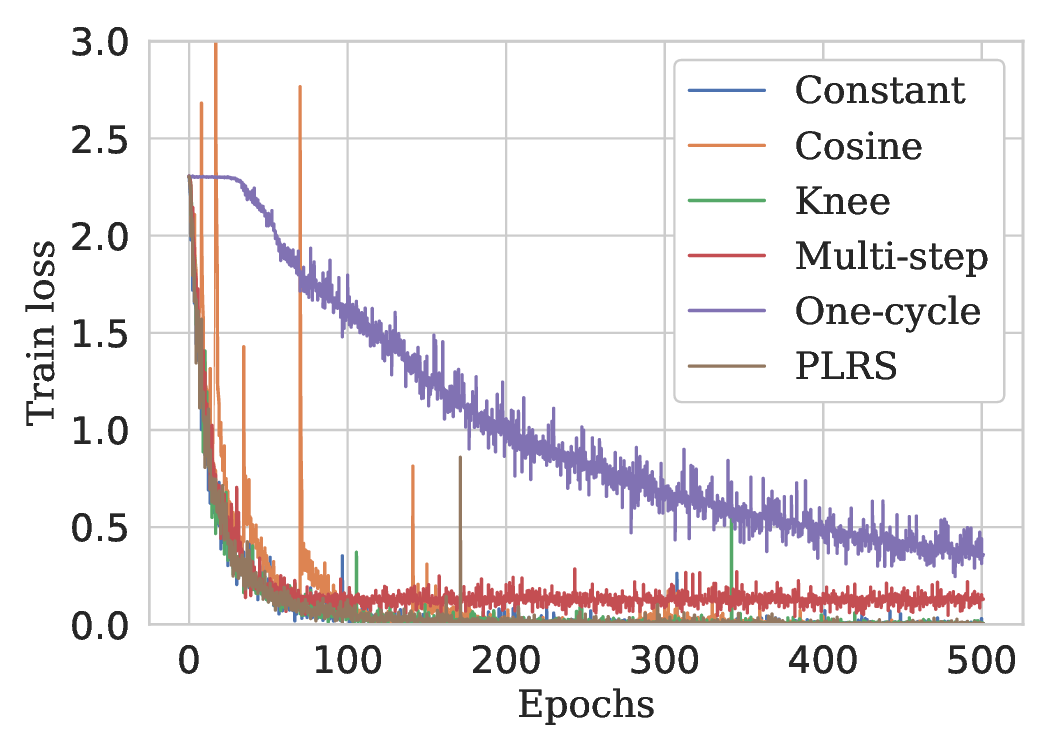}
        \caption{VGG-16}
    \end{subfigure}\hfill
    \begin{subfigure}[b]{0.5\textwidth}
        \centering
        \includegraphics[width=1\textwidth]{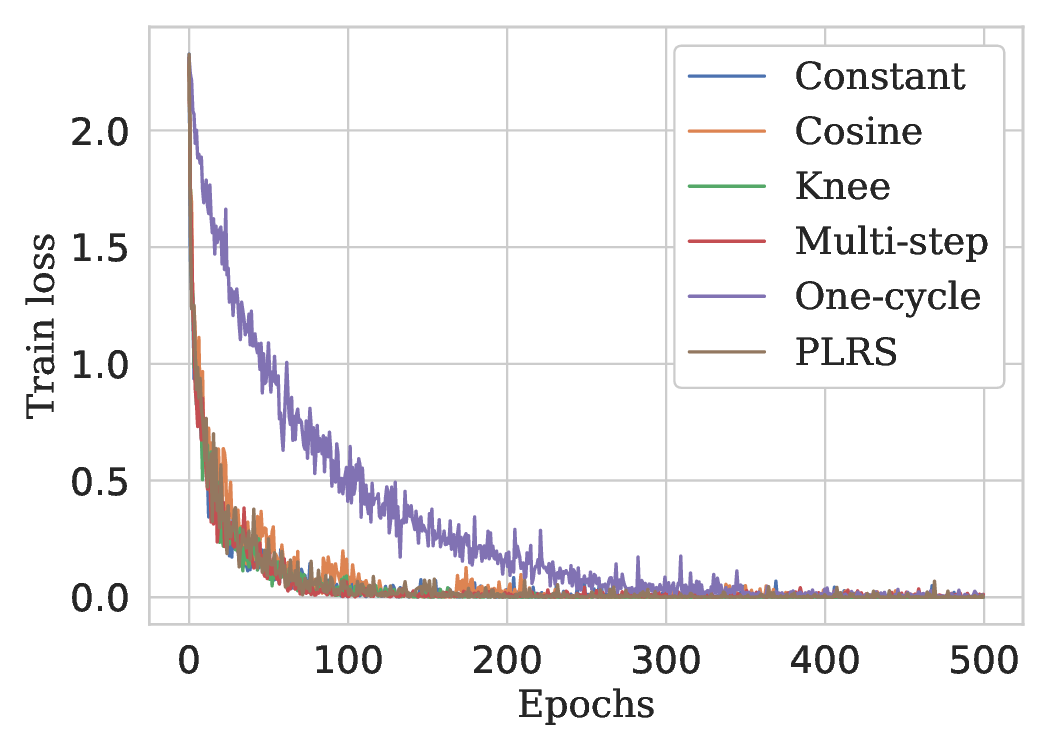}
        \caption{WRN-28-10}
    \end{subfigure}
    \caption{Training loss vs epochs for VGG-16 and WRN-28-10 for CIFAR-10.}
    \label{fig:cif10}
\end{figure}
\subsection{Results for CIFAR-100}
\begin{table}
\fontsize{9pt}{9pt}
\begin{center}
\begin{tabular}{cccc}
\hline

\textbf{Architecture} & \textbf{Scheduler} & \textbf{Max acc.} & \textbf{Mean acc.(S.D)}\\ \hline

ResNet-110 & Cosine & 74.22 & 72.66 (1.56) \\
ResNet-110 & {Knee} & 75.78 & 72.39 (2.96)\\ 
ResNet-110 & {One-cycle} & {71.09} & 70.05 (1.19)\\ 
ResNet-110 & Constant & 69.53 & 66.67 (2.51) \\
ResNet-110 & Multi-step & 63.28 & 61.20 (2.39) \\
ResNet-110 & PLRS (ours) & \bf{77.34} & \textbf{74.61} (2.95)\\ \hline
DenseNet-40-12 & Cosine   & {82.81} & 80.47 (2.07)\\
DenseNet-40-12 & {Knee}   & {82.81} & 80.73 (2.39)\\ 
DenseNet-40-12 & {One-cycle}  & 73.44 & 72.39 (0.90) \\ 
DenseNet-40-12 & Constant & 82.81 & 80.73 (2.39) \\
DenseNet-40-12 & Multi-step & \bf{87.50} & \textbf{84.89} (2.39) \\
DenseNet-40-12 & PLRS (ours) & 84.37 & 83.33 (0.90) \\ \hline

\end{tabular}
\caption{Maximum and mean (with standard deviation) test accuracies over 3 runs for CIFAR-100.}
\label{table:cifar100}
\end{center}
\end{table}
We consider networks ResNet-110 \citep{resnet} and DenseNet-40-12 \citep{densenet}, and use $L_{min}=0.07$ and $L_{max}=0.1$ for the former, and $L_{min}=0.1$ and $L_{max}=0.2$ for the latter. The maximum and the mean test accuracies (with standard deviation) across $3$ runs are provided in Table \ref{table:cifar100}. For ResNet-110, PLRS performs best in terms of the maximum and the mean test accuracies. 
This is closely followed by the other state-of-the-art LR schedulers such as knee and cosine schedulers. For the DenseNet-40-12 network, PLRS comes to a close second to the multi-step LR scheduler in terms of the maximum and mean test accuracies. However, it is important to note that the multi-step scheduler records the least test accuracy with the ResNet-110 network. Hence, its performance is not consistent across the networks, while PLRS is consistently one of the best performing schedulers.

We plot the training loss in Figure \ref{fig:cif100}. For ResNet-110, both PLRS and knee LR scheduler converge to a low training loss around $150$ epochs. While cosine annealing LR scheduler also seems to converge fast, it experiences sharp spikes along the curve during the restarts. For DenseNet-40-12, PLRS converges faster to a lower training loss compared to the other schedulers.
\begin{figure}
    \centering
    \begin{subfigure}[b]{0.5\textwidth}
        \centering
        \includegraphics[width=1\textwidth]{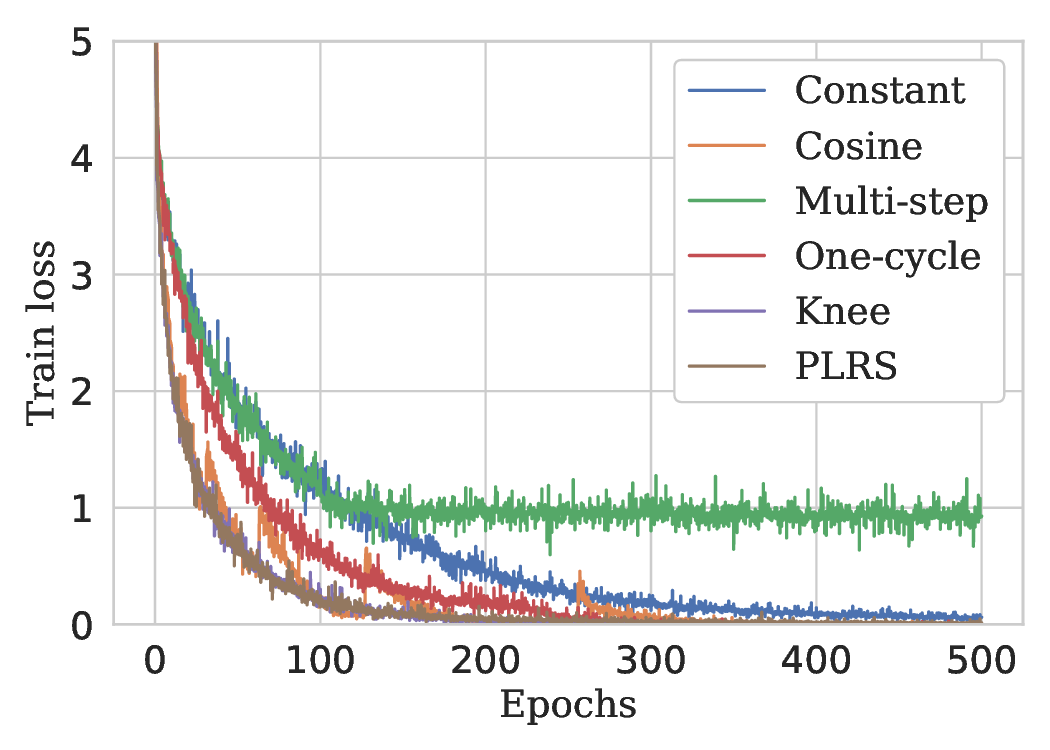}
        \caption{ResNet-110}
    \end{subfigure}%
    ~ 
    \begin{subfigure}[b]{0.5\textwidth}
        \centering
        \includegraphics[width=1\textwidth]{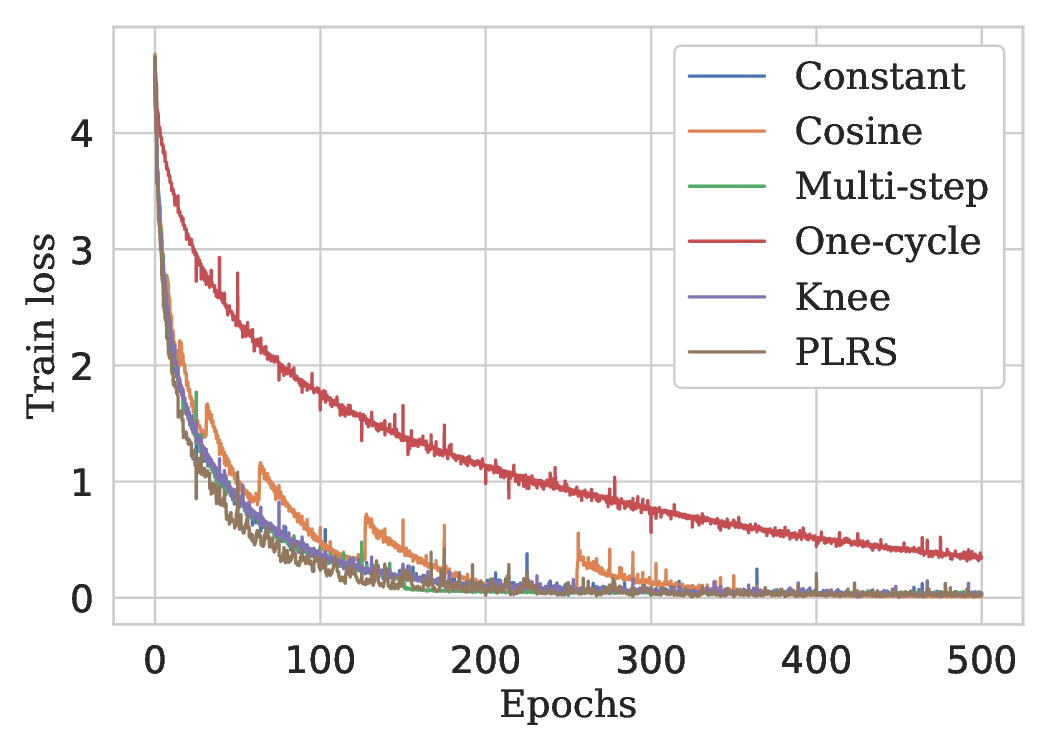}
        \caption{DenseNet-40-12}
    \end{subfigure}
    \caption{Training loss vs epochs for ResNet-110 and DenseNet-40-12 on CIFAR-100.}
    \label{fig:cif100}
\end{figure}
\subsection{Results for Tiny ImageNet}

We consider the Resnet-50 \citep{resnet} architecture and use $L_{min}=0.35$ and $L_{max}=0.4$. We present the maximum and mean test accuracies in Table \ref{table:tiny}. We provide the plot of training loss in Figure \ref{fig:tiny}. PLRS performs the best in terms of maximum test accuracy. In terms of mean test accuracy, it ranks second next to cosine annealing by a close margin. 
It can be observed that PLRS achieves the fastest convergence to the lowest training loss compared to others. Moreover, it exhibits stable convergence, especially when compared cosine annealing, which experiences multiple spikes due to warm restarts.

\subsection{Limitations and broader impact}\label{sec:limit}
In line with all other works which focus on convergence proofs, our work too applies only to a restricted class of functions that meet the assumptions in Section \ref{sec:proof}.
In contrast, our experiments are conducted on deep neural networks, which may not strictly satisfy these assumptions. While this is a limitation of our work, we note that many papers focused on theoretical convergence of SGD do not include empirical results, and many practice-oriented papers proposing new LR schedulers do not include convergence proofs. Another limitation is that our experiments are limited to benchmark image datasets, even though our proposed scheduler is general and can be applied to other domains.

Our work contributes to the relatively underexplored theoretical understanding of LR schedulers, an area where most prior research has focused on empirical or application-driven results. As discussed earlier, commonly used periodic schedulers, such as triangular or cosine annealing, can be viewed as special cases of our proposed PLRS. This generalization opens new avenues for theoretical investigation, including the analysis of convergence properties across a broader class of schedulers. In practice, PLRS demonstrates improved stability and enables faster convergence, reducing the number of training epochs required. This efficiency translates to lower GPU usage and energy consumption, supporting more sustainable and resource-conscious AI development.

\begin{figure}
\centering
\begin{minipage}[c]{0.48\textwidth}
\vspace{0pt}
\centering
  \begin{tabular}{cccc}
  \\
  \\
  \\
\hline

 \textbf{Scheduler} & \textbf{Max acc.} & \textbf{Mean acc. (S.D)}\\ \hline

 Cosine & 62.13 & \textbf{62.03} (0.15) \\
{Knee} & 61.93 & {61.50} (0.42)\\ 
 {One-cycle} & {52.24} & 51.99 (0.22)\\ 
 Constant & 61.59 & 61.11 (0.42) \\
 Multi-step & 61.28 & 61.20 (0.08) \\
 PLRS (ours) & \bf{62.34} & {61.90} (0.73)\\ \hline
\\
\\
\\
\end{tabular}

    \captionof{table}{Maximum and mean (with standard deviation) test accuracies over 3 runs for Tiny ImageNet.}
    \label{table:tiny}
\end{minipage}
\hspace{2mm}
 \begin{minipage}[c]{0.48\textwidth}
    \centering
  \includegraphics[width=1\textwidth]{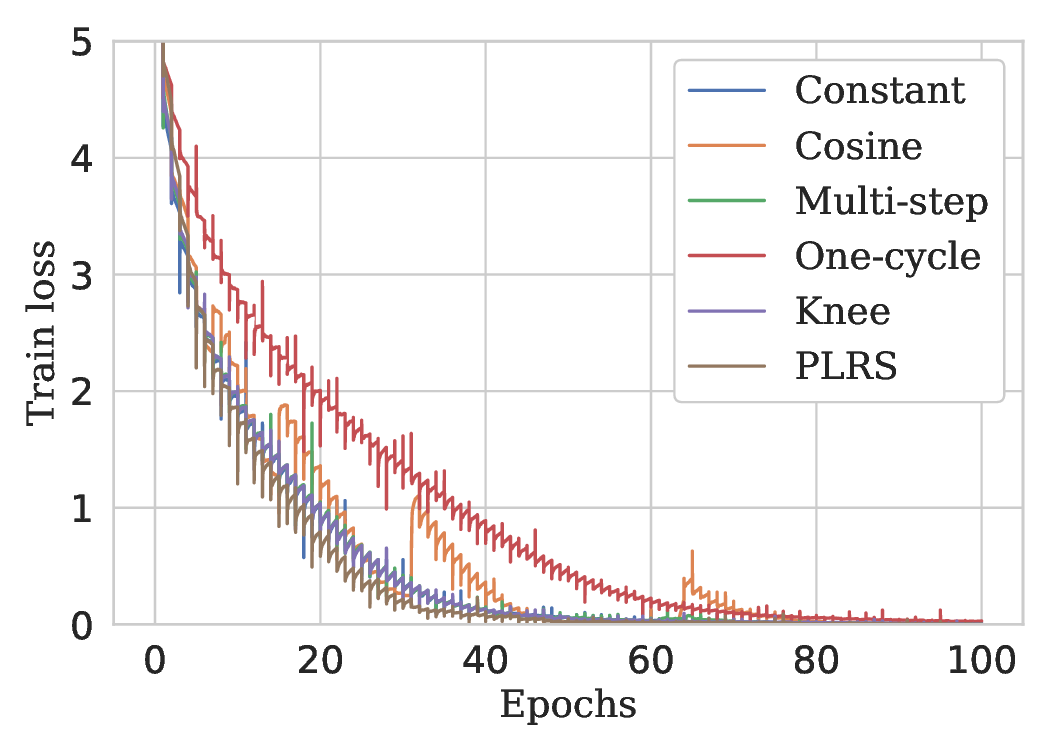}
  \captionof{figure}{Training loss vs epochs for ResNet-50 with Tiny ImageNet.}
  \label{fig:tiny}
  \end{minipage}

\end{figure}




\section{Concluding remarks}
We have proposed the novel idea of a probabilistic LR scheduler. The probabilistic nature of the scheduler helped us provide the first theoretical convergence proofs for SGD using LR schedulers.
In our opinion, this is a significant step in the right direction to bridge the gap between theory and practice in the LR scheduler domain. 
Our empirical results show that our proposed LR scheduler performs competitively with the state-of-the-art cyclic schedulers, if not better, on CIFAR-10, CIFAR-100, and Tiny ImageNet datasets for a wide variety of popular deep architectures. 
This leads us to hypothesize that the proposed probabilistic LR scheduler acts as a super-class of LR schedulers encompassing both probabilistic and deterministic schedulers. 
Future research directions include further exploration of this hypothesis.

\bibliographystyle{plain}
\bibliography{neurips_2025}

\newpage

\appendix
\noindent\rule{\textwidth}{1pt} 
\vspace{0.3cm}

\begin{center}
    {\LARGE \textbf{Appendix}}
\end{center}

\vspace{0.3cm}
\noindent\rule{\textwidth}{1pt} 
\section{Proof of Theorem \ref{th:1}} \label{ap:th1proof}
\begin{theorem} [Theorem \ref{th:1} restated]\label{th:1restated}
Under the assumptions \hyperlink{hyper:a1}{\textbf{A1}} and \hyperlink{hyper:a3}{\textbf{A3}} with $L_{max} < \frac{1}{\beta}$, for any point $\mathbf{x}_t$ with $\norm{\nabla f(\mathbf{x}_t)}\geq \sqrt{3\eta_c\beta\sigma^2}$ where $\sqrt{3\eta_c\beta\sigma^2}<\epsilon$ (satisfying \hyperlink{hyper:b1}{\textbf{B1}}), after one iteration we have,
\[
\mathbb{E}[f(\mathbf{x}_{t+1})] - f(\mathbf{x}_t) \leq - \tilde{\Omega}(L_{max}^2).
\]
\end{theorem}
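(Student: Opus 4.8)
The plan is to carry out a standard ``descent lemma'' argument adapted to our multiplicative-noise update, taking expectations and using the lower bound on the gradient norm. First I would start from the $\beta$-smoothness inequality (Assumption \hyperlink{hyper:a1}{\textbf{A1}}), which gives
\[
f(\mathbf{x}_{t+1}) \leq f(\mathbf{x}_t) + \nabla f(\mathbf{x}_t)^T(\mathbf{x}_{t+1}-\mathbf{x}_t) + \frac{\beta}{2}\norm{\mathbf{x}_{t+1}-\mathbf{x}_t}^2.
\]
Then I would substitute the SGD-PLRS update $\mathbf{x}_{t+1}-\mathbf{x}_t = -(\eta_c + u_{t+1})g(\mathbf{x}_t)$ and take the conditional expectation over both the stochastic gradient noise and the uniform random variable $u_{t+1}$, conditioned on $\mathbf{x}_t$. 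For the linear term, since $u_{t+1}$ is zero mean and independent of $g(\mathbf{x}_t)$, and $\mathbb{E}[g(\mathbf{x}_t)] = \nabla f(\mathbf{x}_t)$, the expectation collapses to $-\eta_c\norm{\nabla f(\mathbf{x}_t)}^2$. For the quadratic term, I would expand $\mathbb{E}[(\eta_c+u_{t+1})^2\norm{g(\mathbf{x}_t)}^2]$; using independence this factors as $\mathbb{E}[(\eta_c+u_{t+1})^2]\cdot\mathbb{E}[\norm{g(\mathbf{x}_t)}^2]$, and then $\mathbb{E}[\norm{g(\mathbf{x}_t)}^2] = \norm{\nabla f(\mathbf{x}_t)}^2 + \mathbb{E}[\norm{g(\mathbf{x}_t)-\nabla f(\mathbf{x}_t)}^2] \leq \norm{\nabla f(\mathbf{x}_t)}^2 + \sigma^2$ by Assumption \hyperlink{hyper:a3}{\textbf{A3}}.

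Collecting terms, I would obtain a bound of the form
\[
\mathbb{E}[f(\mathbf{x}_{t+1})] - f(\mathbf{x}_t) \leq -\left(\eta_c - \frac{\beta}{2}\mathbb{E}[(\eta_c+u_{t+1})^2]\right)\norm{\nabla f(\mathbf{x}_t)}^2 + \frac{\beta}{2}\mathbb{E}[(\eta_c+u_{t+1})^2]\sigma^2.
\]
Since $\eta_c + u_{t+1} = \eta_{t+1} \in [L_{min},L_{max}]$, we have $\mathbb{E}[(\eta_c+u_{t+1})^2] \leq L_{max}^2$, and using $L_{max} < 1/\beta$ together with $\eta_c = (L_{min}+L_{max})/2$ one checks the coefficient of $\norm{\nabla f(\mathbf{x}_t)}^2$ is bounded below by a positive multiple of $\eta_c$ (roughly $\eta_c/2$). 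Now I invoke the case hypothesis $\norm{\nabla f(\mathbf{x}_t)}^2 \geq 3\eta_c\beta\sigma^2$: the negative term dominates the $\sigma^2$ term, specifically $\frac{\beta}{2}L_{max}^2\sigma^2 \leq \frac{1}{6}\eta_c\norm{\nabla f(\mathbf{x}_t)}^2$ (using $L_{max}^2\beta \leq L_{max} < \ldots$ — the constant $3$ in the threshold is tuned precisely so this absorption works), leaving a net decrease of order $\eta_c \norm{\nabla f(\mathbf{x}_t)}^2 \geq \eta_c \cdot 3\eta_c\beta\sigma^2 = \tilde{\Omega}(\eta_c^2)$. Since $\eta_c \geq L_{max}/2$ (as $L_{min} > 0$), this is $\tilde{\Omega}(L_{max}^2)$, hiding $\beta$, $\sigma$ as the $\tilde{\Omega}$ notation permits.

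The main obstacle is bookkeeping the constants carefully so that the threshold $\sqrt{3\eta_c\beta\sigma^2}$ is exactly what makes the $\sigma^2$ error term absorbable into a fixed fraction of the gradient-norm term, while simultaneously keeping the coefficient $\eta_c - \frac{\beta}{2}\mathbb{E}[(\eta_c+u_{t+1})^2]$ positive and $\Omega(\eta_c)$ — this is where the hypothesis $L_{max} < 1/\beta$ enters. There is no deep difficulty here; the argument is essentially the classical one-step decrease lemma, and the only genuinely new element is verifying that replacing a deterministic step size by the random $\eta_{t+1}\in[L_{min},L_{max}]$ changes nothing beyond replacing $\eta_c^2$ by $\mathbb{E}[\eta_{t+1}^2] \leq L_{max}^2$ in the second-order term, which still sits in the same $\tilde{\Omega}$/$\tilde{O}$ class.
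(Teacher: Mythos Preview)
Your proposal is correct and follows essentially the same route as the paper: apply the $\beta$-smoothness descent inequality, take expectation, use the unbiasedness and bounded-variance of $g(\mathbf{x}_t)$ together with the independence and zero mean of $u_{t+1}$, and then invoke the gradient lower bound $\norm{\nabla f(\mathbf{x}_t)}^2 \ge 3\eta_c\beta\sigma^2$ to absorb the variance term. The only cosmetic difference is that the paper works with the decomposition $\mathbf{x}_{t+1}-\mathbf{x}_t = -\eta_c\nabla f(\mathbf{x}_t) - \mathbf{w}_t$ and expands $\mathbb{E}\norm{\mathbf{w}_t}^2$ explicitly, using the exact value $\mathbb{E}[u_{t+1}^2] = (L_{max}-L_{min})^2/12$ and the bound $(L_{max}-L_{min})^2 \le 4\eta_c^2$ to get $\mathbb{E}[\eta_{t+1}^2] \le \tfrac{4}{3}\eta_c^2$; this sharper constant is what makes the factor $3$ in the threshold exactly sufficient, whereas your cruder bound $\mathbb{E}[\eta_{t+1}^2]\le L_{max}^2$ would require a minor tweak of the numerical constants (your inequality $\tfrac{\beta}{2}L_{max}^2\sigma^2 \le \tfrac{1}{6}\eta_c\norm{\nabla f}^2$ does not quite follow when $L_{min}\ll L_{max}$).
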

\begin{proof}
Using the second order Taylor series approximation for $f(\textbf{x}_{t+1})$ around $\textbf{x}_{t}$, where $\mathbf{x}_{t+1}=\mathbf{x}_{t}-\eta_c\nabla f(\mathbf{x}_t) - \mathbf{w}_t$, we have 
\begin{equation*}
    \begin{aligned}
        f(\mathbf{x}_{t+1}) - f(\mathbf{x}_t) &\le \nabla f(\mathbf{x}_t)^{T} \left( \mathbf{x}_{t+1} - \mathbf{x}_t \right) + \frac{\beta}{2} \norm{\mathbf{x}_{t+1} - \mathbf{x}_t }^2,
    \end{aligned}
\end{equation*}
following the result from \citep[Lemma 1.2.3]{nesterov}. Taking expectation w.r.t. $\mathbf{w}_{t}$, 
\allowdisplaybreaks
\begin{equation}
    \begin{aligned}\label{afterp1}
        \mathbb{E}[f(\mathbf{x}_{t+1})] - f(\mathbf{x}_t) &\leq \nabla f(\mathbf{x}_t)^T \mathbb{E}[\mathbf{x}_{t+1} - \mathbf{x}_t] +\frac{\beta}{2} \mathbb{E}[\norm{\mathbf{x}_{t+1} - \mathbf{x}_t}^2] 
        \\&=\nabla f(\mathbf{x}_t)^T \mathbb{E}[-\eta_c \nabla f(\mathbf{x}_t) - \mathbf{w}_t ]  + \frac{\beta}{2} \mathbb{E}[\norm{-\eta_c \nabla f(\mathbf{x}_t) - \mathbf{w}_t }^2]  \\ 
        &=  - \eta_c \norm{\nabla f(\mathbf{x}_t)}^2 +\frac{\beta}{2} \mathbb{E}[\eta_c^2 \norm{\nabla f(\mathbf{x}_t)}^2  + \norm{\mathbf{w}_t}^2] ,
        \end{aligned}
\end{equation}
since $\mathbb{E}[\mathbf{w}_t]=0$ due to the zero mean property in Lemma \ref{lemma:zeromean}. We focus on the last term in the next steps. Expanding $\norm{\mathbf{w}_t}^2$,
\begin{equation*}
    \begin{aligned}
        \norm{\mathbf{w}_t}^2 &=(\eta_c g(\mathbf{x}_t) - \eta_c\nabla f(\mathbf{x}_t) +  u_{t+1}g(\mathbf{x}_t))^T (\eta_c g(\mathbf{x}_t) - \eta_c\nabla f(\mathbf{x}_t) +  u_{t+1}g(\mathbf{x}_t))\\
        &= \eta_c^2 \norm{g(\mathbf{x}_t)}^2-\eta_c^2 g(\mathbf{x}_t)^T \nabla f(\mathbf{x}_t) + \eta_c u_{t+1} \norm{g(\mathbf{x}_t)}^2 -\eta_c^2 \nabla f(\mathbf{x}_t)^Tg(\mathbf{x}_t)+\eta_c^2 \norm{\nabla f(\mathbf{x}_t}^2\nonumber\\
        &-\eta_c u_{t+1} \nabla f(\mathbf{x}_t)^Tg(\mathbf{x}_t) + \eta_c u_{t+1} \norm{g(\mathbf{x}_t)}^2 -\eta_c u_{t+1} g(\mathbf{x}_t)^T \nabla f(\mathbf{x}_t) + u_{t+1}^2 \norm{g(\mathbf{x}_t)}^2.
    \end{aligned}
\end{equation*}
Taking expectation with respect to $\mathbf{x}_t$ and noting that $\mathbb{E}[u_{t+1}]=0$ and $\mathbb{E}[g(\mathbf{x}_t)]=\nabla f(\mathbf{x}_t)$,\footnote{Note that there are two random variables in $\mathbf{w}_t$ which are the stochastic gradient $g(\mathbf{x}_t)$ and the uniformly distributed LR $u_{t+1}$ due to our proposed LR scheduler. Hence, the expectation is with respect to both these variables. Also note that $u_{t+1}$ and $g(\mathbf{x}_t)$ are independent of each other.}
\begin{equation}
         \mathbb{E}[\norm{\mathbf{w}_t}^2] =\eta_c^2\mathbb{E}[ \norm{g(\mathbf{x}_t)}^2]-\eta_c^2\norm{\nabla f(\mathbf{x}_t)}^2+\mathbb{E}[ u_{t+1}^2]\mathbb{E}[ \norm{g(\mathbf{x}_t)}^2 ].\label{app3_1}
\end{equation}
Now, as per assumption \hyperlink{hyper:a3}{\textbf{A3}},
\begin{gather}
        \norm{g(\mathbf{x}_t)-\nabla f(\mathbf{x}_t)}^2 \leq Q^2\nonumber\\
      \norm{g(\mathbf{x}_t)}^2+\norm{\nabla f(\mathbf{x}_t)}^2-2g(\mathbf{x}_t)^T\nabla f(\mathbf{x}_t)\leq Q^2\nonumber\\
        \norm{g(\mathbf{x}_t)}^2\leq Q^2-\norm{\nabla f(\mathbf{x}_t)}^2+2g(\mathbf{x}_t)^T\nabla f(\mathbf{x}_t)\nonumber\\
        \mathbb{E}[\norm{g(\mathbf{x}_t)}^2]\leq \mathbb{E}[Q^2]-\norm{\nabla f(\mathbf{x}_t)}^2+2\norm{\nabla f(\mathbf{x}_t)}^2\leq\sigma^2+\norm{\nabla f(\mathbf{x}_t)}^2\label{app3_2},
\end{gather}
as $\mathbb{E}[Q^2]\leq\sigma^2$.
Applying \eqref{app3_2} to \eqref{app3_1},
\begin{equation}
    \begin{aligned}
         \mathbb{E}[\norm{\mathbf{w}_t}^2] &\leq \eta_c^2 \sigma^2+\eta_c^2\norm{\nabla f(\mathbf{x}_t)}^2-\eta_c^2\norm{\nabla f(\mathbf{x}_t)}^2 +\mathbb{E}[ u_{t+1}^2]\sigma^2+\mathbb{E}[ u_{t+1}^2]\norm{\nabla f(\mathbf{x}_t)}^2\\
         &=\eta_c^2 \sigma^2+\mathbb{E}[ u_{t+1}^2]\sigma^2+\mathbb{E}[ u_{t+1}^2]\norm{\nabla f(\mathbf{x}_t)}^2\\
         &=\eta_c^2 \sigma^2+\frac{(L_{max}-L_{min})^2\sigma^2}{12}+\frac{(L_{max}-L_{min})^2\norm{\nabla f(\textbf{x}_0)}^2}{12},\label{app3_3}
    \end{aligned}
\end{equation}
since the second moment of a uniformly distributed random variable in the interval $[L_{min}-\eta_c, L_{max}-\eta_c]$ is given by $\frac{(L_{max}-L_{min})^2}{12}$.
Using \eqref{app3_3} in \eqref{afterp1} and $\eta_c=\frac{L_{min}+L_{max}}{2}$,
\begin{equation*}
    \begin{aligned}
        \mathbb{E}[f(\mathbf{x}_{t+1})] - f(\mathbf{x}_t) &\leq - \eta_c \norm{\nabla f(\mathbf{x}_t)}^2 +\frac{\beta}{2} \eta_c^2 \norm{\nabla f(\mathbf{x}_t)}^2 +\frac{\beta\eta_c^2 \sigma^2}{2}+\frac{\beta(L_{max}-L_{min})^2\sigma^2}{24}\\
        &+\frac{\beta(L_{max}-L_{min})^2\norm{\nabla f(\mathbf{x}_t)}^2}{24} \\
        &\leq - \eta_c \norm{\nabla f(\mathbf{x}_t)}^2 +\frac{\beta}{2} \eta_c^2 \norm{\nabla f(\mathbf{x}_t)}^2 +\frac{\beta\eta_c^2 \sigma^2}{2}+\frac{\beta\eta_c^2\sigma^2}{6}+\frac{\beta\eta_c^2\norm{\nabla f(\textbf{x}_0)}^2}{6}\\
        &=-\norm{\nabla f(\mathbf{x}_t)}^2\left(\eta_c - \frac{2\beta\eta_c^2}{3}\right)+\frac{2\beta\eta_c^2 \sigma^2}{3}
        \end{aligned}
        \end{equation*}
Now, applying our initial assumption that $\norm{\nabla f(\mathbf{x}_t)}\geq \sqrt{3\eta_c\beta\sigma^2}$, we have,
        \begin{equation*}
        \mathbb{E}[f(\mathbf{x}_{t+1})] - f(\mathbf{x}_t) \leq -3\eta_c\beta\sigma^2\left(\eta_c - \frac{2\beta\eta_c^2}{3}\right)+\frac{2\beta\eta_c^2 \sigma^2}{3}=-3\eta_c^2\beta\sigma^2+\frac{6\beta^2\eta_c^3\sigma^2}{3}+\frac{2\beta\eta_c^2 \sigma^2}{3}\nonumber
        \end{equation*}
Since $L_{max}<\frac{1}{\beta}$ and $\eta_c=\frac{L_{min}+L_{max}}{2}$, we have $\eta_c\beta<L_{max}\beta<1$. Finally,
 \begin{equation*}
    \begin{aligned}       
        \mathbb{E}[f(\mathbf{x}_{t+1})] - f(\mathbf{x}_t)&\leq -3\eta_c^2\beta\sigma^2+\frac{6\beta\eta_c^2\sigma^2}{3}+\frac{2\beta\eta_c^2 \sigma^2}{3} =-\frac{\beta\eta_c^2 \sigma^2}{3}\nonumber\\
        &=-\tilde{\Omega}(\eta_c^2),
    \end{aligned}
\end{equation*}
which proves the theorem.
\end{proof}
\section{Additional results needed to prove Theorem \ref{th:2}}
Here, we state and prove two lemmas that are instrumental in the proof of Theorem \ref{th:2}.
\subsection{Proof of Lemma \ref{lemma:xt_x0}}
\label{ap:lemma1}
In the following Lemma, we prove that the gradients of a second order approximation of $f$ are probabilistically bounded for all $t\leq T$ and its iterates as we apply SGD-PLRS are also bounded when the initial iterate $\textbf{x}_0$ is a saddle point.
\begin{lemma}\label{lemma:xt_x0}
Let $f$ satisfy Assumptions \hyperlink{hyper:a1}{\textbf{A1}} - \hyperlink{hyper:a4}{\textbf{A4}}. Let $\tilde{f}$ be the second order Taylor approximation of $f$ and let $\tilde{\textbf{x}}_t$ be the  iterate at time step $t$ obtained using the SGD update equation as in \eqref{update} on $\tilde{f}$; let $\tilde{\textbf{x}}_0=\textbf{x}_0$, $\norm{\nabla f(\textbf{x}_0)}\leq \epsilon$ and the minimum eigenvalue of the Hessian of $f$ at $\textbf{x}_0$ be $ \lambda_{min}(\textbf{H}(\textbf{x}_0)) = -\gamma_{o}$ where $\gamma_o>0$. With probability at least $1-\tilde{O}(L_{max}^{15/4})$, we have
\[  \norm{\nabla \tilde{f}(\tilde{\textbf{x}}_t)}\leq \tilde{O}\left(\frac{1}{L_{max}^{0.5}}\right), \quad \norm{\tilde{\textbf{x}}_t - \textbf{x}_0}\leq \tilde{O}\left(L_{max}^{3/8}\log\left(\frac{1}{L_{max}}\right)\right)\quad \forall t\leq T=\tilde{O}\left(L_{max}^{-1/4}\right).\]
\end{lemma}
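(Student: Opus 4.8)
\textbf{Proof plan for Lemma \ref{lemma:xt_x0}.}

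The plan is to analyze the SGD-PLRS recursion on the quadratic surrogate $\tilde{f}$, for which the dynamics are exactly linear in the iterate and hence amenable to closed form. Write $\tilde{f}(\mathbf{x}) = f(\mathbf{x}_0) + \nabla f(\mathbf{x}_0)^T(\mathbf{x}-\mathbf{x}_0) + \tfrac12 (\mathbf{x}-\mathbf{x}_0)^T \mathbf{H}(\mathbf{x}_0)(\mathbf{x}-\mathbf{x}_0)$, so that $\nabla \tilde{f}(\mathbf{x}) = \nabla f(\mathbf{x}_0) + \mathbf{H}(\mathbf{x}_0)(\mathbf{x}-\mathbf{x}_0)$. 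Substituting into the SGD-PLRS update \eqref{update}, the centered iterate $\mathbf{v}_t := \tilde{\mathbf{x}}_t - \mathbf{x}_0$ satisfies a recursion of the form $\mathbf{v}_{t+1} = (\mathbf{I} - \eta_c \mathbf{H})\mathbf{v}_t - \eta_c \nabla f(\mathbf{x}_0) - \tilde{\mathbf{w}}_t$, where $\tilde{\mathbf{w}}_t$ is the (zero-mean, by Lemma \ref{lemma:zeromean}) noise term evaluated on $\tilde{f}$. Unrolling this linear recursion gives $\mathbf{v}_t = -\eta_c \sum_{s=0}^{t-1}(\mathbf{I}-\eta_c\mathbf{H})^{s}\nabla f(\mathbf{x}_0) - \sum_{s=0}^{t-1}(\mathbf{I}-\eta_c\mathbf{H})^{t-1-s}\tilde{\mathbf{w}}_s$. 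Since $L_{max} < 1/\beta$ controls the spectral radius of $\mathbf{I}-\eta_c\mathbf{H}$ (all eigenvalues lie in $[1-\eta_c\beta, 1+\eta_c\gamma_o]$, and the expanding direction has factor at most $1+\eta_c\gamma_o$), over a horizon $T = \tilde O(L_{max}^{-1/4})$ the geometric growth is bounded: $(1+\eta_c\gamma_o)^T = \exp(\tilde O(\eta_c \cdot L_{max}^{-1/4})) = \exp(\tilde O(L_{max}^{3/4}))$, which is $1+o(1)$ — this is the crucial scaling that keeps everything polynomially controlled.

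The two bounds are then obtained as follows. For $\norm{\tilde{\mathbf{x}}_t - \mathbf{x}_0} = \norm{\mathbf{v}_t}$: the deterministic (drift) part is bounded by $\eta_c T \cdot (1+\eta_c\gamma_o)^T \cdot \norm{\nabla f(\mathbf{x}_0)} \leq \tilde O(\eta_c T \epsilon)$; with $\epsilon = \sqrt{3\eta_c\beta\sigma^2} = \tilde O(L_{max}^{1/2})$ and $\eta_c T = \tilde O(L_{max}^{3/4})$, this is $\tilde O(L_{max}^{5/4})$, dominated by the target. The stochastic part is a martingale sum; I would bound its norm using a concentration inequality (Azuma–Hoeffding or a vector-valued Bernstein inequality), using Assumption \hyperlink{hyper:a3}{\textbf{A3}} and the boundedness of $u_{t+1}$ to control $\norm{\tilde{\mathbf{w}}_s}$ (roughly $\tilde O(\eta_c Q + (L_{max}-L_{min})\norm{\tilde g})$, and $\norm{\tilde g}$ is itself bounded using the gradient bound being proved, so a short inductive/bootstrapping argument is needed). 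The sum of $T$ such terms, amplified by the bounded geometric factor, concentrates at scale $\tilde O(\sqrt{T}\,\eta_c \cdot \mathrm{poly})$ with the stated failure probability $\tilde O(L_{max}^{15/4})$; matching powers, $\sqrt{T} = \tilde O(L_{max}^{-1/8})$ and the per-step noise scale $\tilde O(L_{max})$ combine, and the extra $\log(1/L_{max})$ factor comes from the union bound over $t \leq T$ (or from a maximal inequality over the trajectory). For $\norm{\nabla \tilde f(\tilde{\mathbf{x}}_t)} = \norm{\nabla f(\mathbf{x}_0) + \mathbf{H}\mathbf{v}_t}$: bound by $\norm{\nabla f(\mathbf{x}_0)} + \beta \norm{\mathbf{v}_t} \leq \epsilon + \beta \cdot \tilde O(L_{max}^{3/8}\log(1/L_{max})) = \tilde O(L_{max}^{3/8})$, which is within $\tilde O(L_{max}^{-1/2})$ (the claimed bound is loose, which is fine).

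The main obstacle I anticipate is the bootstrapping coupling between the two bounds: the noise term $\tilde{\mathbf{w}}_s$ depends on the stochastic gradient $\tilde g(\tilde{\mathbf{x}}_s)$, whose norm I want to bound via $\norm{\nabla\tilde f(\tilde{\mathbf{x}}_s)} + \tilde Q$, but $\norm{\nabla\tilde f(\tilde{\mathbf{x}}_s)}$ is one of the very quantities the lemma asserts. The clean way around this is to set up a stopping time $\tau = \min\{t : \norm{\mathbf{v}_t} > \tilde O(L_{max}^{3/8}\log(1/L_{max}))\}$, prove the concentration bound for the stopped process $\mathbf{v}_{t\wedge\tau}$ (on which the per-step noise is deterministically controlled), and then show that the concentration event forces $\tau > T$, so the stopping never triggers with high probability — a standard self-bounding martingale argument. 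The remaining bookkeeping is just tracking the exponents of $L_{max}$ through the geometric sums and matching them to the claimed $3/8$, $-1/2$, and $15/4$, which is routine once the recursion is unrolled and the concentration tool is fixed.
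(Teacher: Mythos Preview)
Your plan is correct and would establish the lemma, but the paper proceeds in the \emph{opposite order} and with a different device for the gradient bound. Rather than bootstrapping via a stopping time, the paper first proves a purely \emph{deterministic} upper bound on $\norm{\nabla\tilde f(\tilde{\mathbf{x}}_t)}$: it unrolls the linear recursion for the gradient, uses that $|u_\tau|$ and the stochastic-gradient error are almost-surely bounded (Assumption~\hyperlink{hyper:a3}{\textbf{A3}}), and shows by an elementary induction (Appendix~\ref{ap:induction}) that $\norm{\nabla\tilde f(\tilde{\mathbf{x}}_t)}\le 10\tilde Q\sum_{\tau=0}^{t(t-1)/2}(1+\eta_c\gamma_o)^\tau$. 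For $t\le T=\tilde O(L_{max}^{-1/4})$ this sum has $\tilde O(L_{max}^{-1/2})$ terms of size $O(1)$, and \emph{that} is the origin of the stated $\tilde O(L_{max}^{-1/2})$ gradient bound. Only afterwards does the paper control $\norm{\tilde{\mathbf{x}}_t-\mathbf{x}_0}$: it plugs this worst-case gradient bound into $\norm{\tilde{\mathbf{w}}_\tau}$, isolates the dominant random sum $\sum_\tau (1+\eta_c\gamma_o)^{t-\tau-1}|u_{\tau+1}|\cdot\tilde O(\tau^2)$, and applies scalar Hoeffding in the randomness of $\{u_\tau\}$ alone; the squared range of this sum is $\tilde O(L_{max}^{3/4})$, which yields the $\tilde O(L_{max}^{3/8}\log(1/L_{max}))$ iterate bound after a union bound over $t\le T$.

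Your route---stopped-martingale concentration on $\mathbf v_t$ first, then the gradient bound as the trivial corollary $\norm{\nabla\tilde f(\tilde{\mathbf{x}}_t)}\le\epsilon+\beta\norm{\mathbf v_t}$---is cleaner and, as you correctly note, yields a strictly tighter gradient bound ($\tilde O(L_{max}^{3/8})$ rather than $\tilde O(L_{max}^{-1/2})$; in fact your per-step noise is $\tilde O(L_{max})$ on the stopped process, so the iterate concentration gives $\tilde O(L_{max}^{7/8})$, better than the claimed $3/8$). What the paper's approach buys is simplicity: the deterministic induction removes the need for any stopping-time or self-bounding bookkeeping and reduces the probabilistic step to a single scalar Hoeffding application. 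Both routes prove the lemma as stated; it is the paper's looser exponents that are carried forward into Lemma~\ref{lemma:xt_tildext} and Theorem~\ref{th:2}.
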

\begin{proof}  
As $\tilde{f}$ is the second order Taylor series approximation of $f$, we have
\begin{equation*}
\tilde{f}(\tilde{\textbf{x}})=f(\textbf{x}_0)+\nabla f(\textbf{x}_0)^T (\tilde{\textbf{x}}-\textbf{x}_0) + \frac{1}{2}(\tilde{\textbf{x}}-\textbf{x}_0)^T \textbf{H}(\textbf{x}_0) (\tilde{\textbf{x}}-\textbf{x}_0).
\end{equation*}
Taking derivative w.r.t. $\tilde{\textbf{x}}$, we have
    $\nabla \tilde{f}(\tilde{\textbf{x}}) = \nabla f(\textbf{x}_0)+\textbf{H}(\textbf{x}_0) (\tilde{\textbf{x}}-\textbf{x}_0).$
Now, note that $\nabla \tilde{f}(\tilde{\textbf{x}}_{t-1})=\nabla f(\textbf{x}_0) + \textbf{H}(\textbf{x}_0)(\tilde{\textbf{x}}_{t-1} - \textbf{x}_0) = K(\textbf{x}_0) + \textbf{H}(\textbf{x}_0)\tilde{\textbf{x}}_{t-1}$, where $K(\textbf{x}_0) = \nabla f(\textbf{x}_0) - \textbf{H}(\textbf{x}_0)\textbf{x}_{0} = \nabla \tilde{f}(\tilde{\textbf{x}}_{t-1}) - \textbf{H}(\textbf{x}_0)\tilde{\textbf{x}}_{t-1}$. Therefore, 
\begin{equation}\label{quadratic}
    \begin{aligned}
        \nabla \tilde{f}(\tilde{\textbf{x}}_t) = K(\textbf{x}_0) + \textbf{H}(\textbf{x}_0) \tilde{\textbf{x}}_t &= \nabla \tilde{f}(\tilde{\textbf{x}}_{t-1}) - \textbf{H}(\textbf{x}_0)\tilde{\textbf{x}}_{t-1} + \textbf{H}(\textbf{x}_0) \tilde{\textbf{x}}_t  \\
        &= \nabla\tilde{f}(\tilde{\textbf{x}}_{t-1}) + \textbf{H}(\textbf{x}_0)(\tilde{\textbf{x}}_t - \tilde{\textbf{x}}_{t-1}).
    \end{aligned}
\end{equation}

\noindent Next, using the SGD-PLRS update and rearranging,
\allowdisplaybreaks
\begin{equation} \label{tildefeq}
    \begin{aligned}
    \nabla\tilde{f}(\tilde{\textbf{x}}_t)  
    &= \nabla\tilde{f}(\tilde{\textbf{x}}_{t-1}) - \textbf{H}(\textbf{x}_0)(\eta_c\nabla\tilde{f}(\tilde{\textbf{x}}_{t-1}) + \tilde{\textbf{w}}_{t-1}) \\
    &=(I-\eta_c\textbf{H}(\textbf{x}_0))\nabla\tilde{f}(\tilde{\textbf{x}}_{t-1}) -\textbf{H}(\textbf{x}_0)\tilde{\textbf{w}}_{t-1},  \\
\end{aligned}
\end{equation}
where $I$ denotes the $d\times d$ identity matrix. Next, unrolling the term $\nabla \tilde{f}(\tilde{\textbf{x}}_{t-1})$ recursively,
\begin{equation}
    \begin{aligned} \label{nablaf1}
    \nabla\tilde{f}(\tilde{\textbf{x}}_t) = (I-\eta_c\textbf{H}(\textbf{x}_0))^t\nabla\tilde{f}   (\tilde{\textbf{x}}_{0}) - \textbf{H}(\textbf{x}_0)\sum_{\tau=0}^{t-1}(I-\eta_c\textbf{H}(\textbf{x}_0))^{t-\tau-1}\tilde{\textbf{w}}_{\tau}.
\end{aligned}
\end{equation}
Using the triangle and Cauchy-Schwartz inequalities,
\begin{equation}
    \begin{aligned} 
    \norm{\nabla\tilde{f}(\tilde{\textbf{x}}_t)} &\leq \norm{(I-\eta_c\textbf{H}(\textbf{x}_0))^t\nabla\tilde{f}   (\tilde{\textbf{x}}_{0}) }+ \norm{\textbf{H}(\textbf{x}_0)\sum_{\tau=0}^{t-1}(I-\eta_c\textbf{H}(\textbf{x}_0))^{t-\tau-1}\tilde{\textbf{w}}_{\tau}}  \\
    &\leq \norm{(I-\eta_c\textbf{H}(\textbf{x}_0))^t}\; \norm{\nabla\tilde{f}(\tilde{\textbf{x}}_{0})}+\norm{\textbf{H}(\textbf{x}_0)}\; \norm{\sum_{\tau=0}^{t-1}(I-\eta_c\textbf{H}(\textbf{x}_0))^{t-\tau-1}\tilde{\textbf{w}}_{\tau}} \label{nabls}
\end{aligned}
\end{equation}
Note that the norm over the matrices refers to the matrix-induced norm. Since $\textbf{H}(\textbf{x}_0)$ is a real symmetric matrix, the induced norm gives the maximum eigenvalue of $\textbf{H}(\textbf{x}_0)$ i.e, $\lambda_{max}(\textbf{H}(\textbf{x}_0))\leq\beta$ by our $\beta$-smoothness assumption \hyperlink{hyper:a1}{\textbf{A1}}. 
In the case of $(I-\eta_c\textbf{H}(\textbf{x}_0))$ the induced norm gives $(1-\eta_c\lambda_{min}(\textbf{H}(\textbf{x}_0))$ which is $(1+\eta_c\gamma_o)$ as per our assumption that $\lambda_{min}(\textbf{H}(\textbf{x}_0))=-\gamma_o$. 
Also recall that $\norm{\nabla \tilde{f}(\tilde{\textbf{x}}_0)}\leq\epsilon$. Now \eqref{nabls} becomes,
\begin{equation}
    \begin{aligned}
    \norm{\nabla\tilde{f}(\tilde{\textbf{x}}_t)} &\leq (1+\eta_c\gamma_o)^t \epsilon +\beta\;\norm{\sum_{\tau=0}^{t-1}(I-\eta_c\textbf{H}(\textbf{x}_0))^{t-\tau-1}\tilde{\textbf{w}}_{\tau}},  \\
    &\leq (1+\eta_c\gamma_o)^t \epsilon + \beta \sum_{\tau=0}^{t-1} (1+\eta_c\gamma_o)^{t-\tau-1} \norm{\tilde{\textbf{w}}_{\tau}}.
\end{aligned}
\end{equation}
Now, expanding the noise term $\tilde{\textbf{w}}_{\tau}$, 
\begin{equation*}
    \norm{\nabla\tilde{f}(\tilde{\textbf{x}}_t)} =(1+\eta_c\gamma_o)^t \epsilon + \beta \sum_{\tau=0}^{t-1} (1+\eta_c\gamma_o)^{t-\tau-1} \norm{\eta_c \tilde{g}(\tilde{\textbf{x}}_{\tau}) - \eta_c\nabla \tilde{f}(\tilde{\textbf{x}}_{\tau}) +  u_{\tau+1} \tilde{g}(\tilde{\textbf{x}}_{\tau})} \end{equation*}   
Now recall from our assumption \hyperlink{hyper:a3}{\textbf{A3}} that $\norm{ \tilde{g}(\tilde{\textbf{x}}_{\tau}) - \nabla \tilde{f}(\tilde{\textbf{x}}_{\tau})}\leq \tilde{Q}$. Hence, 
\begin{equation*}
    \begin{aligned} 
    \norm{\nabla\tilde{f}(\tilde{\textbf{x}}_t)}&\leq (1+\eta_c\gamma_o)^t \epsilon + \beta \sum_{\tau=0}^{t-1} (1+\eta_c\gamma_o)^{t-\tau-1} \left(\eta_c \tilde{Q} +|u_{\tau+1}| \norm{\tilde{g}(\tilde{\textbf{x}}_{\tau})-\nabla\tilde{f}(\tilde{\textbf{x}}_{\tau})+\nabla\tilde{f}(\tilde{\textbf{x}}_{\tau})} \right)\\
    &\leq (1+\eta_c\gamma_o)^t \epsilon + \beta \sum_{\tau=0}^{t-1} (1+\eta_c\gamma_o)^{t-\tau-1} \left(\eta_c \tilde{Q} +|u_{\tau+1}| \left(\tilde{Q} + \norm{\nabla\tilde{f}(\tilde{\textbf{x}}_{\tau})}\right)\right) 
\end{aligned}
\end{equation*}
Using $\norm{\nabla\tilde{f}(\tilde{\textbf{x}}_0)}\leq \epsilon$ and $\norm{\nabla\tilde{f}(\tilde{\textbf{x}}_1)}\leq (1+\eta_c \gamma_o)\epsilon + \epsilon  +2\tilde{Q}$, it can be proved by induction that the general expression for $t\geq 2$ is given by,
\begin{equation}\label{generalt}
    \norm{\nabla \tilde{f}(\tilde{\textbf{x}}_t)} \leq 10\tilde{Q} \sum_{\tau=0}^{\frac{t(t-1)}{2}} (1+\eta_c\gamma_o)^{\tau}
\end{equation}
We give the proof of \eqref{generalt} by induction in Appendix \ref{ap:induction}.
Next, we prove the bound on $\tilde{\textbf{x}}_t - \tilde{\textbf{x}}_0$. Using the SGD-PLRS update,  
\begin{subequations}
    \begin{align}
    &\tilde{\textbf{x}}_t - \tilde{\textbf{x}}_0 = -\sum_{\tau=0}^{t-1}\left(\eta_c\nabla\tilde{f}(\tilde{\textbf{x}}_{\tau})+\tilde{\textbf{w}}_{\tau}\right) \nonumber \\
    &= - \sum_{\tau=0}^{t-1} \left(\eta_c \left( (I-\eta_c\textbf{H}(\textbf{x}_0))^{\tau}\nabla\tilde{f}   (\tilde{\textbf{x}}_{0}) - \textbf{H}(\textbf{x}_0)\sum_{\tau^{'}=0}^{\tau-1}(I-\eta_c\textbf{H}(\textbf{x}_0))^{\tau-\tau^{'}-1}\tilde{\textbf{w}}_{\tau^{'}} \right) +\tilde{\textbf{w}}_{\tau} \right) \label{delx1}\\
    &=-\sum_{\tau=0}^{t-1}\eta_c(I-\eta_c\textbf{H}(\textbf{x}_0))^{\tau}\nabla f(\textbf{x}_{0}) - \sum_{\tau=0}^{t-1}(I-\eta_c\textbf{H}(\textbf{x}_0))^{t-\tau-1}\tilde{\textbf{w}}_{\tau},\label{xt_x0}
\end{align}
\end{subequations}
where the equation \eqref{delx1} is obtained by using \eqref{nablaf1}. We obtain \eqref{xt_x0} by using the summation of geometric series as $\textbf{H}(\textbf{x}_0)$ is invertible by the strict saddle property. 
As $\tilde{\textbf{x}}_0 = \textbf{x}_0$, we can write $\nabla\tilde{f}(\tilde{\textbf{x}}_0)=\nabla f(\textbf{x}_0)$. 
Taking norm,
\begin{equation}
\begin{aligned}
    \norm{\tilde{\textbf{x}_t} - \tilde{\textbf{x}}_0}&\leq \norm{\sum_{\tau=0}^{t-1}\eta_c(I-\eta_c\textbf{H}(\textbf{x}_0))^{\tau}\nabla f(\textbf{x}_{0})} + \norm{\sum_{\tau=0}^{t-1}(I-\eta_c\textbf{H}(\textbf{x}_0))^{t-\tau-1}\tilde{\textbf{w}}_{\tau}} \\
    &\leq \sum_{\tau=0}^{t-1}\norm{\eta_c(I-\eta_c\textbf{H}(\textbf{x}_0))^{\tau}\nabla f(\textbf{x}_{0})} +\sum_{\tau=0}^{t-1}\norm{(I-\eta_c\textbf{H}(\textbf{x}_0))^{t-\tau-1}\tilde{\textbf{w}}_{\tau}}   \\
    &\leq \eta_c \epsilon \sum_{\tau=0}^{t-1} (1+\eta_c\gamma_o)^{\tau}+\sum_{\tau=0}^{t-1}(1+\eta_c\gamma_o)^{t-\tau-1}\norm{\tilde{\textbf{w}}_{\tau}} \label{xteq1}.
\end{aligned}
\end{equation}
In \eqref{xteq1}, it can be seen that the first term is arbitrarily small by the initial assumption and that the second term decides the order of $\norm{\tilde{\textbf{x}}_t -\tilde{\textbf{x}}_0}$. Hence, in order to bound $\norm{\tilde{\textbf{x}_t} - \tilde{\textbf{x}}_0}$ probabilistically, it is sufficient to bound the second term, $\sum_{\tau=0}^{t-1}(1+\eta_c\gamma_o)^{t-\tau-1}\norm{\tilde{\textbf{w}}_{\tau}}$.
Now,
\begin{equation*}
    \begin{aligned}
       \sum_{\tau=0}^{t-1}(1+\eta_c\gamma_o)^{t-\tau-1}\norm{\tilde{\textbf{w}}_{\tau}}&=\sum_{\tau=0}^{t-1} (1+\eta_c\gamma_o)^{t-\tau-1}\norm{\eta_c\tilde{g}(\tilde{\textbf{x}}_{\tau})-\eta_c\nabla \tilde{f}(\tilde{\textbf{x}}_{\tau})+u_{\tau+1}\tilde{g}(\tilde{\textbf{x}}_{\tau})}\\
       &\hspace{-30mm}=\sum_{\tau=0}^{t-1}(1+\eta_c\gamma_o)^{t-\tau-1}\left( \eta_c \tilde{Q} +\abs{u_{\tau+1}}\norm{ \tilde{g}(\tilde{\textbf{x}}_{\tau}) - \nabla \tilde{f}(\tilde{\textbf{x}}_{\tau}) +\nabla \tilde{f}(\tilde{\textbf{x}}_{\tau}) }   \right)\\
       &\hspace{-30mm}=\sum_{\tau=0}^{t-1} (1+\eta_c\gamma_o)^{t-\tau-1}\tilde{Q}\left(\eta_c +\abs{u_{\tau+1}} \right)+\sum_{\tau=0}^{t-1}(1+\eta_c\gamma_o)^{t-\tau-1} \abs{u_{\tau+1}} \norm{\nabla \tilde{f}(\tilde{\textbf{x}}_{\tau})} 
    \end{aligned}
\end{equation*}
Now, using $\norm{\nabla\tilde{f}(\tilde{\textbf{x}}_0)}\leq \epsilon$, $\norm{\nabla\tilde{f}(\tilde{\textbf{x}}_1)}\leq (1+\eta_c \gamma_o)\epsilon + \epsilon  +2\tilde{Q}$ and \eqref{generalt} we write,
\begin{equation}
    \begin{aligned}
      &\sum_{\tau=0}^{t-1}(1+\eta_c\gamma_o)^{t-\tau-1}\norm{\tilde{\textbf{w}}_{\tau}} \leq \sum_{\tau=0}^{t-1}(1+\eta_c\gamma_o)^{t-\tau-1} \tilde{Q}\left(\eta_c +\abs{u_{\tau+1}} \right)+ (1+\eta_c\gamma_o)^{t-1}\abs{u_1}\epsilon+\\
       &(1+\eta_c\gamma_o)^{t-2}\abs{u_2}\left((1+\eta_c \gamma_o)\epsilon + \epsilon  +2\tilde{Q}\right)+\sum_{\tau=2}^{t-1}(1+\eta_c\gamma_o)^{t-\tau-1}\abs{u_{\tau+1}} 10\tilde{Q} \sum_{\tau^{'}=0}^{\frac{\tau(\tau-1)}{2}} (1+\eta_c\gamma_o)^{\tau^{'}}\label{wtau}
    \end{aligned}
\end{equation}
It can be observed from \eqref{wtau} that the last term dominates the expression of and hence, it determines the order of $\norm{\tilde{\textbf{x}}_t - \tilde{\textbf{x}}_0}$. We now apply Hoeffding's inequality to derive a probabilistic bound on $\norm{\tilde{\textbf{x}_t} - \tilde{\textbf{x}}_0}$. According to Hoeffding's inequality for any summation $S_n=X_1+\cdots+X_n$ such that $a_i\leq X_i\leq b_i$,
   $ \mathbb{P}\left(S_n - \mathbb{E}[S_n] \geq  \delta  \right)\leq \exp\left(\frac{-2\delta^2}{\sum_{i=1}^n (b_i - a_i)^2}\right) $.
Now, setting $T=\tilde{O}\left(L_{max}^{-1/4}\right)$ from \eqref{theorem2_4} and assuming $\eta_c\leq\eta_{max}\leq \frac{\sqrt{2}-1}{\gamma^{'}}, \gamma_o\leq \gamma^{'}$, the squared bound of the summation $\sum_{\tau=2}^{t-1}(1+\eta_c\gamma_o)^{t-\tau-1}\abs{u_{\tau+1}} 10\tilde{Q} \sum_{\tau^{'}=0}^{\frac{\tau(\tau-1)}{2}} (1+\eta_c\gamma_o)^{\tau^{'}}\leq \tilde{O}\left(L_{max}^{3/4}\right)$,
Setting $\delta = \tilde{O}\left(\sqrt{L_{max}^{3/4}}\log \left(\frac{1}{L_{max}}\right)\right)$, for some $t \leq T$, 
\begin{equation*}
    \begin{aligned}
        \mathbb{P}\left(\sum_{\tau=2}^{t-1}(1+\eta_c\gamma_o)^{t-\tau-1}\abs{u_{\tau+1}} 10\tilde{Q} \sum_{\tau^{'}=0}^{\frac{\tau(\tau-1)}{2}} (1+\eta_c\gamma_o)^{\tau^{'}} \geq  \tilde{O}\left(L_{max}^{3/8} \log \left(\frac{1}{L_{max}}\right)\right)  \right)&\\
        &\hspace{-45mm}\leq \tilde{O}(L^4_{max}).
    \end{aligned}
\end{equation*}
Taking the union bound over all $t\leq T$,
\begin{equation*}
\begin{aligned}
    \mathbb{P}\left(\forall t\leq T,\quad\sum_{\tau=2}^{t-1}(1+\eta_c\gamma_o)^{t-\tau-1}\abs{u_{\tau+1}} 10\tilde{Q} \sum_{\tau^{'}=0}^{\frac{\tau(\tau-1)}{2}} (1+\eta_c\gamma_o)^{\tau^{'}} \geq  \tilde{O}\left(L_{max}^{3/8} \log \left(\frac{1}{L_{max}}\right)\right)  \right)&\\
    &\hspace{-45mm}\leq \tilde{O}\left( L_{max}^{15/4} \right),
    \end{aligned}
\end{equation*}
which completes our proof.
\end{proof}
\subsection{Proof of Lemma \ref{lemma:xt_tildext}}
\label{ap:lemma2}
This lemma is used to derive an expression for a high probability upper bound of $\norm{\textbf{x}_t - \tilde{\textbf{x}}_t}$ and $\norm{\nabla f(\textbf{x}_t) - \nabla \tilde{f}(\tilde{\textbf{x}}_t)}$.
\begin{lemma}\label{lemma:xt_tildext}
    Let $f:\mathbb{R}^d\rightarrow\mathbb{R}$ satisfy Assumptions \hyperlink{hyper:a1}{\textbf{A1}} - \hyperlink{hyper:a4}{\textbf{A4}}. Let $\tilde{f}$ be the second order Taylor's approximation of $f$ and let $\textbf{x}_t$, $\tilde{\textbf{x}}_t$ be the iterates at time step $t$ obtained using the SGD-PLRS update on $f$, $\tilde{f}$ respectively; let $\tilde{\textbf{x}}_0=\textbf{x}_0$ and $\norm{\nabla f(\textbf{x}_0)}\leq \epsilon$. Let the minimum eigenvalue of the Hessian at $\textbf{x}_0$ be $\lambda_{min}(\nabla^2(f(\textbf{x}_0))) = -\gamma_o$, where $\gamma_o>0$.
Then $\forall t\leq T=O\left(L_{max}^{-1/4}\right)$, with a probability of at least $1-\tilde{O}(L_{max}^{7/2})$,
\begin{equation*}
\begin{aligned}
\norm{\textbf{x}_t - \tilde{\textbf{x}}_t} &\leq O\left( L_{max}^{3/4}\right) \quad \text{ and } \quad
\norm{\nabla f(\textbf{x}_t) - \nabla \tilde{f}(\tilde{\textbf{x}}_t)} &\leq O\left( L_{max}^{3/8}\log\frac{1}{L_{max}}\right).
\end{aligned}
\end{equation*}
\end{lemma}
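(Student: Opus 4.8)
The plan is to track the discrepancy $\textbf{e}_t := \textbf{x}_t - \tilde{\textbf{x}}_t$ by writing a recursion for it from the two SGD-PLRS updates and then bounding it by induction on $t$. Subtracting the update for $\tilde{f}$ from the update for $f$, and writing $\mathbf{w}_t - \tilde{\mathbf{w}}_t$ in terms of the stochastic-gradient and uniform-noise pieces, I would obtain an equation of the form $\textbf{e}_t = \textbf{e}_{t-1} - \eta_c(\nabla f(\textbf{x}_{t-1}) - \nabla\tilde f(\tilde{\textbf{x}}_{t-1})) - (\mathbf{w}_{t-1} - \tilde{\mathbf{w}}_{t-1})$. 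The crucial structural fact is that $\tilde f$ is the second-order Taylor expansion of $f$ at $\textbf{x}_0$, so $\nabla\tilde f(\tilde{\textbf{x}}_{t-1}) = \nabla f(\textbf{x}_0) + \textbf{H}(\textbf{x}_0)(\tilde{\textbf{x}}_{t-1} - \textbf{x}_0)$; combined with the $\rho$-Hessian-Lipschitz assumption \hyperlink{hyper:a2}{\textbf{A2}} and a Taylor-with-integral-remainder argument, $\norm{\nabla f(\textbf{x}_{t-1}) - \nabla\tilde f(\tilde{\textbf{x}}_{t-1})}$ is controlled by $\frac{\rho}{2}\max(\norm{\textbf{x}_{t-1}-\textbf{x}_0}^2, \norm{\tilde{\textbf{x}}_{t-1}-\textbf{x}_0}^2)$ plus $\beta\norm{\textbf{e}_{t-1}}$. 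Unrolling the recursion then gives $\textbf{e}_t$ as a sum of powers of $(I - \eta_c\textbf{H}(\textbf{x}_0))$ acting on these error terms, exactly mirroring the manipulation in Lemma \ref{lemma:xt_x0}.

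Next I would feed in the bounds already established in Lemma \ref{lemma:xt_x0}: with probability at least $1 - \tilde O(L_{max}^{15/4})$, both $\norm{\tilde{\textbf{x}}_t - \textbf{x}_0} \le \tilde O(L_{max}^{3/8}\log(1/L_{max}))$ and $\norm{\nabla\tilde f(\tilde{\textbf{x}}_t)} \le \tilde O(L_{max}^{-1/2})$ for all $t \le T = \tilde O(L_{max}^{-1/4})$. On this event the "curvature mismatch" term $\frac{\rho}{2}\norm{\tilde{\textbf{x}}_{t-1}-\textbf{x}_0}^2$ is of order $\tilde O(L_{max}^{3/4}\log^2(1/L_{max}))$, and summing $T = \tilde O(L_{max}^{-1/4})$ such contributions, each amplified by a geometric factor $\sum_\tau (1+\eta_c\gamma_o)^{t-\tau-1}$ that stays $\tilde O(L_{max}^{-1/4})$ over the horizon $T$ (using $\eta_c\gamma_o = \tilde O(L_{max})$ so $(1+\eta_c\gamma_o)^T$ is bounded), together with a Hoeffding bound on the $u_{\tau+1}$-dependent stochastic part just as in Lemma \ref{lemma:xt_x0}, yields $\norm{\textbf{e}_t} \le O(L_{max}^{3/4})$ with the claimed probability $1 - \tilde O(L_{max}^{7/2})$ after a union bound over $t \le T$. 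The second bound, on $\norm{\nabla f(\textbf{x}_t) - \nabla\tilde f(\tilde{\textbf{x}}_t)}$, then follows by the triangle inequality: $\norm{\nabla f(\textbf{x}_t) - \nabla f(\tilde{\textbf{x}}_t)} \le \beta\norm{\textbf{e}_t} = O(L_{max}^{3/4})$ by \hyperlink{hyper:a1}{\textbf{A1}}, while $\norm{\nabla f(\tilde{\textbf{x}}_t) - \nabla\tilde f(\tilde{\textbf{x}}_t)} \le \frac{\rho}{2}\norm{\tilde{\textbf{x}}_t - \textbf{x}_0}^2 = \tilde O(L_{max}^{3/4}\log^2(1/L_{max}))$ by \hyperlink{hyper:a2}{\textbf{A2}}, and the $\log\frac{1}{L_{max}}$ factor in the statement absorbs the polylog overhead.

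I expect the main obstacle to be the induction bookkeeping: because the bound on $\norm{\nabla f(\textbf{x}_{t-1}) - \nabla\tilde f(\tilde{\textbf{x}}_{t-1})}$ feeding the recursion itself depends on $\norm{\textbf{x}_{t-1}-\textbf{x}_0}$, which in turn depends on $\norm{\textbf{e}_{t-1}}$ through $\norm{\textbf{x}_{t-1}-\textbf{x}_0} \le \norm{\tilde{\textbf{x}}_{t-1}-\textbf{x}_0} + \norm{\textbf{e}_{t-1}}$, the argument is genuinely circular and must be closed by a simultaneous induction hypothesis asserting $\norm{\textbf{e}_s} \le O(L_{max}^{3/4})$ for all $s < t$ and verifying the inductive step does not inflate the constant over the whole horizon $T$ — this is where the precise exponents ($3/4$ for the error, $3/8$ for displacements, $-1/4$ for the time horizon) have to balance. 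A secondary delicate point is handling the multiplicative $u_{\tau+1}\tilde g(\tilde{\textbf{x}}_\tau)$ noise, whose magnitude depends on $\norm{\nabla\tilde f(\tilde{\textbf{x}}_\tau)} = \tilde O(L_{max}^{-1/2})$, so that each Hoeffding increment has range $\tilde O(L_{max}^{-1/2}\cdot(L_{max}-L_{min}))$; one must check that these combine with the geometric weights to give the stated failure probability rather than a weaker one, which again pins down the choice of $T$.
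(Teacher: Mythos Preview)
Your approach is sound and is a genuinely different route from the paper's. The paper works in the \emph{opposite order}: it first controls the gradient discrepancy $\Delta_t := \nabla f(\textbf{x}_t)-\nabla\tilde f(\tilde{\textbf{x}}_t)$ by writing a five-term recursion $\Delta_t=(I-\eta_c\textbf{H}(\textbf{x}_0))\Delta_{t-1}+(\text{four correction terms})$, constructing a supermartingale $G_t=(1+\eta_c\gamma_o)^{-2t}\bigl[\norm{\Delta_t}^2+\tilde O(L_{max}^{7/8})\bigr]$, and applying Azuma--Hoeffding to get $\norm{\Delta_t}\le O(L_{max}^{3/8}\log\frac{1}{L_{max}})$ on the event $C_t$; only afterwards does it read off $\norm{\textbf{x}_t-\tilde{\textbf{x}}_t}\le\sum_\tau(\eta_c+|u_{\tau+1}|)(\norm{\Delta_\tau}+Q+\tilde Q)=\tilde O(L_{max}^{3/4})$. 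Your plan instead closes the induction directly on $\textbf{e}_t=\textbf{x}_t-\tilde{\textbf{x}}_t$, using the decomposition $\Delta_{t-1}=\textbf{H}(\textbf{x}_0)\textbf{e}_{t-1}+[\nabla f(\textbf{x}_{t-1})-\nabla\tilde f(\textbf{x}_{t-1})]$ to get the $(I-\eta_t\textbf{H}(\textbf{x}_0))$ contraction, and then recovers $\norm{\Delta_t}$ by the triangle inequality $\beta\norm{\textbf{e}_t}+\tfrac{\rho}{2}\norm{\tilde{\textbf{x}}_t-\textbf{x}_0}^2$. Two remarks: (i) your route is more elementary---no supermartingale is needed, since on the Lemma~\ref{lemma:xt_x0} event the per-step noise difference $\norm{\textbf{w}_\tau-\tilde{\textbf{w}}_\tau}\le \eta_{\tau+1}(Q+\tilde Q)+|u_{\tau+1}|\norm{\Delta_\tau}=\tilde O(L_{max})$ sums \emph{deterministically} to $\tilde O(L_{max}^{3/4})$ over $T=\tilde O(L_{max}^{-1/4})$, so the Hoeffding step you mention is actually unnecessary and the failure probability can be inherited entirely from Lemma~\ref{lemma:xt_x0}; (ii) your final triangle-inequality step in fact delivers $\norm{\Delta_t}=\tilde O(L_{max}^{3/4}\log^2\tfrac{1}{L_{max}})$, which is \emph{stronger} than the paper's stated $O(L_{max}^{3/8}\log\tfrac{1}{L_{max}})$---the paper's supermartingale argument only extracts the square root of a bound on $\norm{\Delta_t}^2$, hence the looser exponent. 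Both routes handle the circular dependence the same way (a joint inductive hypothesis on all $s<t$), and the exponent balancing you flag is exactly the issue: your deterministic accumulation $T\cdot\tilde O(L_{max})=\tilde O(L_{max}^{3/4})$ closes the induction without constant blow-up.
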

\begin{proof} 
The expression for $\textbf{x}_t - \tilde{\textbf{x}}_t $ can be written as,
\begin{equation}\label{xt}
    \begin{aligned}
\textbf{x}_t - \tilde{\textbf{x}}_t &= (\textbf{x}_t -\textbf{x}_0) - (\tilde{\textbf{x}}_t -\textbf{x}_0) \\
&= - \sum_{\tau=0}^{t-1} \big(\eta_c \nabla f(\textbf{x}_{\tau})+\textbf{w}_{\tau}\big) - \left(- \sum_{\tau=0}^{t-1} \big(\eta_c \nabla \tilde{f}(\tilde{\textbf{x}}_{\tau})+\tilde{\textbf{w}}_{\tau}\big)\right) = - \sum_{\tau=0}^{t-1} \left(\eta_c\Delta_{\tau}+( \textbf{w}_{\tau}-\tilde{\textbf{w}}_{\tau} )  \right).
    \end{aligned}
\end{equation}
where we define $\Delta_t = \nabla f(\textbf{x}_t) - \nabla \tilde{f}(\tilde{\textbf{x}}_t)$.
Now in order to bound $\norm{\textbf{x}_t-\tilde{\textbf{x}}_t}$, we derive expressions for both $\textbf{w}_{\tau}-\tilde{\textbf{w}}_{\tau}$ and $\Delta_{\tau}$. We initially focus on the term $\textbf{w}_{\tau}-\tilde{\textbf{w}}_{\tau}$.
\begin{equation}
    \begin{aligned}
  \textbf{w}_{\tau}-\tilde{\textbf{w}}_{\tau} &=\eta_c g(\textbf{x}_{\tau}) - \eta_c\nabla f_{\tau} +u_{\tau+1}g(\textbf{x}_{\tau}) -\left(\eta_c\tilde{g}(\tilde{\textbf{x}}_{\tau})-\eta_c\nabla \tilde{f}(\tilde{\textbf{x}}_{\tau})+u_{\tau+1}\tilde{g}(\tilde{\textbf{x}}_{\tau})\right)      \\
  &=\left( u_{\tau+1}+\eta_c \right)\left(\big(g(\textbf{x}_{\tau}) -\nabla f(\textbf{x}_{\tau})\big)- \big(\tilde{g}(\tilde{\textbf{x}}_{\tau}) -\nabla \tilde{f}(\tilde{\textbf{x}}_{\tau})\big)\right)+u_{\tau+1}\Delta_{\tau}\label{lemma2_1}.
    \end{aligned}
\end{equation}
Taking norm on both sides,
\begin{equation}
        \norm{\textbf{w}_{\tau}-\tilde{\textbf{w}}_{\tau}} \leq \abs{u_{\tau+1}+\eta_c}\left(Q+\tilde{Q} \right) +\abs{u_{\tau+1}}\norm{\Delta_{\tau}}\label{lemma2_2}
\end{equation}
Using \eqref{lemma2_1} and \eqref{lemma2_2} in \eqref{xt}, and assumption \hyperlink{hyper:a3}{\textbf{A3}} that stochastic noise is bounded, and applying norm, 
\begin{equation}
    \begin{aligned}
        \norm{\textbf{x}_t - \tilde{\textbf{x}}_t}&= \norm{- \sum_{\tau=0}^{t-1} \left(\eta_c\Delta_{\tau}+( \textbf{w}_{\tau}-\tilde{\textbf{w}}_{\tau} )  \right)} \leq \sum_{\tau=0}^{t-1} \norm{\eta_c\Delta_{\tau}+( \textbf{w}_{\tau}-\tilde{\textbf{w}}_{\tau} ) } \\
        &\leq\sum_{\tau=0}^{t-1}(\eta_c+\abs{u_{\tau+1}}) \left(  \norm{\Delta_{\tau}}+Q+\tilde{Q}\right)\label{lemma2_3}
    \end{aligned}
\end{equation}
Next, we focus on providing a bound for $\norm{\Delta_t}$. Recall that $\Delta_t = \nabla f(\textbf{x}_t) - \nabla \tilde{f}(\tilde{\textbf{x}}_t)$. The gradient can be written as \citep{nesterov},
\begin{equation*}
    \begin{aligned}
        \nabla f(\textbf{x}_t) &= \nabla f(\textbf{x}_{t-1}) +(\textbf{x}_t - \textbf{x}_{t-1})\left(\int_0^1 \textbf{H}(\textbf{x}_{t-1} +v(\textbf{x}_t - \textbf{x}_{t-1})) dv\right)  \\
        &\hspace{-1mm}= \nabla f(\textbf{x}_{t-1}) + (\textbf{x}_t - \textbf{x}_{t-1}) \left(\int_0^1 \big(\textbf{H}(\textbf{x}_{t-1} +v(\textbf{x}_t - \textbf{x}_{t-1})) +\textbf{H}(\textbf{x}_{t-1}) - \textbf{H}(\textbf{x}_{t-1}) \big)dv\right)  \\
        &\hspace{-1mm}= \nabla f(\textbf{x}_{t-1}) + \textbf{H}(\textbf{x}_{t-1})(\textbf{x}_t - \textbf{x}_{t-1}) + \theta_{t-1}, 
    \end{aligned}
\end{equation*}

where $\theta_{t-1} = \left(\int_0^1 \big(\textbf{H}(\textbf{x}_{t-1} +v(\textbf{x}_t - \textbf{x}_{t-1}))  - \textbf{H}(\textbf{x}_{t-1}) \big)dv\right) (\textbf{x}_t - \textbf{x}_{t-1})$. Let $H^{'}_{t-1} = \textbf{H}(\textbf{x}_{t-1}) - \textbf{H}(\textbf{x}_0)$. Using the SGD-PLRS update,
\begin{equation}
    \begin{aligned}
\nabla f(\textbf{x}_t) &= \nabla f(\textbf{x}_{t-1}) - (H_{t-1}^{'} + \textbf{H}(\textbf{x}_0)) (\eta_c \nabla f(\textbf{x}_{t-1}) + \textbf{w}_{t-1}) + \theta_{t-1}  \\
&= \nabla f(\textbf{x}_{t-1})(I-\eta_c \textbf{H}(\textbf{x}_0)) - \textbf{H}(\textbf{x}_0)\textbf{w}_{t-1} -\eta_c H^{'}_{t-1} \nabla f(\textbf{x}_{t-1}) - H^{'}_{t-1} \textbf{w}_{t-1} + \theta_{t-1},  \label{nablafxt}
    \end{aligned}
\end{equation}
 From \eqref{quadratic} in the proof of Lemma \ref{lemma:xt_x0}, 
\begin{equation}\label{nablaftilde}
\nabla \tilde{f}(\tilde{\textbf{x}}_t) = \nabla \tilde{f}(\tilde{\textbf{x}}_{t-1}) + \textbf{H}(\textbf{x}_0)(\tilde{\textbf{x}}_t - \tilde{\textbf{x}}_{t-1}). 
\end{equation}
Subtracting \eqref{nablaftilde} from \eqref{nablafxt}, we obtain $\Delta_t$ as,
\allowdisplaybreaks
\begin{equation}
    \begin{aligned}
\Delta_t &= \nabla f(\textbf{x}_{t-1})(I-\eta_c \textbf{H}(\textbf{x}_0)) - \textbf{H}(\textbf{x}_0)\textbf{w}_{t-1} -\eta_c H^{'}_{t-1} \nabla f(\textbf{x}_{t-1}) - H^{'}_{t-1} \textbf{w}_{t-1} + \theta_{t-1}  \\
&\hspace{24em}-  \nabla \tilde{f}(\tilde{\textbf{x}}_{t-1}) - \textbf{H}(\textbf{x}_0)(\tilde{\textbf{x}}_t- \tilde{\textbf{x}}_{t-1}) \\
 &= (I-\eta_c \textbf{H}(\textbf{x}_0))\Delta_{t-1} -\textbf{H}(\textbf{x}_0)\left(\textbf{w}_{t-1} - \tilde{\textbf{w}}_{t-1} \right)- H^{'}_{t-1}\big( \eta_c \Delta_{t-1}+  \eta_c \nabla \tilde{f}(\tilde{\textbf{x}}_{t-1}) \big)\\
 &\hspace{24em} -H^{'}_{t-1}\textbf{w}_{t-1}+ \theta_{t-1}, \label{deltat}
    \end{aligned}
\end{equation}
We now have an expression for $\Delta_t$. However, the derived expression is recursive and contains $\Delta_{t-1}$. We focus on eliminating the recursive dependence and obtain a stand-alone bound for $\norm{\Delta_t}$ $\forall t \leq T$. Now, we bound each of the five terms (we term them $T_1, \cdots, T_5$) of \eqref{deltat}. First, let us define the events,
\begin{align*}
    R_t &= \left\{\forall \tau\leq t,\quad \norm{\nabla \tilde{f}(\tilde{\textbf{x}}_{\tau}) }\leq \tilde{O}\left(\frac{1}{\sqrt{L_{max}}}\right), \quad \norm{\tilde{\textbf{x}}_{\tau} - \textbf{x}_0}\leq \tilde{O}\left(L_{max}^{3/8}\log\left(\frac{1}{L_{max}}\right)\right)\right\}\\
    C_t &=\left\{\forall \tau\leq t,\quad \norm{\Delta_{\tau}}\leq \mu  L_{max}^{3/8} \log\left(\frac{1}{L_{max}}\right)\right\}.
\end{align*} 
It can be seen that $R_t \subset R_{t-1}$ and $C_t \subset C_{t-1}$. Note that, from Lemma \ref{lemma:xt_x0}, we know the probabilistic characterization of $R_t$. We comment on the parameter $\mu$ later in the proof. Now, we derive bounds for each term of $\Delta_t$ \textit{conditioned} on the event $R_{t-1}\cap C_{t-1}$ for time $t\leq T=O\left(L_{max}^{-1/4}\right)$. 
\begin{equation}
    \begin{aligned}
        T_1: \quad \norm{(I-\eta_c \textbf{H}(\textbf{x}_0))\Delta_{t-1}}&\leq \norm{\Delta_{t-1}} + \norm{-\eta_c \textbf{H}(\textbf{x}_0)\Delta_{t-1}}\\
        &\leq \mu  L_{max}^{3/8} \log\left(\frac{1}{L_{max}}\right) + \tilde{O}\left(\mu  L_{max}^{11/8} \log\left(\frac{1}{L_{max}}\right)\right)\label{b1}\\
        &=\tilde{O}\left(\mu  L_{max}^{3/8} \log\left(\frac{1}{L_{max}}\right)\right),
    \end{aligned}
\end{equation}


where \eqref{b1} follows from the definition of event $C_{t-1}$. Note that the first term in \eqref{b1} governs the order of the expression (as $0 \leq L_{max} \leq 1$).
\begin{equation*}
    \begin{aligned}
        T_2: \quad \norm{\textbf{H}(\textbf{x}_0)\left(\textbf{w}_{t-1} - \tilde{\textbf{w}}_{t-1} \right)}&\leq \norm{\textbf{H}(\textbf{x}_0)}\norm{\textbf{w}_{t-1} - \tilde{\textbf{w}}_{t-1}}\\
        &\leq  \norm{\textbf{H}(\textbf{x}_0)}\left(  \abs{u_{\tau+1}+\eta_c}\left(Q+\tilde{Q} \right) +\abs{u_{\tau+1}}\norm{\Delta_{\tau}}\right)\\
        &\leq \tilde{O}(L_{max}) + \tilde{O}\left(\mu  L_{max}^{11/8} \log\left(\frac{1}{L_{max}}\right)\right)=\tilde{O}(L_{max}),
    \end{aligned}
\end{equation*}
where the substitution follows from \eqref{lemma2_2}. 
To bound $T_3$ and $T_4$, we first bound $H^{'}_{t-1}$,
\begin{subequations}
    \begin{align}
        \norm{H^{'}_{t-1}}&=\norm{\textbf{H}(\textbf{x}_{t-1})-\textbf{H}(\textbf{x}_0)}\leq \rho \norm{\textbf{x}_{t-1} -\textbf{x}_0}\label{assume_a2}\\
        &\leq \rho \left( \norm{\textbf{x}_{t-1} -\tilde{\textbf{x}}_{t-1}}+\norm{\tilde{\textbf{x}}_{t-1}-\textbf{x}_0}\right) \nonumber\\
        &\leq \rho \left( \sum_{\tau=0}^{t-1}(\eta_c+\abs{u_{\tau+1}}) \left(  \norm{\Delta_{\tau}}+Q+\tilde{Q}\right)\right)+\rho \tilde{O}\left( L_{max}^{3/8}\log \frac{1}{L_{max}} \right) \label{lemma2_5}\\
        &=\tilde{O}\left(\frac{1}{L_{max}^{1/4}}\right)\tilde{O}\left(\mu L_{max}^{11/8}\log\frac{1}{L_{max}} \right) + \tilde{O}\left(\frac{1}{L_{max}^{1/4}}\right)\tilde{O}(L_{max}) +\tilde{O}\left( L_{max}^{3/8}\log \frac{1}{L_{max}} \right)\label{lemma2_6}\\
        &\leq \tilde{O}(L_{max}^{3/4}) +\tilde{O}\left( L_{max}^{3/8}\log \frac{1}{L_{max}} \right)\leq \tilde{O}\left( L_{max}^{3/8}\log \frac{1}{L_{max}} \right)\label{lemma2_8},
    \end{align}
\end{subequations}
where \eqref{assume_a2} follows from the assumption \hyperlink{hyper:a2}{\textbf{A2}} while
 \eqref{lemma2_5} follows from \eqref{lemma2_3}. We use the bounds defined for events $R_{t-1}\cap C_{t-1}$ in \eqref{lemma2_5} and \eqref{lemma2_6}. Now, using the bound for $\norm{H^{'}_{t-1}}$, $T_3$ can be bounded as follows.
\begin{equation*}
    \begin{aligned}
        T_3 :  \quad \norm{H^{'}_{t-1}\eta_c(\Delta_{t-1}+\nabla \tilde{f}(\tilde{\textbf{x}}_{t-1}))} &\leq \eta_c \norm{H^{'}_{t-1}\Delta_{t-1}}+\eta_c \norm{H^{'}_{t-1}\nabla \tilde{f}(\tilde{\textbf{x}}_{t-1})}\\
        &\leq O(L_{max})\tilde{O}\left( L_{max}^{3/8}\log \frac{1}{L_{max}} \right)\mu L_{max}^{3/8}\log\frac{1}{L_{max}} \nonumber\\
        &+O(L_{max}) \tilde{O}\left( L_{max}^{3/8}\log \frac{1}{L_{max}} \right)\tilde{O}\left(\frac{1}{\sqrt{L_{max}}}\right)\label{lemma2_9}\\
        &=\tilde{O}\left( L_{max}^{7/8}\log \frac{1}{L_{max}} \right),
    \end{aligned}
\end{equation*}
where we use the bounds in the event $R_{t-1}\cap C_{t-1}$ and \eqref{lemma2_8}.
\begin{subequations}
    \begin{align}
        T_4: \quad \norm{H^{'}_{t-1}\textbf{w}_{t-1}}&\leq \norm{H^{'}_{t-1}}\norm{\textbf{w}_{t-1}} =\norm{H^{'}_{t-1}}\norm{\eta_c g(\textbf{x}_{t-1})-\eta_c \nabla f(\textbf{x}_{t-1} +u_t g(\textbf{x}_t)}\nonumber\\
        &\leq \norm{H^{'}_{t-1}}\left(\eta_cQ+\abs{u_t}Q +\abs{u_t}\norm{\nabla f(\textbf{x}_{t-1})} \right)\label{lemma2_11}\\
        &=(\eta_c+\abs{u_t})Q\norm{H^{'}_{t-1}} +\abs{u_t} \norm{H^{'}_{t-1}}\norm{\Delta_{t-1}} +\abs{u_t} \norm{H^{'}_{t-1}}\norm{\nabla \tilde{f}(\tilde{\textbf{x}}_{t-1})}\nonumber\\
        &=\tilde{O}\left( L_{max}^{11/8}\log \frac{1}{L_{max}} \right)+\tilde{O}\left(\mu L_{max}^{14/8}\log^2\frac{1}{L_{max}} \right)+\tilde{O}\left( L_{max}^{7/8}\log \frac{1}{L_{max}} \right)\label{lemma2_12}\\
        &=\tilde{O}\left( L_{max}^{7/8}\log \frac{1}{L_{max}} \right) \nonumber,
    \end{align}
\end{subequations}
where we use assumption \hyperlink{hyper:a3}{\textbf{A3}} in \eqref{lemma2_11} and the bounds of  $R_{t-1}\cap C_{t-1}$ and \eqref{lemma2_8} in \eqref{lemma2_12}.
\begin{subequations}
    \begin{align}
        T_5: \quad \norm{\theta_{t-1}}&= \norm{\left(\int_0^1 \big(\textbf{H}(\textbf{x}_{t-1} +v(\textbf{x}_t - \textbf{x}_{t-1}))  - \textbf{H}(\textbf{x}_{t-1}) \big) \; dv \right) \; (\textbf{x}_t - \textbf{x}_{t-1})}\nonumber \\
&\leq \left(\int_0^1 \rho \norm{ \textbf{x}_{t-1} +v(\textbf{x}_t - \textbf{x}_{t-1}) - \textbf{x}_{t-1}} \; dv \right)\; \norm{\textbf{x}_t - \textbf{x}_{t-1}} \label{theta1}\\
&\leq \frac{\rho}{2} \norm{ \textbf{x}_t - \textbf{x}_{t-1}}^2 \leq \frac{\rho}{2} \norm{ -\eta_c \nabla f(\textbf{x}_{t-1}) -\textbf{w}_{t-1}}^2 \nonumber\\
   &\leq\frac{\rho}{2} \norm{ -\eta_c \nabla f(\textbf{x}_{t-1}) -\eta_c g(\textbf{x}_{t-1})+\eta_c\nabla f(\textbf{x}_{t-1}) -u_t g(\textbf{x}_{t-1})}^2 \nonumber\\
&\leq \frac{\rho \abs{\eta_c +u_t}^2}{2} \left(Q^2 + \norm{\nabla f(\textbf{x}_{t-1})}^2 +2Q \norm{\nabla f(\textbf{x}_{t-1})} \right)\nonumber\\
&=\frac{\rho \abs{\eta_c +u_t}^2}{2} \left( Q^2 + \norm{\Delta_{t-1}}^2 +  \norm{\nabla\tilde{f}(\tilde{\textbf{x}}_{t-1})}^2 +2\norm{\Delta_{t-1}}\norm{\nabla\tilde{f}(\tilde{\textbf{x}}_{t-1})} \right.\nonumber\\
&\left.\hspace{19em}+2 Q\norm{\Delta_{t-1}} + 2Q\norm{\nabla\tilde{f}(\tilde{\textbf{x}}_{t-1})}\right)\nonumber\\
&=\tilde{O}(L_{max}^2) +  \tilde{O}\left(\mu^2 L_{max}^{11/4}\log^2\frac{1}{L_{max}} \right)+\tilde{O}(L_{max}) +\tilde{O}\left(\mu L_{max}^{15/8}\log\frac{1}{L_{max}} \right)\nonumber\\
& +\tilde{O}\left(\mu L_{max}^{19/8}\log\frac{1}{L_{max}} \right) + \tilde{O}(L_{max}^{3/2})=\tilde{O}(L_{max}).   \label{theta2}
    \end{align}
\end{subequations}
Here, we use assumption \hyperlink{hyper:a3}{\textbf{A3}} and the bounds of the event $R_{t-1}\cap C_{t-1}$ in \eqref{theta2}. Note that we have derived bounds so far conditioned on the event $R_{t-1}\cap C_{t-1}$. We now include this conditioning explicitly in our notations going forward.

To characterize $\norm{\Delta_t}^2$, we construct a supermartingale process; and to do so, we focus on finding $\mathbb{E}[\norm{\Delta_t}^2\mathbf{1}_{R_{t-1}\cap C_{t-1}}]$ using the bounds derived for the terms $T_1, \cdots, T_5$. Later, we use the Azuma-Hoeffding inequality to obtain a probabilistic bound of $\norm{\Delta_t}$. 
\begin{equation}
    \begin{aligned}
        \mathbb{E}[\norm{\Delta_t}^2\mathbf{1}_{R_{t-1}\cap C_{t-1}}|S_{t-1}]&\leq 
        \Bigg[(1+\eta_c\gamma_o)^2\norm{\Delta_{t-1}}^2+\tilde{O}\left(\mu  L_{max}^{3/8} \log\frac{1}{L_{max}}\right)\tilde{O}\left( L_{max}^{7/8}\log \frac{1}{L_{max}} \right)\\
        &+\tilde{O}\left(\mu  L_{max}^{3/8} \log\frac{1}{L_{max}}\right)\tilde{O}(L_{max})+\tilde{O}(L_{max}^2)\\
        &+\tilde{O}\left( L_{max}^{7/8}\log \frac{1}{L_{max}} \right)\tilde{O}(L_{max})+\tilde{O}\left( L_{max}^{7/4}\log^2 \frac{1}{L_{max}} \right)\Bigg]\mathbf{1}_{R_{t-1}\cap C_{t-1}}\\
        &\leq \Bigg[(1+\eta_c\gamma_o)^2\norm{\Delta_{t-1}}^2 +\tilde{O}\left(\mu  L_{max}^{7/8} \log\frac{1}{L_{max}}\right)\Bigg]\mathbf{1}_{R_{t-1}\cap C_{t-1}} \label{lemma2_19}
    \end{aligned}
\end{equation}

Now, let 
\begin{equation}
    G_t = (1+\eta_c\gamma_o)^{-2t}\left[\norm{\Delta_{t}}^2 + \tilde{O}\left(\mu  L_{max}^{7/8} \log\frac{1}{L_{max}}\right)\right].\label{process_gt}
\end{equation} Now, in order to prove the process $G_t\mathbf{1}_{R_{t-1}\cap C_{t-1}}$ is a supermartingale, we prove that $\mathbb{E}[G_t\mathbf{1}_{R_{t-1}\cap C_{t-1}}|S_{t-1}]\leq G_{t-1}\mathbf{1}_{R_{t-2}\cap C_{t-2}}$. We define a filtration $S_t=s\{\textbf{w}_0,\hdots,\textbf{w}_{t-1}\}$ where $s\{.\}$ denotes a sigma-algebra field.
\begin{subequations}
    \begin{align}
        &\mathbb{E}[G_t\mathbf{1}_{R_{t-1}\cap C_{t-1}}|S_{t-1}] \nonumber\\
        &\leq (1+\eta_c\gamma_o)^{-2t} \left((1+\eta_c\gamma_o)^2\norm{\Delta_{t-1}}^2+2\tilde{O}\left(\mu  L_{max}^{7/8} \log\frac{1}{L_{max}}\right)\right)\mathbf{1}_{R_{t-1}\cap C_{t-1}}\label{lemma2_20a}\\
        &\leq (1+\eta_c\gamma_o)^{-2t} \left((1+\eta_c\gamma_o)^2\norm{\Delta_{t-1}}^2+2(1+\eta_c\gamma_o)^2\tilde{O}\left(\mu  L_{max}^{7/8} \log\frac{1}{L_{max}}\right)\right)\mathbf{1}_{R_{t-1}\cap C_{t-1}}\label{lemma2_20}\\
        &=(1+\eta_c\gamma_o)^{-2(t-1)} \left( \norm{\Delta_{t-1}}^2+ \tilde{O}\left(\mu  L_{max}^{7/8} \log\frac{1}{L_{max}}\right)\right)\mathbf{1}_{R_{t-1}\cap C_{t-1}}\nonumber\\
        &=G_{t-1}\mathbf{1}_{R_{t-1}\cap C_{t-1}}\leq G_{t-1}\mathbf{1}_{R_{t-2}\cap C_{t-2}}\nonumber.
    \end{align}
\end{subequations}
To obtain \eqref{lemma2_20a}, we use \eqref{lemma2_19} to find $\mathbb{E}[G_t\mathbf{1}_{R_{t-1}\cap C_{t-1}}|S_{t-1}]$.
In \eqref{lemma2_20}, we upper bound by the multiplication of a positive term $(1+\eta_c\gamma_o)^2$.
Therefore, $G_t\mathbf{1}_{R_{t-1}\cap C_{t-1}}$ is a supermartingale.
\begin{equation*}
    \begin{aligned}
        \norm{\Delta_t}^2&-\mathbb{E}[\norm{\Delta_t}^2|S_{t-1}]\mathbf{1}_{R_{t-1}\cap C_{t-1}}\leq -2 \norm{(I-\eta_c \textbf{H}(\textbf{x}_0))\Delta_{t-1}}\norm{\textbf{H}(\textbf{x}_0)\left(\textbf{w}_{t-1} - \tilde{\textbf{w}}_{t-1} \right)}\\
        &-2\norm{(I-\eta_c \textbf{H}(\textbf{x}_0))\Delta_{t-1}}\norm{H^{'}_{t-1}\textbf{w}_{t-1}}+2\norm{(I-\eta_c \textbf{H}(\textbf{x}_0))\Delta_{t-1}}\norm{\theta_{t-1}}\\
        &+\norm{\textbf{H}(\textbf{x}_0)\left(\textbf{w}_{t-1} - \tilde{\textbf{w}}_{t-1} \right)}^2+\norm{H^{'}_{t-1}\textbf{w}_{t-1}}^2+2\norm{\textbf{H}(\textbf{x}_0)\left(\textbf{w}_{t-1} - \tilde{\textbf{w}}_{t-1} \right)}\norm{H^{'}_{t-1}\textbf{w}_{t-1}}\\
        &+2\norm{\textbf{H}(\textbf{x}_0)\left(\textbf{w}_{t-1} - \tilde{\textbf{w}}_{t-1} \right)}\norm{H^{'}_{t-1}\big( \eta_c \Delta_{t-1}+  \eta_c \nabla \tilde{f}(\tilde{\textbf{x}}_{t-1})  \big)}\\
        &-2\norm{\textbf{H}(\textbf{x}_0)\left(\textbf{w}_{t-1} - \tilde{\textbf{w}}_{t-1} \right)}\norm{\theta_{t-1}}+2\norm{H^{'}_{t-1}\big( \eta_c \Delta_{t-1}+  \eta_c \nabla \tilde{f}(\tilde{\textbf{x}}_{t-1})  \big)}\norm{H^{'}_{t-1}\textbf{w}_{t-1}}\\
        &-2\norm{H^{'}_{t-1}\big( \eta_c \Delta_{t-1}+  \eta_c \nabla \tilde{f}(\tilde{\textbf{x}}_{t-1})  \big)}\norm{\theta_{t-1}}-2\norm{H^{'}_{t-1}\textbf{w}_{t-1}}\norm{\theta_{t-1}}+\norm{\theta_{t-1}}^2\\
        &=\tilde{O}\left(\mu  L_{max}^{11/8} \log\frac{1}{L_{max}}\right)+\tilde{O}\left(\mu  L_{max}^{10/8} \log^2\frac{1}{L_{max}}\right)+\tilde{O}(L_{max}^2)+\tilde{O}\left( L_{max}^{15/8}\log \frac{1}{L_{max}} \right)\nonumber\\
        &+\tilde{O}\left( L_{max}^{7/4}\log^2 \frac{1}{L_{max}} \right)\leq \tilde{O}\left( \mu L_{max}^{7/8}\log \frac{1}{L_{max}} \right)
    \end{aligned}
\end{equation*}
Note that the above expression is obtained by the observation that the only random terms of $\Delta_t$ conditioned on the filtration $S_{t-1} = s\{\textbf{w}_0,\textbf{w}_1,\hdots,\textbf{w}_{t-2}\}$ are $\textbf{H}(\textbf{x}_0)\left(\textbf{w}_{t-1} - \tilde{\textbf{w}}_{t-1} \right)$, $H^{'}_{t-1}\textbf{w}_{t-1}$ and $\theta_{t-1}$(see \eqref{theta1}). 
Hence, we cancel out the deterministic terms in $\norm{\Delta_t}^2$ and $\mathbb{E}\norm{\Delta_t}^2$ and neglect the negative terms while upper bounding.

The Azuma-Hoeffding inequality for martingales and supermartingales \citep{hoeffding1994probability} states that if $\{G_t \mathbf{1}_{R_{t-1}\cap C_{t-1}}\}$ is a supermartingale and $|G_t\mathbf{1}_{R_{t-1}\cap C_{t-1}} -G_{t-1}\mathbf{1}_{R_{t-2}\cap C_{t-2}}| \leq c_t$ almost surely, then for all positive integers $t$ and positive reals $\delta$,
\begin{equation*}
    \mathbb{P}(G_t \mathbf{1}_{R_{t-1}\cap C_{t-1}}-G_0 \mathbf{1}_{R_{-1}\cap C_{-1}}  \geq \delta)\leq \exp\left(-\frac{\delta^2}{2\sum_{\tau=0}^{t-1} c_{\tau}^2}\right).
\end{equation*}
The bound of $|G_t\mathbf{1}_{R_{t-1}\cap C_{t-1}} -G_{t-1}\mathbf{1}_{R_{t-2}\cap C_{t-2}}|$ can be obtained using the definition of the process $G_t$ in \eqref{process_gt}. Recollecting our assumption that $\eta_c\leq\eta_{max}\leq \frac{\sqrt{2}-1}{\gamma^{'}}, \gamma_o\leq \gamma^{'}$, we see that $(1+\eta_c\gamma_o)^{-2t}\leq\tilde{O}(1)$. Therefore,
\begin{equation*}
    \begin{aligned}
|G_t\mathbf{1}_{R_{t-1}\cap C_{t-1}} - \mathbb{E}[G_t\mathbf{1}_{R_{t-1}\cap C_{t-1}}|S_{t-1}]|&=(1+\eta_c\gamma_o)^{-2t}\left| \norm{\Delta_t}^2 -\mathbb{E}[\norm{\Delta_t}^2|S_{t-1}]\right| \mathbf{1}_{R_{t-1}\cap C_{t-1}} \\
&\leq \tilde{O}\left( \mu L_{max}^{7/8}\log \frac{1}{L_{max}}\right).
    \end{aligned}
\end{equation*}
\allowdisplaybreaks
We denote the bound obtained for $|G_t\mathbf{1}_{R_{t-1}\cap C_{t-1}} - \mathbb{E}[G_t\mathbf{1}_{R_{t-1}\cap C_{t-1}}|S_{t-1}]|$ as $c_{t-1}$.
Now, let $\delta = \sqrt{\sum_{\tau =0}^{t-1} c_{\tau}^2} \log\frac{1}{L_{max}}$ in the Azuma-Hoeffding inequality. Now, for any $t\leq T=O\left(L_{max}^{-1/4}\right)$, 
  $\delta = \sqrt{O\left(\frac{1}{L_{max}^{1/4}}\right)\tilde{O}\left(\mu^2 L_{max}^{7/4}\log^2\frac{1}{L_{max}}\right)} \log\frac{1}{L_{max}} = \tilde{O}\left(\mu L_{max}^{3/4}\log^2\frac{1}{L_{max}}\right).$  
\begin{equation*}
    \begin{aligned}
\mathbb{P}\left(G_t\mathbf{1}_{R_{t-1}\cap C_{t-1}} -G_0.1 \geq \tilde{O}\left(\mu L_{max}^{3/4}\log^2\frac{1}{L_{max}}\right) \right) &\leq \exp\left(-\tilde{\Omega}\left(\log^2\frac{1}{L_{max}}\right)\right) \\
& \leq \tilde{O}(L_{max}^4).
    \end{aligned}
\end{equation*}
After taking union bound $\forall \; t\leq T$,
\begin{equation*}
 \mathbb{P}\left(\forall \; t\leq T,\;G_t \mathbf{1}_{R_{t-1}\cap C_{t-1}}-G_0 \geq \tilde{O}\left(\mu L_{max}^{3/4}\log^2\frac{1}{L_{max}}\right) \right) \leq  \tilde{O}(L_{max}^{15/4}).  
\end{equation*}

 We represent the hidden constants in $\tilde{O}\left(\mu L_{max}^{3/4}\log^2\frac{1}{L_{max}}\right)$ by $\tilde{c}$ and choose $\mu$ such that $\mu <\tilde{c}$. Then, the following equation holds true.

\begin{equation*}
\begin{aligned}
\mathbb{P}\left(G_t \mathbf{1}_{R_{t-1}\cap C_{t-1}} -G_0\geq \mu^2L_{max}^{3/4}\log^2\frac{1}{L_{max}}\right)\leq \tilde{O}(L_{max}^{15/4}).\\
\end{aligned}
\end{equation*}
Hence we can write,
\begin{equation}
\begin{aligned}
\mathbb{P}\left(R_{t-1}\cap C_{t-1} \cap \left\{\norm{\Delta_t}\geq \mu L_{max}^{3/8}\log\frac{1}{L_{max}}\right\}\right)\leq \tilde{O}(L_{max}^{15/4}). \label{lemm11}
\end{aligned}
\end{equation}
We need the probability of the event $C_t,$ $\forall t\leq T$ in order to prove the lemma. From Lemma \ref{lemma:xt_x0}, we get the probability of the event $\Bar{R_t}$ as $\tilde{O}(L_{max}^{15/4})$. Then,
\begin{equation}
    \begin{aligned}
\mathbb{P}\left( C_{t-1} \cap \left\{\norm{\Delta_t}\geq \mu L_{max}^{3/8}\log\frac{1}{L_{max}}\right\}\right) &= \mathbb{P}\left(R_{t-1}\cap C_{t-1} \cap \left\{\norm{\Delta_t}\geq \mu L_{max}^{3/8}\log\frac{1}{L_{max}}\right\}\right)  \\
&+ \mathbb{P}\left(\bar{R}_{t-1}\cap C_{t-1} \cap \left\{\norm{\Delta_t}\geq \mu L_{max}^{3/8}\log\frac{1}{L_{max}}\right\}\right) \\
&\leq \tilde{O}(L_{max}^{15/4}) + \mathbb{P}(\bar{R}_{t-1}) \label{lemm10}\leq \tilde{O}(L_{max}^{15/4}),
    \end{aligned}
\end{equation}
where the first term of \eqref{lemm10} follows from \eqref{lemm11}. The second term of \eqref{lemm10} can be bounded by $\mathbb{P}(\Bar{R}_{t-1})$ which is known by Lemma \ref{lemma:xt_x0}.
Finally,
\begin{equation*}
    \begin{aligned}
\mathbb{P}(\Bar{C}_t) &= \mathbb{P}\left( C_{t-1} \cap \left\{\norm{\Delta_t}\geq \mu L_{max}^{3/8}\log\frac{1}{L_{max}}\right\}\right) + \mathbb{P}(\bar{C}_{t-1}) \leq \tilde{O}(L_{max}^{15/4}) + \mathbb{P}(\bar{C}_{t-1}) .
    \end{aligned}
\end{equation*}
The probability $\mathbb{P}(\bar{C}_{t-1})$ can be found as,
\begin{equation*}
\begin{aligned}
\mathbb{P}(\bar{C}_{t-1}) &= \mathbb{P}\left( C_{t-2} \cap \left\{\norm{\Delta_{t-1}}\geq \mu L_{max}^{3/8}\log\frac{1}{L_{max}}\right\}\right) + \mathbb{P}(\bar{C}_{t-2}) \\
&=\mathbb{P}\left( C_{t-2} \cap \left\{\norm{\Delta_{t-1}}\geq \mu L_{max}^{3/8}\log\frac{1}{L_{max}}\right\}\right) +\hdots \\
&+\mathbb{P}\left( C_{0} \cap \left\{\norm{\Delta_1}\geq \mu L_{max}^{3/8}\log\frac{1}{L_{max}}\right\}\right) + \mathbb{P}(\bar{C}_0) .
\end{aligned}
\end{equation*}
As $T= O\left(L_{max}^{-1/4}\right)$, $\mathbb{P}(\Bar{C}_T)\leq \tilde{O}\left(L_{max}^{7/2}\right)$. From \eqref{lemma2_3},
\begin{equation*}
    \begin{aligned}
        \norm{\textbf{x}_t - \tilde{\textbf{x}}_t}&\leq\sum_{\tau=0}^{t-1}(\eta_c+\abs{u_{\tau+1}}) \left(  \norm{\Delta_{\tau}}+Q+\tilde{Q}\right)\\
        &\leq O\left(\frac{1}{L_{max}^{1/4}}\right)\left( \tilde{O}(L_{max}) \mu L_{max}^{3/8}\log\frac{1}{L_{max}}+\tilde{O}(L_{max})\right) \\
        &=O\left(\mu L_{max}^{9/8}\log\frac{1}{L_{max}}\right)+\tilde{O}(L_{max}^{3/4})\leq \tilde{O}(L_{max}^{3/4})
    \end{aligned}
\end{equation*}
This completes our proof. 
\end{proof}
\section{Proof of Theorem \ref{th:2}}
\label{ap:th2proof}

\begin{theorem}\label{th:2restated}(Theorem \ref{th:2} restated)
Consider $f$ satisfying Assumptions \hyperlink{hyper:a1}{\textbf{A1}} - \hyperlink{hyper:a5}{\textbf{A5}}. Let $\tilde{f}$ be the second order Taylor approximation of $f$; let $\{\textbf{x}_t\}$ and $\{\tilde{\textbf{x}}_t\}$ be the corresponding SGD iterates using PLRS, with $\tilde{\textbf{x}}_0=\textbf{x}_0$. Let $\textbf{x}_0$ correspond to \hyperlink{hyper:b2}{\textbf{B2}}, i.e., $\norm{\nabla f(\textbf{x}_0)}\leq \epsilon$ and $ \lambda_{min}(\textbf{H}(\textbf{x}_0)) \leq  -\gamma$ where $\epsilon,\gamma>0$. Then, there exists a $T=\tilde{O}\left(L_{max}^{-1/4}\right)$ such that with probability at least $1-\tilde{O}\left(L_{max}^{7/2}\right)$,
\[\mathbb{E}[f(\textbf{x}_T) - f(\textbf{x}_0)] \leq - \tilde{\Omega}\left(L_{max}^{3/4}\right).\]
\end{theorem}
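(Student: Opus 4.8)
The plan is to start from the cubic Taylor expansion \eqref{sketch1} of $f$ about the saddle point $\textbf{x}_0$, take expectations, and use the decomposition $\textbf{x}_T-\textbf{x}_0=(\tilde{\textbf{x}}_T-\textbf{x}_0)+(\textbf{x}_T-\tilde{\textbf{x}}_T)$ already set up in the sketch. First I would introduce the event $\mathcal{E}$ on which the conclusions of Lemma \ref{lemma:xt_x0} and Lemma \ref{lemma:xt_tildext} hold simultaneously, so that $\mathbb{P}(\mathcal{E}^c)\leq\tilde{O}(L_{max}^{7/2})$ by a union bound. On $\mathcal{E}^c$ I would invoke Assumption \hyperlink{hyper:a5}{\textbf{A5}} to get $|f(\textbf{x}_T)-f(\textbf{x}_0)|\leq 2B$, so the bad event contributes only $2B\cdot\tilde{O}(L_{max}^{7/2})$ to $\mathbb{E}[f(\textbf{x}_T)-f(\textbf{x}_0)]$, which is negligible against the target $\tilde{\Omega}(L_{max}^{3/4})$; this is precisely where \textbf{A5} is needed. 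It then remains to control $\mathbb{E}[(f(\textbf{x}_T)-f(\textbf{x}_0))\mathbf{1}_{\mathcal{E}}]$.

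Next I would dispatch every term of \eqref{sketch1} except the ``escape'' term. Using $\norm{\nabla f(\textbf{x}_0)}\leq\epsilon$ together with $\norm{\tilde{\textbf{x}}_T-\textbf{x}_0}\leq\tilde{O}(L_{max}^{3/8}\log(1/L_{max}))$ from Lemma \ref{lemma:xt_x0} and $\norm{\textbf{x}_T-\tilde{\textbf{x}}_T}\leq\tilde{O}(L_{max}^{3/4})$ from Lemma \ref{lemma:xt_tildext}, the linear term $\nabla f(\textbf{x}_0)^T(\textbf{x}_T-\textbf{x}_0)$ and the cubic term $\tfrac{\rho}{6}\norm{\textbf{x}_T-\textbf{x}_0}^3$ are of strictly higher order in $L_{max}$; and splitting the quadratic form into the $\tilde{\textbf{x}}_T-\textbf{x}_0$ block, the $\textbf{x}_T-\tilde{\textbf{x}}_T$ block, and their cross term, the last two are bounded by $\norm{\textbf{H}(\textbf{x}_0)}\leq\beta$ (Assumption \hyperlink{hyper:a1}{\textbf{A1}}) and the two lemma bounds, again higher order. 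Hence it suffices to show $\tfrac12\mathbb{E}[\mathcal{Y}\,\mathbf{1}_{\mathcal{E}}]\leq-\tilde{\Omega}(L_{max}^{3/4})$, where $\mathcal{Y}=(\tilde{\textbf{x}}_T-\textbf{x}_0)^T\textbf{H}(\textbf{x}_0)(\tilde{\textbf{x}}_T-\textbf{x}_0)$.

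For the main term I would use the closed form \eqref{xt_x0}, writing $\tilde{\textbf{x}}_T-\textbf{x}_0=-\eta_c\big(\sum_{\tau=0}^{T-1}(I-\eta_c\textbf{H}(\textbf{x}_0))^{\tau}\big)\nabla f(\textbf{x}_0)-\textbf{S}_T$ with $\textbf{S}_T=\sum_{\tau=0}^{T-1}(I-\eta_c\textbf{H}(\textbf{x}_0))^{T-\tau-1}\tilde{\textbf{w}}_{\tau}$, and diagonalising $\textbf{H}(\textbf{x}_0)=\sum_{i=1}^{d}\lambda_i\,\textbf{e}_i\textbf{e}_i^T$. Since $\mathbb{E}[\tilde{\textbf{w}}_{\tau}\mid S_{\tau}]=\mathbf{0}$ (Lemma \ref{lemma:zeromean}), the deterministic--stochastic cross term vanishes in expectation and the purely deterministic block is $O(\epsilon^2)$, so up to higher-order terms $\mathbb{E}[\mathcal{Y}]=\sum_{i=1}^{d}\lambda_i\sum_{\tau=0}^{T-1}(1-\eta_c\lambda_i)^{2(T-\tau-1)}\mathbb{E}[w_{\tau,i}^2]$ with $w_{\tau,i}=\textbf{e}_i^T\tilde{\textbf{w}}_{\tau}$. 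For indices with $\lambda_i>0$ the geometric weights are at most $1$, so those terms are bounded by $\beta\sum_{\tau}\mathbb{E}[\norm{\tilde{\textbf{w}}_{\tau}}^2]$; for $\lambda_i\in(-\gamma,0]$ the expansion factor is $\tilde{O}(1)$; all of these are higher order. The decisive contribution comes from the escape direction $\textbf{e}_{i^\star}$ with $\lambda_{i^\star}=\lambda_{min}(\textbf{H}(\textbf{x}_0))\leq-\gamma$, guaranteed by case \hyperlink{hyper:b2}{\textbf{B2}}: its term $\lambda_{i^\star}\sum_{\tau=0}^{T-1}(1+\eta_c\gamma)^{2(T-\tau-1)}\mathbb{E}[w_{\tau,i^\star}^2]$ is negative, and since the weights form an increasing geometric sequence its magnitude is large. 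To lower bound $\mathbb{E}[w_{\tau,i^\star}^2]$ I would use independence of $u_{\tau+1}$ from the stochastic gradient together with $\mathbb{E}[g(\textbf{x}_\tau)-\nabla f(\textbf{x}_\tau)]=\mathbf{0}$ to obtain $\mathbb{E}[w_{\tau,i^\star}^2\mid S_\tau]=\mathbb{E}[(\eta_c+u_{\tau+1})^2]\,\mathbb{E}[(\tilde g(\tilde{\textbf{x}}_\tau)-\nabla\tilde f(\tilde{\textbf{x}}_\tau))_{i^\star}^2]+\mathbb{E}[u_{\tau+1}^2]\,(\nabla\tilde f(\tilde{\textbf{x}}_\tau))_{i^\star}^2$, where both summands are nonnegative and the first is $\tilde{\Omega}(L_{max}^2)$. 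Summing over $\tau$, multiplying by $\lambda_{i^\star}\leq-\gamma$, and choosing $T=\tilde{O}(L_{max}^{-1/4})$ to balance the geometric growth against the escape window then yields $\mathbb{E}[\mathcal{Y}\,\mathbf{1}_{\mathcal{E}}]\leq-\tilde{\Omega}(L_{max}^{3/4})$, which combined with Steps~1--2 closes the argument.

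I expect the last estimate to be the main obstacle. The difficulty is that the noise $\tilde{\textbf{w}}_{\tau}$ is \emph{multiplicative} and iterate-dependent, so one must show that a non-negligible amount of noise mass is injected along $\textbf{e}_{i^\star}$ at every step even though the initial gradient may have no component in that direction (the ``seed then amplify'' mechanism referred to in the sketch), and that the single negative-curvature block provably dominates \emph{all} positive-curvature blocks, the Taylor-approximation discrepancy between $\textbf{x}_T$ and $\tilde{\textbf{x}}_T$, and the residual failure probability $\tilde{O}(L_{max}^{7/2})$, uniformly for $t\leq T$. This is exactly why the precise exponent bookkeeping in Lemmas \ref{lemma:xt_x0} and \ref{lemma:xt_tildext} and the specific scaling $T=\tilde{O}(L_{max}^{-1/4})$ are required.
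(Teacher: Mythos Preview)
Your outline follows the paper's proof almost exactly: cubic Taylor expansion at $\textbf{x}_0$, the split $\textbf{x}_T-\textbf{x}_0=(\tilde{\textbf{x}}_T-\textbf{x}_0)+(\textbf{x}_T-\tilde{\textbf{x}}_T)$, bounding the residual blocks via Lemmas~\ref{lemma:xt_x0} and~\ref{lemma:xt_tildext}, diagonalising the dominant quadratic $\mathcal{Y}$ in the eigenbasis of $\textbf{H}(\textbf{x}_0)$, and using boundedness (\hyperlink{hyper:a5}{\textbf{A5}}) on the bad event.

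The one technical point on which the paper differs from your plan is the good/bad-event split. You propose to bound $\mathbb{E}[\mathcal{Y}\,\mathbf{1}_{\mathcal{E}}]$ directly and then invoke $\mathbb{E}[\tilde{\textbf{w}}_\tau\mid S_\tau]=\mathbf{0}$ to kill the cross terms; but $\mathcal{E}$ depends on the entire noise path $\{\tilde{\textbf{w}}_0,\dots,\tilde{\textbf{w}}_{T-1}\}$, so the indicator $\mathbf{1}_{\mathcal{E}}$ is not $S_\tau$-measurable and the zero-mean property need not survive the conditioning. The paper sidesteps this by writing $\mathbb{E}[f(\textbf{x}_T)-f(\textbf{x}_0)]\leq K_1+K_2+K_3$ with $K_1=\mathbb{E}[\tilde{\zeta}]$ taken \emph{unconditionally} (so Lemma~\ref{lemma:zeromean} applies cleanly to the diagonalised sum), $K_2=\mathbb{E}[\zeta\,\mathbf{1}_{C_T}]$ containing only the already-small residual blocks, and $K_3=\mathbb{E}[(f(\textbf{x}_T)-f(\textbf{x}_0))\mathbf{1}_{\bar C_T}]-\mathbb{E}[\tilde{\zeta}\,\mathbf{1}_{\bar C_T}]$ absorbing the correction at cost $\tilde{O}(1)\cdot\mathbb{P}(\bar C_T)$. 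Your computation of ``$\mathbb{E}[\mathcal{Y}]$'' in Step~3 is effectively this $K_1$, so the fix is just to make that decoupling explicit rather than carrying $\mathbf{1}_{\mathcal{E}}$ through the martingale step. Apart from this bookkeeping, your identification of the escape-direction term and the need for a lower bound on $\mathbb{E}[w_{\tau,i^\star}^2]$ is actually more explicit than the paper's treatment, which folds that step into the choice of $T$ via the two-sided estimate \eqref{theorem2_4}.
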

\begin{proof}

In this proof, we consider the case when the initial iterate $\textbf{x}_0$ is at a saddle point (corresponding to \hyperlink{hyper:b2}{\textbf{B2}}). This theorem shows that the SGD-PLRS algorithm escapes the saddle point in $T$ steps where $T=\tilde{O}\left(L_{max}^{-1/4}\right)$.

We use the Taylor series approximation in order to make the problem tractable. Similar to the SGD-PLRS updates for the function $f$, the SGD update on the function $\tilde{f}$ can be given as,
\begin{equation*}
    \tilde{\textbf{x}}_{t} = \tilde{\textbf{x}}_{t-1} - \eta_c \nabla \tilde{f}(\tilde{\textbf{x}}_{t-1})-\tilde{\textbf{w}}_{t-1} ,\quad\tilde{\textbf{w}}_{t-1} = \eta_c\tilde{g}(\tilde{\textbf{x}}_{t-1})-\eta_c \nabla \tilde{f}(\tilde{\textbf{x}}_{t-1})+u_t \tilde{g}(\tilde{\textbf{x}}_{t-1}).
\end{equation*}
As the function $f$ is $\rho$-Hessian, using \citep[Lemma 1.2.4]{nesterov} and the Taylor series expansion one obtains,
$f(\textbf{x})\leq f(\textbf{x}_0) +\nabla f(\textbf{x}_0)^T (\textbf{x}-\textbf{x}_0) +\frac{1}{2}(\textbf{x}-\textbf{x}_0)^T \textbf{H}(\textbf{x}_0) (\textbf{x}-\textbf{x}_0)+\frac{\rho}{6}\norm{\textbf{x}-\textbf{x}_0}^3.$
Let $\tilde{\boldsymbol{\kappa}} = \tilde{\textbf{x}}_T -\textbf{x}_0$, $\boldsymbol{\kappa} = \textbf{x}_T - \tilde{\textbf{x}}_T$. Note that $\tilde{\boldsymbol{\kappa}}+\boldsymbol{\kappa} = \textbf{x}_T -\textbf{x}_0$. Then, replacing $\textbf{x}$ by $\textbf{x}_T$,
\begin{align*}
        f(\textbf{x}_T) -f(\textbf{x}_0)&\leq \nabla f(\textbf{x}_0)^T (\textbf{x}_T-\textbf{x}_0) +\frac{1}{2}(\textbf{x}_T-\textbf{x}_0)^T \textbf{H}(\textbf{x}_0) (\textbf{x}_T-\textbf{x}_0)+\frac{\rho}{6}\norm{\textbf{x}_T-\textbf{x}_0}^3  \nonumber\\
        &= \nabla f(\textbf{x}_0)^T (\tilde{\boldsymbol{\kappa}}+\boldsymbol{\kappa}) +\frac{1}{2}(\tilde{\boldsymbol{\kappa}}+\boldsymbol{\kappa})^T \textbf{H}(\textbf{x}_0) (\tilde{\boldsymbol{\kappa}}+\boldsymbol{\kappa})+\frac{\rho}{6}\norm{\tilde{\boldsymbol{\kappa}}+\boldsymbol{\kappa}}^3 \nonumber \\
        &=\left(\nabla f(\textbf{x}_0)^T\tilde{\boldsymbol{\kappa}} + \frac{1}{2}\tilde{\boldsymbol{\kappa}}^T\textbf{H}(\textbf{x}_0)\tilde{\boldsymbol{\kappa}}\right)+\left( \nabla f(\textbf{x}_0)^T\boldsymbol{\kappa} +\tilde{\boldsymbol{\kappa}}^T\textbf{H}(\textbf{x}_0)\boldsymbol{\kappa} +\frac{1}{2}\boldsymbol{\kappa}^T\textbf{H}(\textbf{x}_0)\boldsymbol{\kappa} \right. \nonumber\\
        &\hspace{20em}\left. +\frac{\rho}{6}\norm{\tilde{\boldsymbol{\kappa}}+\boldsymbol{\kappa}}^3 \right).
    \end{align*}
Let the first term be $\tilde{\zeta} = \nabla f(\textbf{x}_0)^T\tilde{\boldsymbol{\kappa}} + \frac{1}{2}\tilde{\boldsymbol{\kappa}}^T\textbf{H}(\textbf{x}_0)\tilde{\boldsymbol{\kappa}}$ and the second term be $\zeta = \nabla f(\textbf{x}_0)^T\boldsymbol{\kappa} +\tilde{\boldsymbol{\kappa}}^T\textbf{H}(\textbf{x}_0)\boldsymbol{\kappa} +\frac{1}{2}\boldsymbol{\kappa}^T\textbf{H}(\textbf{x}_0)\boldsymbol{\kappa} +\frac{\rho}{6}\norm{\tilde{\boldsymbol{\kappa}}+\boldsymbol{\kappa}}^3$. Hence $f(\textbf{x}_T) -f(\textbf{x}_0) \leq \tilde{\zeta} + \zeta$. In order to prove the theorem, we require an upper bound on $\mathbb{E}[f(\textbf{x}_T) -f(\textbf{x}_0)]$.\\
Now, we introduce two mutually exclusive events $C_t$ and $\Bar{C}_t$ so that $\mathbb{E}[f(\textbf{x}_T)-f(\textbf{x}_0)]$ can be written in terms of events $C_t$ and $\Bar{C}_t$ as,
\begin{equation*}
    \begin{aligned}
\mathbb{E}[f(\textbf{x}_T)-f(\textbf{x}_0)] &= \mathbb{E}[f(\textbf{x}_T)-f(\textbf{x}_0)](\mathbb{E}[\mathbf{1}_{C_T}]+\mathbb{E}[\mathbf{1}_{\Bar{C}_T}])\\
&=\mathbb{E}[(f(\textbf{x}_T)-f(\textbf{x}_0))\mathbf{1}_{C_T}] + \mathbb{E}[(f(\textbf{x}_T)-f(\textbf{x}_0))\mathbf{1}_{\Bar{C}_T}] \\
&\leq \mathbb{E}[\tilde{\zeta}\mathbf{1}_{C_T}] + \mathbb{E}[\zeta\mathbf{1}_{C_T}]+ \mathbb{E}[(f(\textbf{x}_T)-f(\textbf{x}_0))\mathbf{1}_{\Bar{C}_T}] \\
&=\mathbb{E}[\tilde{\zeta}] + \mathbb{E}[\zeta\mathbf{1}_{C_T}]+ \mathbb{E}[(f(\textbf{x}_T)-f(\textbf{x}_0))\mathbf{1}_{\Bar{C}_T}]-\mathbb{E}[\tilde{\zeta}\mathbf{1}_{\Bar{C}_T}].
    \end{aligned}
\end{equation*}
Let $K_1 = \mathbb{E}[\tilde{\zeta}]$, $K_2 = \mathbb{E}[\zeta\mathbf{1}_{C_T}]$ and $K_3=\mathbb{E}[(f(\textbf{x}_T)-f(\textbf{x}_0))\mathbf{1}_{\Bar{C}_T}]-\mathbb{E}[\tilde{\zeta}\mathbf{1}_{\Bar{C}_T}]$. In the remainder of the proof, we focus on deriving the bounds for individual terms, $K_1$, $K_2$ and $K_3$, and then finally put them together to obtain the result of the theorem.
\subsection{Bounding $K_1$}
Using \eqref{xt_x0} from the proof of Lemma \ref{lemma:xt_x0} in Appendix \ref{ap:lemma1}, we obtain the bound for the term $K_1= \mathbb{E}[\tilde{\zeta}]$ as,
\begin{equation*}
    \begin{aligned}
    \mathbb{E}[\tilde{\zeta}]&=\mathbb{E}\left[\nabla f(\textbf{x}_0)^T(\tilde{\textbf{x}}_T -\textbf{x}_0) + \frac{1}{2}(\tilde{\textbf{x}}_T -\textbf{x}_0)^T\textbf{H}(\textbf{x}_0)(\tilde{\textbf{x}}_T -\textbf{x}_0)\right]\\
&=\mathbb{E}\left[\nabla f(\textbf{x}_0)^T\left( -\sum_{\tau=0}^{T-1}\eta_c(I-\eta_c\textbf{H}(\textbf{x}_0))^{\tau}\nabla f(\textbf{x}_{0})- \sum_{\tau=0}^{T-1}(I-\eta_c\textbf{H}(\textbf{x}_0))^{T-\tau-1}\tilde{\textbf{w}}_{\tau}  \right)\right]\nonumber\\
&+\frac{1}{2}\mathbb{E}\Bigg[\Bigg( -\sum_{\tau=0}^{T-1}\eta_c(I-\eta_c\textbf{H}(\textbf{x}_0))^{\tau}\nabla f(\textbf{x}_{0})- \sum_{\tau=0}^{T-1}(I-\eta_c\textbf{H}(\textbf{x}_0))^{T-\tau-1}\tilde{\textbf{w}}_{\tau}  \Bigg)^T \textbf{H}(\textbf{x}_0) \nonumber\\
&\Bigg( -\sum_{\tau=0}^{T-1}\eta_c(I-\eta_c\textbf{H}(\textbf{x}_0))^{\tau}\nabla f(\textbf{x}_{0})- \sum_{\tau=0}^{T-1}(I-\eta_c\textbf{H}(\textbf{x}_0))^{T-\tau-1}\tilde{\textbf{w}}_{\tau}  \Bigg)\Bigg].
\end{aligned}
\end{equation*}
Since $\tilde{\textbf{w}}_{\tau}= \textbf{0}$, all the terms with $\mathbb{E}[\tilde{\textbf{w}}_{\tau}]$ will go to zero. Hence we obtain,

    \begin{align*}
\mathbb{E}[\tilde{\zeta}]&=\nabla f(\textbf{x}_0)^T\left( -\sum_{\tau=0}^{T-1}\eta_c(I-\eta_c\textbf{H}(\textbf{x}_0))^{\tau}\nabla f(\textbf{x}_{0})\right) +\nonumber\\
&\frac{1}{2}\Bigg(-\sum_{\tau=0}^{T-1}\eta_c(I-\eta_c\textbf{H}(\textbf{x}_0))^{\tau}\nabla f(\textbf{x}_{0})\Bigg)^T\textbf{H}(\textbf{x}_0)\Bigg(-\sum_{\tau=0}^{T-1}\eta_c(I-\eta_c\textbf{H}(\textbf{x}_0))^{\tau}\nabla f(\textbf{x}_{0})\Bigg)\nonumber\\
&+\frac{1}{2}\mathbb{E}\Bigg[\Bigg(- \sum_{\tau=0}^{T-1}(I-\eta_c\textbf{H}(\textbf{x}_0))^{T-\tau-1}\tilde{\textbf{w}}_{\tau}\Bigg)^T\textbf{H}(\textbf{x}_0)\Bigg(- \sum_{\tau=0}^{T-1}(I-\eta_c\textbf{H}(\textbf{x}_0))^{T-\tau-1}\tilde{\textbf{w}}_{\tau}\Bigg)\Bigg].
\end{align*} 

Let $\lambda_1,\hdots,\lambda_d$ be the eigenvalues of the Hessian matrix at $\textbf{x}_0$, $\textbf{H}(\textbf{x}_0)$. Now, we simplify similar to Ge et al. \citep{ge2015escaping} as,
    \begin{align*}
\mathbb{E}[\tilde{\zeta}]&=-\sum_{i=1}^d \sum_{\tau=0}^{T-1} \eta_c(1-\eta_c\lambda_i)^{\tau}\abs{\nabla_i f(\textbf{x}_0)}^2 + \frac{1}{2}\sum_{i=1}^d \lambda_i \sum_{\tau=0}^{T-1}\eta_c^2(1-\eta_c\lambda_i)^{2\tau}\abs{\nabla_i f(\textbf{x}_0)}^2 \nonumber\\
&+ \frac{1}{2}\sum_{i=1}^d \lambda_i \sum_{\tau=0}^{T-1}(1-\eta_c\lambda_i)^{2(T-\tau-1)} \mathbb{E}[\abs{\tilde{\textbf{w}}_{\tau,i}}^2] .
    \end{align*} 

Note that for the case of very small gradients (as per our initial conditions), $\abs{\nabla_i f(\textbf{x}_0)}^2\leq \norm{\nabla f(\textbf{x}_0)}\leq \epsilon$. Therefore, the first and second terms can be made arbitrarily small so that they do not contribute to the order of the equation. Hence, we focus on the third term. We first characterize $\mathbb{E}[\abs{\tilde{\textbf{w}}_{\tau,i}}^2]$ as follows. Since the norm of the stochastic noise is bounded as per the assumption \hyperlink{hyper:a3}{\textbf{A3}}, we assume that $\tilde{g}_i(\tilde{\textbf{x}}_t)-\nabla_i \tilde{f}(\tilde{\textbf{x}}_t)\leq \tilde{q}$ and $\mathbb{E}[\tilde{q}]\leq \tilde{\sigma}^2$.\\
\begin{equation*}
    \begin{aligned}
\tilde{\textbf{w}}_{\tau,i}&=\eta_c\tilde{g}_i(\tilde{\textbf{x}}_t)-\eta_c\nabla_i \tilde{f}(\tilde{\textbf{x}}_t)+u_{t+1}\tilde{g}_i(\tilde{\textbf{x}}_t)\\
        &\leq \eta_c\tilde{q} +u_{t+1}\left(\tilde{g}_i(\tilde{\textbf{x}}_t)-\nabla_i \tilde{f}(\tilde{\textbf{x}}_t)+\nabla_i \tilde{f}(\tilde{\textbf{x}}_t)\right)\\
        &\leq \tilde{q}(\eta_c +u_{t+1})+u_{t+1}\nabla_i \tilde{f}(\tilde{\textbf{x}}_t)\\
        \abs{\tilde{\textbf{w}}_{\tau,i}}^2&\leq \left(\tilde{q}(\eta_c +u_{t+1})+u_{t+1}\nabla_i \tilde{f}(\tilde{\textbf{x}}_t)\right)^2\\
        &=\tilde{q}^2(\eta_c^2+2\eta_c u_{t+1} +u_{t+1}^2)+2\tilde{q}\eta_c u_{t+1}\nabla_i\tilde{f}(\tilde{\textbf{x}}_t) +2\tilde{q}u_{t+1}^2\nabla_i\tilde{f}(\tilde{\textbf{x}}_t) + u_{t+1}^2\abs{\nabla_i \tilde{f}(\tilde{\textbf{x}}_t)}^2.
    \end{aligned}
\end{equation*}
Taking expectation with respect to $\tilde{q}$ and the uniformly distributed random variable $u_{t+1}$ and recalling that $\mathbb{E}[u_{t+1}]=0$, we set expectation over linear functions of $u_{t+1}$ to zero.
\begin{equation}
    \begin{aligned}
        \mathbb{E}[\abs{\tilde{\textbf{w}}_{\tau,i}}^2]&\leq 
        \tilde{\sigma}^2\eta_c^2+\tilde{\sigma}^2\mathbb{E}[u_{t+1}^2]+2\tilde{\sigma}^2\mathbb{E}[u_{t+1}^2]\nabla_i\tilde{f}(\tilde{\textbf{x}}_t)+\mathbb{E}[u_{t+1}^2]\abs{\nabla_i \tilde{f}(\tilde{\textbf{x}}_t)}^2\\
        &\leq \tilde{O}(L^2_{max})+\tilde{O}(L^2_{max})+\tilde{O}(L^2_{max})\tilde{O}\left(\frac{1}{\sqrt{L_{max}}}\right)+\tilde{O}(L_{max}^2)\tilde{O}\left(\frac{1}{L_{max}}\right)\\
        &=\tilde{O}(L^2_{max}) +\tilde{O}(L_{max}^{1.5})+\tilde{O}(L_{max})=\tilde{O}(L_{max})\label{theorem2_1}.
    \end{aligned}
\end{equation}
Here, we use $\mathbb{E}[u_{t+1}^2]=\frac{(L_{max}-L_{min})^2}{12}=\tilde{O}(L^2_{max})$. From \eqref{generalt} in the proof of Lemma \ref{lemma:xt_x0} (Appendix \ref{ap:lemma1}), $\norm{\nabla \tilde{f}(\tilde{\textbf{x}}_t)} \leq 10\tilde{Q} \sum_{\tau=0}^{\frac{t(t-1)}{2}} (1+\eta_c\gamma_o)^{\tau} = \tilde{O}\left(\frac{1}{\sqrt{L_{max}}}\right)$ as $t\leq T=\tilde{O}\left(L_{max}^{-1/4}\right)$. Also, note that $\tilde{q}$ and $u_{t+1}$ are independent of each other. As $\lambda_{min}(\textbf{H}(\textbf{x}_0))=-\gamma_o$,
\begin{subequations}
    \begin{align}
        &\frac{1}{2}\sum_{i=1}^d \lambda_i \sum_{\tau=0}^{T-1}(1-\eta_c\lambda_i)^{2(T-\tau-1)} \mathbb{E}[\abs{\tilde{\textbf{w}}_{\tau,i}}^2] \nonumber\\
        &\leq \frac{1}{2}\sum_{i=1}^d \lambda_i \sum_{\tau=0}^{T-1}(1+\eta_c\gamma_o)^{2\tau} \mathbb{E}[\abs{\tilde{\textbf{w}}_{\tau,i}}^2]
        \leq \frac{\tilde{O}(L_{max})}{2} \sum_{i=1}^d \lambda_i \sum_{\tau=0}^{T-1}(1+\eta_c\gamma_o)^{2\tau} \label{theorem2_2}\\
        &=\frac{\tilde{O}(L_{max})}{2}\Bigg(-\gamma_o\sum_{\tau=0}^{T-1}(1+\eta_c\gamma_o)^{2\tau} +(d-1)\lambda_{max}(\textbf{H}(\textbf{x}_0))\sum_{\tau=0}^{T-1}(1+\eta_c\gamma_o)^{2\tau} \Bigg)\label{theorem2_3},
    \end{align}
\end{subequations}
where we use the upper bound of $\mathbb{E}[\abs{\tilde{\textbf{w}}_{\tau,i}}^2]$ obtained from \eqref{theorem2_1} in \eqref{theorem2_2}. We use the fact that one of the eigenvalues of $\textbf{H}(\textbf{x}_0)$ is $-\gamma_o$ and then upper bound the other eigenvalues by the maximum eigenvalue $\lambda_{max}(\textbf{H}(\textbf{x}_0))$ in \eqref{theorem2_3}.

Let $\eta_c\leq \eta_{max}\leq \frac{\sqrt{2}-1}{\gamma^{'}}$ where $\gamma\leq\gamma_o\leq\gamma^{'}$. As $\sum_{\tau=0}^{T-1}(1+\eta_c\gamma_o)^{2\tau} $ is a monotonically increasing sequence, we choose the smallest $T$ that satisfies $\frac{d}{\eta_c^{1/4}\gamma_o}\leq \sum_{\tau=0}^{T-1}(1+\eta_c\gamma_o)^{2\tau}$. Therefore, $\sum_{\tau=0}^{T-2}(1+\eta_c\gamma_o)^{2\tau}\leq \frac{d}{\eta_c^{1/4}\gamma_o}$. Now,
\begin{equation*}
     \sum_{\tau=0}^{T-1}(1+\eta_c\gamma_o)^{2\tau} = 1+(1+\eta_c\gamma_o)^2\sum_{\tau=0}^{T-2}(1+\eta_c\gamma_o)^{2\tau}\leq 1+\frac{2d}{\eta_c^{1/4}\gamma_o},
\end{equation*}
which follows from our constraints that $\eta_c<\frac{\sqrt{2}-1}{\gamma^{'}}$ and $\gamma_o\leq \gamma^{'}$ making $(1+\eta_c\gamma)^2\leq \left(1+\frac{\sqrt{2}-1}{\gamma^{'}}\gamma^{'}\right)^2\leq 2$. Further using $\eta_c\gamma_o\leq \eta_c^{1/4}\gamma_o\leq \frac{\sqrt{2}-1}{\gamma^{'}}\gamma^{'}<d$, 
\begin{equation}\label{theorem2_4}
    \frac{d}{\eta_c^{1/4}\gamma_o}\leq \sum_{\tau=0}^{T-1}(1+\eta_c\gamma_o)^{2\tau} \leq 1+\frac{2d}{\eta_c^{1/4}\gamma_o}\leq \frac{3d}{\eta_c^{1/4}\gamma_o}
\end{equation}
Hence the order of $T$ is given by $T=O\left(\frac{\log d}{L_{max}^{1/4}\gamma_o}\right)$. We hide the dependence on $d$ when we use $T=\tilde{O}\left(L_{max}^{-1/4}\right)$. Using \eqref{theorem2_4} it can be proved that,
\begin{equation*}
    \frac{1}{2}\sum_{i=1}^d \lambda_i \sum_{\tau=0}^{T-1}(1-\eta_c\lambda_i)^{2(T-\tau-1)} \mathbb{E}[\abs{\tilde{\textbf{w}}_{\tau,i}}^2]\leq -\tilde{O}(L_{max}^{3/4}).
\end{equation*}
\subsection{Bounding $K_2$ and $K_3$}
We define the event $C_T$ as, $C_T=\left\{\forall t\leq T, \norm{\tilde{\boldsymbol{\kappa}}}\leq \tilde{O}\left(L_{max}^{3/8}\log\frac{1}{L_{max}}\right),\norm{\boldsymbol{\kappa}}\leq \tilde{O}(L_{max}^{3/4})\right\}$. From Lemma \ref{lemma:xt_x0} and Lemma \ref{lemma:xt_tildext} in Appendix \ref{ap:lemma1} and \ref{ap:lemma2} respectively, we know that with probability $\mathbb{P}(C_T)\geq 1-\tilde{O}\left(L_{max}^{7/2}\right)$, the term $\norm{\tilde{\boldsymbol{\kappa}}}$ can be bounded by $ \tilde{O}\left(L_{max}^{3/8}\log\frac{1}{L_{max}}\right) $ and $\norm{\boldsymbol{\kappa}}$ can be bounded by $\tilde{O}(L_{max}^{3/4})$, $\forall t\leq T=O\left(L_{max}^{-1/4}\right)$.

Now, to complete the proof of Theorem 2, we need to show that the term $K_1$ dominates both $K_2$ and $K_3$. Hence, we obtain the bound for the term $K_2$ as,

    \begin{align*}
\mathbb{E}[\zeta \mathbf{1}_{C_T}]&=\mathbb{E}\left[\nabla f(\textbf{x}_0)^T\boldsymbol{\kappa} +\tilde{\boldsymbol{\kappa}}^T\textbf{H}(\textbf{x}_0)\boldsymbol{\kappa} +\frac{1}{2}\boldsymbol{\kappa}^T\textbf{H}(\textbf{x}_0)\boldsymbol{\kappa} +\frac{\rho}{6}\norm{\tilde{\boldsymbol{\kappa}}+\boldsymbol{\kappa}}^3\right] \mathbb{P}(C_T)\\
&\leq \tilde{O}\left(L_{max}^{3/8}\log\frac{1}{L_{max}}\right)\tilde{O}(L_{max}^{3/4}) \mathbb{P}(C_T)=\tilde{O}\left(L_{max}^{9/8}\log\frac{1}{L_{max}}\right) \mathbb{P}(C_T).
    \end{align*}


Finally, we bound the term $K_3$ as follows.
\begin{equation*}
\mathbb{E}[(f(\textbf{x}_T)-f(\textbf{x}_0))\mathbf{1}_{\Bar{C}_T}]-\mathbb{E}[\tilde{\zeta}\mathbf{1}_{\Bar{C}_T}]\leq \tilde{O}(1)\mathbb{P}(\Bar{C}_T) \leq \tilde{O}\left(L_{max}^{7/2}\right), 
\end{equation*}
where the inequality arises from the boundedness of the function. Comparing the bounds of the terms $K_1,$ $K_2$, and $K_3$, we find that $K_1$ dominates, which completes the proof.
\end{proof}
\section{Proof of Theorem \ref{th:3}}\label{ap:th3_proof}
\begin{theorem}\label{th:3restated}(Theorem \ref{th:3} restated)
     Consider $f$ satisfying the assumptions \hyperlink{hyper:a1}{\textbf{A1}}-\hyperlink{hyper:a6}{\textbf{A6}}. Let the initial iterate $\textbf{x}_0$ be $\delta$ close to a local minimum $\textbf{x}^{*}$ such that $\norm{\textbf{x}_0-\textbf{x}^{*}}\leq\tilde{O}(\sqrt{L_{max}})<\delta$. With probability at least $1-\xi$,  $\forall t\leq T$ where $T= \tilde{O}\left(\frac{1}{L_{max}^2}\log\frac{1}{\xi}\right)$,
    \begin{equation*}
        \norm{\textbf{x}_t-\textbf{x}^{*}}\leq \tilde{O}\left(\sqrt{L_{max}\log\frac{1}{L_{max}\xi}}\right)<\delta
    \end{equation*}
\end{theorem}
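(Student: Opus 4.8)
\noindent The plan is a contraction-plus-noise argument: inside the $\delta$-ball around $\textbf{x}^{*}$, local $\alpha$-strong convexity (Assumption \hyperlink{hyper:a6}{\textbf{A6}}) makes the deterministic part of the SGD-PLRS step contract towards $\textbf{x}^{*}$, while the zero-mean noise $\textbf{w}_t$ perturbs the iterate only by an amount that is $\tilde{O}(L_{max})$ in second moment; a stopped-supermartingale argument, in the spirit of Lemma \ref{lemma:xt_tildext}, then upgrades this into a uniform-in-time, high-probability bound. Because strong convexity is only valid while the iterate stays in the ball, I would carry throughout the indicator $\mathbf{1}_{A_{t-1}}$ of the event $A_t=\{\,\norm{\textbf{x}_\tau-\textbf{x}^{*}}^2<\mathcal{R}^2\ \ \forall\tau\le t\,\}$, where $\mathcal{R}^2=\tilde{O}\!\left(L_{max}\log\frac{1}{L_{max}\xi}\right)$ is the target radius (which is $<\delta^2$ for $L_{max}$ small enough), and close the induction at the end by checking the bound I derive is itself $<\mathcal{R}^2$.

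\noindent\textbf{Step 1 (one-step contraction).} Write $\textbf{x}_{t+1}-\textbf{x}^{*}=\textbf{a}_t-\textbf{w}_t$ with $\textbf{a}_t:=\textbf{x}_t-\eta_c\nabla f(\textbf{x}_t)-\textbf{x}^{*}$, which is measurable with respect to the filtration $S_t=s\{\textbf{w}_0,\dots,\textbf{w}_{t-1}\}$. On $A_t$, using $\nabla f(\textbf{x}^{*})=\textbf{0}$ together with $\alpha$-strong convexity ($\nabla f(\textbf{x}_t)^{T}(\textbf{x}_t-\textbf{x}^{*})\ge\alpha\norm{\textbf{x}_t-\textbf{x}^{*}}^2$) and $\beta$-smoothness ($\norm{\nabla f(\textbf{x}_t)}\le\beta\norm{\textbf{x}_t-\textbf{x}^{*}}$) gives $\norm{\textbf{a}_t}^2\le r\norm{\textbf{x}_t-\textbf{x}^{*}}^2$, where $r:=1-2\eta_c\alpha+\eta_c^2\beta^2\in(0,1)$. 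Taking $\mathbb{E}[\,\cdot\mid S_t\,]$ kills the cross term by Lemma \ref{lemma:zeromean}, and $\mathbb{E}[\norm{\textbf{w}_t}^2\mid S_t]\le\tilde{O}(L_{max}^2)$ by the computation behind \eqref{app3_3} in the proof of Theorem \ref{th:1} (Assumption \hyperlink{hyper:a3}{\textbf{A3}}, $\mathbb{E}[u_{t+1}^2]=\frac{(L_{max}-L_{min})^2}{12}$, and $\norm{\nabla f(\textbf{x}_t)}\le\beta\delta$ near the minimum). Hence, on $A_t$,
\[
\mathbb{E}\!\left[\norm{\textbf{x}_{t+1}-\textbf{x}^{*}}^2\mid S_t\right]\le r\norm{\textbf{x}_t-\textbf{x}^{*}}^2+\tilde{O}(L_{max}^2),
\]
and since $1-r$ is of order $L_{max}$ (with $\alpha$ hidden), the recursion has fixed point $b:=\tilde{O}(L_{max}^2)/(1-r)=\tilde{O}(L_{max})$: the squared distance naturally settles at the $\tilde{O}(L_{max})$ level.

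\noindent\textbf{Step 2 (uniform high-probability bound).} A single $r^{-t}$-weighted supermartingale is useless here, because over the horizon $T=\tilde{O}(L_{max}^{-2}\log\frac{1}{\xi})$ the weight $r^{-t}$ (with $r=1-\Theta(L_{max})$) is exponentially large in $1/L_{max}$. I would instead split $\{0,\dots,T\}$ into $N=\tilde{O}(L_{max}^{-1}\log\frac{1}{\xi})$ consecutive windows of length $\ell=\Theta(1/L_{max})$ chosen so that $r^{\ell}\le 1/2$, so that on each window $r^{-(t-t_0)}=\Theta(1)$. On the window starting at time $t_0$, set $G_t:=r^{-(t-t_0)}\big(\norm{\textbf{x}_t-\textbf{x}^{*}}^2-b\big)\mathbf{1}_{A_{t-1}}$; Step 1 makes $\{G_t\}$ a supermartingale (the weight exactly absorbs the $\tilde{O}(L_{max}^2)$ drift), and on $A_{t-1}$ its increments are bounded almost surely by $c_t\le\tilde{O}\!\left(L_{max}^{3/2}\sqrt{\log\frac{1}{L_{max}\xi}}\right)$, coming from $\norm{\textbf{w}_t}\le\tilde{O}(L_{max})$ (Assumption \hyperlink{hyper:a3}{\textbf{A3}} and $\abs{u_{t+1}}\le L_{max}$) multiplied by $\norm{\textbf{a}_t}\le\mathcal{R}$. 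Applying the Azuma--Hoeffding inequality (as in Lemma \ref{lemma:xt_tildext}) with deviation proportional to $\sqrt{\sum c_t^2}\cdot\sqrt{\log\frac{T}{\xi}}$, together with a union bound over the steps of the window, yields
\[
\norm{\textbf{x}_t-\textbf{x}^{*}}^2\le b+r^{t-t_0}\norm{\textbf{x}_{t_0}-\textbf{x}^{*}}^2+\tilde{O}\!\left(L_{max}\log\frac{1}{L_{max}\xi}\right)
\]
throughout the window with probability at least $1-\xi/N$. Chaining the windows --- $r^{\ell}\le 1/2$ halves the carried-over term at each boundary while the additive term pins it at order $L_{max}\log\frac{1}{L_{max}\xi}$ --- and a union bound over all $N$ windows gives, with probability at least $1-\xi$, $\norm{\textbf{x}_t-\textbf{x}^{*}}^2\le\mathcal{R}^2$ for every $t\le T$. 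Since $\mathcal{R}^2<\delta^2$ for $L_{max}$ small, $A_T$ holds, the conditioning is self-consistent, and taking square roots gives the theorem.

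\noindent\textbf{The main obstacle} is Step 2. Unlike the short-horizon lemmas behind Theorem \ref{th:2}, here $T$ is large enough that a single $r^{-t}$-weighted supermartingale explodes, so one must localize the analysis to windows of length $\Theta(1/L_{max})$ (where the weight is $\Theta(1)$) and reassemble them through the contraction $r^{\ell}\le 1/2$; an exponential-supermartingale / Freedman-type argument exploiting the negative drift directly would be an alternative route. At the same time the circular conditioning must be kept honest, since the Step-1 contraction is only legitimate on the event $A_{t-1}$ being propagated. The number of windows $N=\tilde{O}(L_{max}^{-1}\log\frac{1}{\xi})$ entering the union bound is precisely what introduces the extra $\log\frac{1}{L_{max}}$ (beside $\log\frac{1}{\xi}$) in the final radius.
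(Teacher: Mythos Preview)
Your skeleton --- contraction in mean, stopped supermartingale on the event that the iterate stays in the ball, Azuma--Hoeffding for a high-probability bound, then self-consistently closing the conditioning --- matches the paper's, and your Step~1 recursion is essentially the same as theirs (their equations \eqref{th3_zeromean}--\eqref{th3_2}). The real divergence is in how the long horizon $T=\tilde{O}(L_{max}^{-2}\log\frac{1}{\xi})$ is handled.

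The paper does \emph{not} window. It simply upper-bounds the contraction factor by a slight expansion, writing the one-step bound as $\mathbb{E}[\norm{\textbf{x}_t-\textbf{x}^{*}}^2\mathbf{1}_{D_{t-1}}\mid S_{t-1}]\le\big[(1+\tfrac{5\alpha L_{max}}{3})\norm{\textbf{x}_{t-1}-\textbf{x}^{*}}^2+\tilde{O}(L_{max}^2)\big]\mathbf{1}_{D_{t-1}}$ and taking the single supermartingale $J_t=(1+\tfrac{5\alpha L_{max}}{3})^{-t}\big(\norm{\textbf{x}_t-\textbf{x}^{*}}^2+\tfrac{L_{max}\sigma^2}{\alpha}\big)$. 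Because this weight now \emph{decreases} in $t$, the Azuma increment sum $\sum d_\tau^2$ is a convergent geometric series of size $\tilde{O}(1/L_{max})$ regardless of how large $T$ is, and one application of Azuma together with the recursion $\mathbb{P}(\bar D_t)\le\mathbb{P}(\bar D_{t-1})+\tilde{O}(L_{max}^3\xi)$ gives the union bound. So the obstacle you flag --- $r^{-t}$ blowing up --- is bypassed in the paper not by localizing in time but by discarding the contraction in the weight.

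What your route buys is that the contraction is used explicitly, so the passage from the bound on $G_t$ back to a bound on $\norm{\textbf{x}_t-\textbf{x}^{*}}^2$ is clean on each window (the weight $r^{-(t-t_0)}$ is $\Theta(1)$ there); in the paper's single-supermartingale version that conversion requires multiplying by $(1+\tfrac{5\alpha L_{max}}{3})^{t}$, which is only $O(1)$ for $t=O(1/L_{max})$ and is glossed over for the full horizon. Your windowing-and-chaining therefore gives a more transparent and arguably more rigorous treatment of exactly the point you identified as the main obstacle, at the cost of a somewhat heavier argument.
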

\begin{proof}
    This theorem handles the case when the iterate is close to the local minimum (case \hyperlink{hyper:b3}{\textbf{B3}}). We aim to show that the iterate does not leave the neighbourhood of the minimum for $t\leq \tilde{O}\left(\frac{1}{L_{max}^2}\log\frac{1}{\xi}\right)$. By assumption \hyperlink{hyper:a6}{\textbf{A6}}, if $\textbf{x}_t$ is $\delta$ close to the local minimum $\textbf{x}^{*}$, the function is locally $\alpha$- strongly convex. We define event $D_t=\{\forall \tau\leq t,\norm{\textbf{x}_{\tau}-\textbf{x}^{*}}\leq \mu\sqrt{L_{max}\log\frac{1}{L_{max}\xi}}<\delta\}$. Let $L_{max}<\frac{r}{\log\xi^{-1}}$ where $r<\log\xi^{-1}$. It can be seen that $D_{t-1}\subset D_t$. Conditioned on event $D_t$, and using $\alpha-$strong convexity of $f$,
        $(\nabla f(\textbf{x}_t)-\nabla f(\textbf{x}^{*}))^T(\textbf{x}_t-\textbf{x}^{*})\mathbf{1}_{D_t}\geq \alpha\norm{\textbf{x}_t-\textbf{x}^{*}}^2\mathbf{1}_{D_t}.$
    As $\nabla f(\textbf{x}^{*})=0$, it becomes, $
        \nabla f(\textbf{x}_t)^T(\textbf{x}_t-\textbf{x}^{*})\mathbf{1}_{D_t}\geq \alpha\norm{\textbf{x}_t-\textbf{x}^{*}}^2\mathbf{1}_{D_t}.$
        We define a filtration $S_t =s\{\textbf{w}_0,\hdots,\textbf{w}_{t-1}\}$ in order to construct a supermartingale and use the Azuma-Hoeffding inequality where $s\{.\}$ denotes a sigma-algebra field. Now, assuming $L_{max}<\frac{\alpha}{\beta^2}$,
\begin{subequations}
    \begin{align}
        &\mathbb{E}[\norm{\textbf{x}_t-\textbf{x}^{*}}^2\mathbf{1}_{D_{t-1}} | S_{t-1}]=\mathbb{E}[\norm{\textbf{x}_{t-1}-\eta_c\nabla f(\textbf{x}_{t-1})-\textbf{w}_{t-1}-\textbf{x}^{*}}^2|S_{t-1}]\mathbf{1}_{D_{t-1}}\nonumber\\
        &=\mathbb{E}[\norm{(\textbf{x}_{t-1}-\textbf{x}^{*})-\eta_c\nabla f(\textbf{x}_{t-1})-\textbf{w}_{t-1}}^2|S_{t-1}]\mathbf{1}_{D_{t-1}}\nonumber\\
        &=[\norm{\textbf{x}_{t-1}-\textbf{x}^{*}}^2 -2\eta_c(\textbf{x}_{t-1}-\textbf{x}^{*})^T\nabla f(\textbf{x}_{t-1})+\eta_c^2\norm{\nabla f(\textbf{x}_{t-1})}^2+\mathbb{E}[\norm{\textbf{w}_{t-1}}^2]]\mathbf{1}_{D_{t-1}}\label{th3_zeromean}\\
        &\leq [\norm{\textbf{x}_{t-1}-\textbf{x}^{*}}^2 -2\eta_c\alpha\norm{\textbf{x}_{t-1}-\textbf{x}^{*}}^2+\eta_c^2\beta^2\norm{\textbf{x}_{t-1}-\textbf{x}^{*}}^2+\mathbb{E}[\norm{\textbf{w}_{t-1}}^2]]\mathbf{1}_{D_{t-1}}\label{th3_1}
    \end{align}
\end{subequations}
We use $\mathbb{E}[\textbf{w}_t] = 0$ in \eqref{th3_zeromean}. We use the $\beta$-smoothness and $\alpha-$convexity assumptions of $f$ in \eqref{th3_1}. 
Now, using $\textbf{w}_{t-1}=\eta_cg(\textbf{x}_{t-1})-\eta_c\nabla f(\textbf{x}_{t-1})+u_tg(\textbf{x}_{t-1})$, we compute $\mathbb{E}[\norm{\textbf{w}_{t-1}}^2]$ as,
\begin{equation}
    \begin{aligned}
        &\mathbb{E}[\norm{\textbf{w}_{t-1}}^2]\\      &=\mathbb{E}\Big[\eta_c^2\norm{g(\textbf{x}_{t-1})-\nabla f(\textbf{x}_{t-1})}^2+2\eta_cu_t\Big(g(\textbf{x}_{t-1})-\nabla f(\textbf{x}_{t-1})\Big)^Tg(\textbf{x}_{t-1})+u_t^2\norm{g(\textbf{x}_{t-1})}^2\Big]\\
        &\leq \eta_c^2\sigma^2+\mathbb{E}[u_t^2]\mathbb{E}[\norm{g(\textbf{x}_{t-1})}^2]\leq \eta_c^2\sigma^2+\mathbb{E}[u_t^2](\sigma^2+\norm{\nabla f(\textbf{x}_{t-1})}^2)\\
        &\leq \eta_c^2\sigma^2+\mathbb{E}[u_t^2]\sigma^2+\mathbb{E}[u_t^2]\beta^2 \norm{\textbf{x}_{t-1}-\textbf{x}^{*}}^2\\
        &\leq\sigma^2\left(\eta_c^2+\frac{2L_{max}^2}{3}-\frac{2L_{max}\eta_c}{3}\right)+\beta^2\norm{\textbf{x}_{t-1}-\textbf{x}^{*}}^2\left(\frac{2L_{max}^2}{3}-\frac{2L_{max}\eta_c}{3}\right) \label{th3_2}.
    \end{aligned}
\end{equation}
As $\eta_c=\frac{L_{min}+L_{max}}{2}$, $L_{min}=2\eta_c-L_{max}$. Hence, we write $\mathbb{E}[u_t^2]=\frac{(L_{max}-L_{min})^2}{12}=\frac{4(L_{max}-\eta_c)^2}{12}=\frac{L^2_{max}+\eta_c^2-2L_{max}\eta_c}{3}<\frac{2L_{max}^2}{3}-\frac{2L_{max}\eta_c}{3}$ in \eqref{th3_2}. Using \eqref{th3_2} in \eqref{th3_1},
\begin{equation*}
    \begin{aligned}
       \mathbb{E}[\norm{\textbf{x}_t-\textbf{x}^{*}}^2\mathbf{1}_{D_{t-1}} | S_{t-1}]&\leq  \left[\norm{\textbf{x}_{t-1}-\textbf{x}^{*}}^2\left(1-2\eta_c\alpha+\eta_c^2\beta^2+\frac{2L_{max}^2\beta^2}{3}-\frac{2L_{max}\eta_c\beta^2}{3}\right)\right.\nonumber\\
       &\left.+\sigma^2\left(\eta_c^2+\frac{2L_{max}^2}{3}-\frac{2L_{max}\eta_c}{3}\right)    \right]\mathbf{1}_{D_{t-1}}\\
      &\hspace{-15mm}\leq\left[\norm{\textbf{x}_{t-1}-\textbf{x}^{*}}^2\left(1+\eta_c\alpha+\frac{2L_{max}\alpha}{3}\right)+\sigma^2\left(L_{max}^2+\frac{2L_{max}^2}{3}\right)    \right]\mathbf{1}_{D_{t-1}}\\
       &\hspace{-15mm}\leq \left[\norm{\textbf{x}_{t-1}-\textbf{x}^{*}}^2\left(1+L_{max}\alpha+\frac{2L_{max}\alpha}{3}\right)+\sigma^2\left(L_{max}^2+\frac{2L_{max}^2}{3}\right)    \right]\mathbf{1}_{D_{t-1}}\\      &\hspace{-15mm}=\left[\norm{\textbf{x}_{t-1}-\textbf{x}^{*}}^2\left(1+\frac{5L_{max}\alpha}{3}\right)+\frac{5L_{max}^2\sigma^2}{3}    \right]\mathbf{1}_{D_{t-1}}.
    \end{aligned}
\end{equation*}
We use $L_{max}<\frac{\alpha}{\beta^2}$. Let $J_t=\left(1+\frac{5\alpha L_{max}}{3}\right)^{-t}\left(\norm{\textbf{x}_{t}-\textbf{x}^{*}}^2+\frac{L_{max}\sigma^2}{\alpha}\right)$. We prove $J_t\mathbf{1}_{D_{t-1}}$ is a supermartingale process as follows.\\
\begin{equation*}
    \begin{aligned}
        &\mathbb{E}\left[\left(1+\frac{5\alpha L_{max}}{3}\right)^{-t}\right.\left.\left(\norm{\textbf{x}_{t}-\textbf{x}^{*}}^2+\frac{L_{max}\sigma^2}{\alpha}\right)\bigg|S_{t-1}\right]\mathbf{1}_{D_{t-1}}\leq \nonumber\\
        &\left(1+\frac{5\alpha L_{max}}{3}\right)^{-t}\left[\norm{\textbf{x}_{t-1}-\textbf{x}^{*}}^2\left(1+\frac{5L_{max}\alpha}{3}\right)+\frac{5L_{max}^2\sigma^2}{3}   +\frac{L_{max}\sigma^2}{\alpha} \right]\mathbf{1}_{D_{t-1}}\\
        &=\left(1+\frac{5\alpha L_{max}}{3}\right)^{-(t-1)}\left[\norm{\textbf{x}_{t-1}-\textbf{x}^{*}}^2+\frac{L_{max}\sigma^2}{\alpha}\right]\mathbf{1}_{D_{t-1}}=J_{t-1}\mathbf{1}_{D_{t-1}}\leq J_{t-1}\mathbf{1}_{D_{t-2}}.
    \end{aligned}
\end{equation*}
Hence $J_t\mathbf{1}_{D_{t-1}}$ is a supermartingale. In order to use the Azuma-Hoeffding inequality, we bound $|J_t\mathbf{1}_{D_{t-1}}-\mathbb{E}[J_t\mathbf{1}_{D_{t-1}}|S_{t-1}]|$ as,\\
\begin{equation}
    \begin{aligned}
        &|J_t\mathbf{1}_{D_{t-1}}-\mathbb{E}[J_t\mathbf{1}_{D_{t-1}}|S_{t-1}]|= \left(1+\frac{5\alpha L_{max}}{3}\right)^{-t}\left[  \norm{\textbf{x}_{t}-\textbf{x}^{*}}^2-\mathbb{E}[\norm{\textbf{x}_{t}-\textbf{x}^{*}}^2|S_{t-1}]\right]\mathbf{1}_{D_{t-1}}\\
        &\leq \left(1+\frac{5\alpha L_{max}}{3}\right)^{-t} \bigg[ 2\norm{\textbf{x}_{t-1}-\eta_c\nabla f(\textbf{x}_{t-1})-\textbf{x}^{*}}\norm{\textbf{w}_{t-1}} +\norm{\textbf{w}_{t-1}}^2+ \\
        &   \sigma^2\left(\eta_c^2+\frac{2L_{max}^2}{3}-\frac{2L_{max}\eta_c}{3}\right)+\beta^2\norm{\textbf{x}_{t-1}-\textbf{x}^{*}}^2 \left(\frac{2L_{max}^2}{3}-\frac{2L_{max}\eta_c}{3}\right)  \bigg]\mathbf{1}_{D_{t-1}}\label{th3_4},
    \end{aligned}
\end{equation}
where we use \eqref{th3_2} in \eqref{th3_4} for the term $\mathbb{E}[\norm{\textbf{w}_{t-1}}^2]$. Now, we compute $\norm{\textbf{w}_{t-1}}$ using assumption \hyperlink{hyper:a3}{\textbf{A3}} as follows.
\begin{equation}
    \begin{aligned}
        \norm{\textbf{w}_{t-1}}&=\norm{\eta_cg(\textbf{x}_{t-1})-\eta_c\nabla f(\textbf{x}_{t-1})+u_tg(\textbf{x}_{t-1})}\\
        &\leq \eta_cQ +|u_t|(Q+\norm{\nabla f(\textbf{x}_{t-1})})
        \leq Q(\eta_c+|u_t|)+|u_t|\beta\norm{\textbf{x}_{t-1}-\textbf{x}^{*}} \label{th3_5}.
    \end{aligned}
\end{equation}
Using \eqref{th3_5} in \eqref{th3_4} and the bound of the event $D_{t-1}$,
\begin{equation*}
    \begin{aligned}
        &|J_t\mathbf{1}_{D_{t-1}}-\mathbb{E}[J_t\mathbf{1}_{D_{t-1}}|S_{t-1}]| \nonumber\\
        &\leq \left(1+\frac{5\alpha L_{max}}{3}\right)^{-t}\bigg[ 2\norm{\textbf{x}_{t-1}-\textbf{x}^{*}} \left(  Q(\eta_c+|u_t|)+|u_t|\beta\norm{\textbf{x}_{t-1}-\textbf{x}^{*}}\right)\nonumber\\
        &+\left(Q(\eta_c+|u_t|)+|u_t|\beta\norm{\textbf{x}_{t-1}-\textbf{x}^{*}}\right)^2+\sigma^2\left(\eta_c^2+\frac{2L_{max}^2}{3}-\frac{2L_{max}\eta_c}{3}\right)\nonumber\\
        &+\beta^2\norm{\textbf{x}_{t-1}-\textbf{x}^{*}}^2\left(\frac{2L_{max}^2}{3}-\frac{2L_{max}\eta_c}{3}\right)\bigg]\mathbf{1}_{D_{t-1}}\\
        &=\left(1+\frac{5\alpha L_{max}}{3}\right)^{-t}\bigg[\tilde{O}\left(\mu L_{max}^{1.5}\log^{0.5}\frac{1}{L_{max}\xi}\right)+\tilde{O}\left(\mu^2L_{max}^2\log\frac{1}{L_{max}\xi}\right)+2\tilde{O}(L_{max}^2)\nonumber\\
        &+\tilde{O}\left( \mu L_{max}^{2.5}\log^{0.5}\frac{1}{L_{max}\xi} \right)+2\tilde{O}\left( \mu^2L_{max}^3\log\frac{1}{L_{max}\xi} \right)\bigg]\\
        &\leq \left(1+\frac{5\alpha L_{max}}{3}\right)^{-t}\tilde{O}\left(\mu L_{max}^{1.5}\log^{0.5}\frac{1}{L_{max}\xi}\right)=d_t
    \end{aligned}
\end{equation*}
We denote the bound of $|J_t\mathbf{1}_{D_{t-1}}-\mathbb{E}[J_t\mathbf{1}_{D_{t-1}}|S_{t-1}]|$ as $d_t$.\\
Let $b_t=\sqrt{\sum_{\tau=1}^t d_{\tau}^2}=\sqrt{\sum_{\tau=1}^t  \left(1+\frac{5\alpha L_{max}}{3}\right)^{-2\tau}}\tilde{O}\left(\mu L_{max}^{1.5}\log^{0.5}\frac{1}{L_{max}\xi}\right)$. Now,
\begin{equation*}
    \begin{aligned}
        \sqrt{\sum_{\tau=1}^t  \left(1+\frac{5\alpha L_{max}}{3}\right)^{-2\tau}}&\tilde{O}\left(\mu L_{max}^{1.5}\log^{0.5}\frac{1}{L_{max}\xi}\right)\\
        &\hspace{-35mm}\leq \sqrt{\frac{1}{1-\left(1+\frac{5\alpha L_{max}}{3}\right)^{-2}}}\tilde{O}\left(\mu L_{max}^{1.5}\log^{0.5}\frac{1}{L_{max}\xi}\right)\\
        &\hspace{-35mm}=\sqrt{\frac{\tilde{O}(1)}{\tilde{O}(L_{max})}}\tilde{O}\left(\mu L_{max}^{1.5}\log^{0.5}\frac{1}{L_{max}\xi}\right)=\tilde{O}\left(\mu L_{max}\log^{0.5}\frac{1}{L_{max}\xi}\right).
    \end{aligned}
\end{equation*}
Hence $b_t$ is of the order $\tilde{O}\left(\mu L_{max}\log^{0.5}\frac{1}{L_{max}\xi}\right).$
By the Azuma Hoeffding inequality,
\begin{equation*}
\begin{aligned}
   \mathbb{P}\left( J_t\mathbf{1}_{D_{t-1}} -J_0 \geq b_t\log^{0.5}\frac{1}{L_{max}\xi}\right)\leq  \exp\left(-\tilde{\Omega}\left(\log\frac{1}{L_{max}\xi}\right)\right)\leq \tilde{O}(L_{max}^3\xi)  , 
\end{aligned}
\end{equation*}
which leads to,
\begin{equation*}
    \mathbb{P}\left( J_t\mathbf{1}_{D_{t-1}} -J_0 \geq \tilde{O}\left(\mu L_{max}\log\frac{1}{L_{max}\xi}\right)\right)\leq \tilde{O}(L_{max}^3\xi).
\end{equation*}
Hence we can write,
\begin{equation*}
    \mathbb{P}\left(D_{t-1}\cap \left\{\norm{\textbf{x}_t-\textbf{x}^{*}}^2\geq \tilde{O}\left(\mu L_{max}\log\frac{1}{L_{max}\xi}\right)\right\}\right)\leq \tilde{O}(L_{max}^3\xi)
\end{equation*}
For some constant $\tilde{b}$ independent of $L_{max}$ and $\xi$ we can write,
\begin{equation*}
    \mathbb{P}\left(D_{t-1}\cap \left\{\norm{\textbf{x}_t-\textbf{x}^{*}}^2\geq \tilde{b}\mu L_{max}\log\frac{1}{L_{max}\xi}\right\}\right)\leq \tilde{O}(L_{max}^3\xi)
\end{equation*}
By choosing $\mu<\tilde{b}$,
\begin{equation*}
    \mathbb{P}\left(D_{t-1}\cap \left\{\norm{\textbf{x}_t-\textbf{x}^{*}}\geq \mu\sqrt{ L_{max}\log\frac{1}{L_{max}\xi}}\right\}\right)\leq \tilde{O}(L_{max}^3\xi)
\end{equation*}
\begin{equation*}
    \begin{aligned}
        \mathbb{P}(\Bar{D_t})&=\mathbb{P}\left(D_{t-1}\cap \left\{\norm{\textbf{x}_t-\textbf{x}^{*}}\geq \mu\sqrt{ L_{max}\log\frac{1}{L_{max}\xi}}\right\}\right)+\mathbb{P}(\Bar{D}_{t-1})\\
        &\leq \tilde{O}(L_{max}^3\xi) +\mathbb{P}(\Bar{D}_{t-1})
    \end{aligned}
\end{equation*}
Iteratively unrolling the above equation, we obtain $\mathbb{P}(\Bar{D_t})\leq t \tilde{O}(L_{max}^3\xi)$. Choosing $t=\tilde{O}\left(\frac{1}{L_{max}^2}\log\frac{1}{\xi}\right)$, $\mathbb{P}(\Bar{D_t})\leq \tilde{O}\left(L_{max}\xi\log\frac{1}{\xi}\right)$. As $L_{max}<\tilde{O}\left(\frac{1}{\log\frac{1}{\xi}}\right)$, $\mathbb{P}(\Bar{D}_t)\leq \tilde{O}(\xi).$ 
\end{proof}
\section{Proof using induction}\label{ap:induction}
In the proof of Lemma \ref{lemma:xt_x0} in Appendix \ref{ap:lemma1}, we state that \eqref{generalt} can be proved by induction for $t\geq 2$. We restate the equation here and provide the corresponding proof by induction.
\begin{equation}\label{generalt_restated}
        \text{Induction hypothesis:} \quad \norm{\nabla \tilde{f}(\tilde{\textbf{x}}_t)}\leq 10\tilde{Q} \sum_{\tau=0}^{\frac{t(t-1)}{2}} (1+\eta_c\gamma_o)^{\tau}.
\end{equation}
Recollect from that \eqref{tildefeq} that
$
    \nabla\tilde{f}(\tilde{\textbf{x}}_t)  
    =(I-\eta_c\textbf{H}(\textbf{x}_0))\nabla\tilde{f}(\tilde{\textbf{x}}_{t-1}) -\textbf{H}(\textbf{x}_0)\tilde{\textbf{w}}_{t-1}.$ 
Taking matrix induced norm on both sides,
\allowdisplaybreaks
\begin{equation}
    \begin{aligned}\label{substildef}
        \norm{\nabla \tilde{f}(\tilde{\textbf{x}}_{t+1})}&\leq (1+\eta_c\gamma_o)\norm{\nabla \tilde{f}(\tilde{\textbf{x}}_{t})}+\beta \norm{\tilde{\textbf{w}}_t} \\
        &= \left((1+\eta_c\gamma_o) + \beta\abs{u_{t+1}}\right)\norm{\nabla \tilde{f}(\tilde{x}_{t})} + \beta \tilde{Q}(\eta_c+\abs{u_{t+1}}),
    \end{aligned}
\end{equation}
since, $\norm{\tilde{g}(\tilde{\textbf{x}}_t)-\nabla \tilde{f}(\tilde{\textbf{x}}_t)}\leq \tilde{Q}$. 
Note that $\norm{\nabla \tilde{f}(\tilde{\textbf{x}}_t)}\leq \epsilon$, $\abs{u_t}\leq L_{max}$ and $\beta L_{max}<1$ hold for all $t$. Therefore, at $t=1$,
\begin{equation*}
    \begin{aligned}
        \norm{\nabla \tilde{f}(\tilde{\textbf{x}}_1)}        \leq \left((1+\eta_c\gamma_o) + \beta\abs{u_{1}}\right)\epsilon + \beta \tilde{Q}(\eta_c+\abs{u_{1}}) 
        \leq (1+\eta_c\gamma_o)\epsilon+\epsilon+2\tilde{Q}.
    \end{aligned}
\end{equation*}
Now, we prove the hypothesis in \eqref{generalt_restated} for $t=2$. From \eqref{substildef}, for an arbitrarily small $\epsilon$,
\begin{equation*}
    \begin{aligned}
        \norm{\nabla \tilde{f}(\tilde{\textbf{x}}_{2})}  &\leq \left((1+\eta_c\gamma_o) + \beta\abs{u_{2}}\right)\norm{\nabla \tilde{f}(\tilde{\textbf{x}}_{1})} + \beta \tilde{Q}(\eta_c+\abs{u_{2}}) \\ 
        &\leq (1+\eta_c\gamma_o)^2\epsilon+2(1+\eta_c\gamma_o)\epsilon+\epsilon+2\tilde{Q}(1+\eta_c\gamma_o)+4\tilde{Q} \\
        &\leq 2\epsilon\sum_{\tau=0}^2 (1+\eta_c\gamma_o)^{\tau} + 4\tilde{Q}\sum_{\tau=0}^1 (1+\eta_c\gamma_o)^{\tau} \leq 10\tilde{Q} \sum_{\tau=0}^{\frac{2(2-1)}{2}} (1+\eta_c\gamma_o)^{\tau}.
    \end{aligned}
\end{equation*} 
We have shown that the induction hypothesis holds for $t=2$. Now, assuming that it holds for any $t$, we need to prove that it holds for $t+1$. 
We know from \eqref{substildef}, when the hypothesis is assumed to hold for $t$,
\begin{equation*}
    \begin{aligned}
        \norm{\nabla \tilde{f}(\tilde{\textbf{x}}_{t+1})}  &\leq \left((1+\eta_c\gamma_o) + \beta\abs{u_{t+1}}\right)10\tilde{Q} \sum_{\tau=0}^{\frac{t(t-1)}{2}} (1+\eta_c\gamma_o)^{\tau} + \beta \tilde{Q}(\eta_c+\abs{u_{t+1}})   \\
        &\leq (1+\eta_c\gamma_o)10\tilde{Q} \sum_{\tau=0}^{\frac{t(t-1)}{2}} (1+\eta_c\gamma_o)^{\tau}+ 10\tilde{Q} \sum_{\tau=0}^{\frac{t(t-1)}{2}} (1+\eta_c\gamma_o)^{\tau} + \beta \tilde{Q}(\eta_c+\abs{u_{t+1}})  \\
        &\leq 20\tilde{Q} \sum_{\tau=0}^{\frac{t(t-1)}{2}+1} (1+\eta_c\gamma_o)^{\tau}
    \end{aligned}
\end{equation*}
If we prove $20\tilde{Q} \sum_{\tau=0}^{\frac{t(t-1)}{2}+1} (1+\eta_c\gamma_o)^{\tau}\leq 10\tilde{Q} \sum_{\tau=0}^{\frac{t(t+1)}{2}} (1+\eta_c\gamma_o)^{\tau}$, the induction proof is complete. Now, we need to prove
\begin{equation*}
    \begin{aligned}
        20\tilde{Q} \sum_{\tau=0}^{\frac{t^2-t}{2}+1} (1+\eta_c\gamma_o)^{\tau}&\leq 10\tilde{Q} \sum_{\tau=0}^{\frac{t^2+t}{2}} (1+\eta_c\gamma_o)^{\tau} \\
        &\leq 10\tilde{Q} \sum_{\tau=0}^{\frac{t^2-t}{2}+1} (1+\eta_c\gamma_o)^{\tau} + 10\tilde{Q} \sum_{\tau=\frac{t^2-t}{2}+2}^{\frac{t^2+t}{2}} (1+\eta_c\gamma_o)^{\tau}. 
    \end{aligned}
\end{equation*}
Therefore we need to show that, 
\begin{equation}\label{appe1}
 \underbrace{\sum_{\tau=0}^{\frac{t^2-t}{2}+1} (1+\eta_c\gamma_o)^{\tau}}_{S_1} \leq \underbrace{\sum_{\tau=\frac{t^2-t}{2}+2}^{\frac{t^2+t}{2}} (1+\eta_c\gamma_o)^{\tau}}_{S_2}.
\end{equation}
Now, summing up the geometric series $S_1$, $\sum_{\tau=0}^{\frac{t^2-t}{2}+1} (1+\eta_c\gamma_o)^{\tau} = \frac{(1+\eta_c\gamma_o)^{\frac{t^2-t}{2}+2}-1}{\eta_c\gamma_o}.$
Using change of variable in $S_2$ of \eqref{appe1} as  $m=\tau-\left(\frac{t^2-t}{2}+2\right)$,
\begin{equation*}
    \sum_{m=0}^{t-2} (1+\eta_c\gamma_o)^{\frac{t^2-t}{2}+m+2} = (1+\eta_c\gamma_o)^{\frac{t^2-t}{2}+2}\frac{(1+\eta_c\gamma_o)^{t-1}-1}{\eta_c\gamma_o}.
\end{equation*}
Therefore, we now need to prove,
\begin{equation}
    \begin{aligned}
        (1+\eta_c\gamma_o)^{\frac{t^2-t}{2}+2}-1&\leq (1+\eta_c\gamma_o)^{\frac{t^2-t}{2}+2}\left((1+\eta_c\gamma_o)^{t-1}-1\right)\\
    \Rightarrow 2(1+\eta_c\gamma_o)^{\frac{t^2-t}{2}+2}&\leq (1+\eta_c\gamma_o)^{\frac{t^2-t}{2}+t+1}+1\label{appe2}
    \end{aligned}
\end{equation}
We further prove \eqref{appe2} by induction as follows.
For $t=2$, $2(1+\eta_c\gamma_o)^3\leq (1+\eta_c\gamma_o)^4+1$. 
Let us assume the following expression holds for time step $t$.
\begin{equation}
    2(1+\eta_c\gamma_o)^{\frac{t^2-t}{2}+2}\leq (1+\eta_c\gamma_o)^{\frac{t^2-t}{2}+t+1}\label{appe3}
\end{equation}
Now, we prove for the time step $t+1$,
\begin{equation}
    \begin{aligned}
        2(1+\eta_c\gamma_o)^{\frac{t(t+1)}{2}+2}&= 2(1+\eta_c\gamma_o)^{\frac{t(t-1)}{2}+t+2}\leq (1+\eta_c\gamma_o)^{\frac{t^2-t}{2}+t+1+t}\label{appe5}\\
        &=(1+\eta_c\gamma_o)^{\frac{t(t+1)}{2}+t+1}\leq (1+\eta_c\gamma_o)^{\frac{t(t+1)}{2}+t+2},
    \end{aligned}
\end{equation}
where we use $\frac{t(t-1)}{2}+t=\frac{t(t+1)}{2}$ and apply our assumption \eqref{appe3} in \eqref{appe5}. We have proved $2(1+\eta_c\gamma_o)^{\frac{t^2-t}{2}+2}\leq (1+\eta_c\gamma_o)^{\frac{t^2-t}{2}+t+1}\leq (1+\eta_c\gamma_o)^{\frac{t^2-t}{2}+t+1}+1$. This concludes our proof of \eqref{generalt_restated}.
\section{Choice of parameters for other LR schedulers}\label{ap:parameters}
\begin{enumerate}
    \item Cosine annealing \citep{sgdr}: 
    There are 3 parameters namely, initial restart interval, a multiplicative factor and minimum learning rate. 
The authors propose an initial restart interval of $1$, a factor of $2$ for subsequent restarts, with a minimum learning rate of $1e-4$, which we use in our comparisons. 
\item Knee \citep{knee}: The total number of epochs is divided into those that correspond to the "explore" epochs and "exploit" epochs. During the explore epochs, the learning rate is kept at a constant high value, while from the beginning of the exploit epochs, it is linearly decayed. We use the suggested setting of $100$ initial explore epochs with a learning rate of $0.1$ followed by a linear decay for the rest of the epochs.
\item One cycle \citep{smith2019super}: We perform the learning rate range test for our networks as suggested by the authors. For the range test, the learning rate is gradually increased during which the training loss explodes. The learning rate at which it explodes is noted and the maximum learning rate (the learning rate at the middle of the triangular cycle) is fixed to be before that. We linearly increase the learning rate for the initial $45\%$ of the total epochs up to the maximum learning rate determined by the range test, followed by a linear decay for the next $45\%$ of the total epochs. We then decay it further up to a divisive factor of $10$ for the rest of the epochs, which is the suggested setting. Note that the one cycle LR scheduler relies heavily on regularization parameters like weight decay and momentum.
\item Constant: To compare with a constant learning rate, we choose $0.05$ for the VGG-16 architecture and $0.1$ for the remaining architectures as done in our other baselines\citep{cyclic,sgdr}.
\item Multi step:  For the multi-step decay scheduler, our choice of the decay rate and time is based on the standard repositories for the architectures.
\footnote{ResNet:https://github.com/akamaster/pytorch$\_$resnet$\_$cifar10,\\DenseNet:https://github.com/andreasveit/densenet-pytorch, \\VGG:https://github.com/chengyangfu/pytorch-vgg-cifar10, \\WRN:https://github.com/meliketoy/wide-resnet.pytorch}. 
Specifically, we decay the learning rate by a factor of $10$ at the the epochs $100$ and $150$ for ResNet-110 and ResNet-50. In the case of DenseNet-40-12, we decay by a factor of $10$ at the epochs $150$ and $225$. For VGG-16, we decay by a factor of $10$ every $30$ epochs. In the case of WRN, we fix a learning rate of $0.2$ for the initial $60$ epochs,  decay it by $0.2^{2}$ for the next $60$ epochs, and by $0.2^{3}$ for the rest of the epochs.
\end{enumerate}
\section{Online tensor decomposition}\label{ap:tensor}


We follow the experimental setup in \citep{ge2015escaping}, where their proposed projected noisy gradient descent is applied to orthogonal tensor decomposition. A brief description of the online tensor decomposition problem is given below. 

Consider a tensor $T$ which has an orthogonal decomposition, 
\begin{equation}
    T = \sum_{i=1}^d a_i ^{\otimes 4} ,
\end{equation}
where $a_i$'s are orthonormal vectors. The goal of performing the tensor decomposition is to find the orthonormal components, given the tensor. The objective function is defined to reduce the correlation between the components:
\begin{equation}
    \min_{\forall i, \norm{u_i} = 1} \sum_{i \neq j} T(u_i,u_i, u_j, u_j)
\end{equation}

We plot the normalized reconstruction error, $ \norm{T -  \sum_{i=1}^d u_i ^{\otimes 4} }^2_F/\norm{T}^2_F$ in Figure \ref{fig:tensor}, where $\norm{.}_F$ denotes the Frobenius norm.
\begin{figure}
    \centering
    \includegraphics[width=0.5\linewidth]{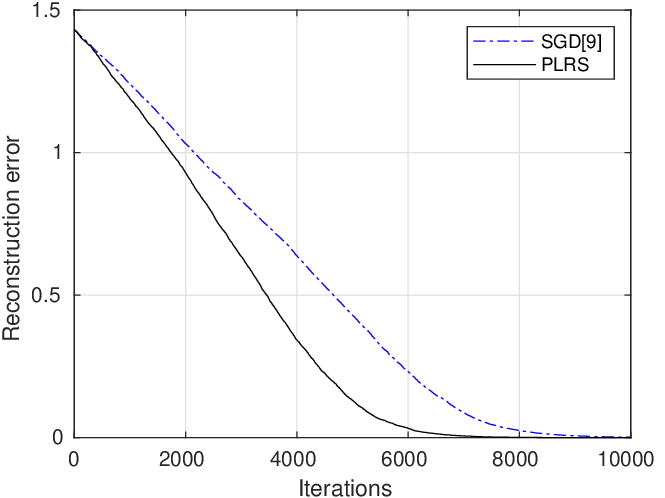}
    \caption{Reconstruction error for online tensor decomposition}
    \label{fig:tensor}
\end{figure}
We tune the learning rate parameters $L_{min}$ and $L_{max}$ to $0.007$ and $0.01$ respectively to obtain the convergence plot with PLRS. We compare against the plot in Figure 1.a of \citep{ge2015escaping}.
We note that the proposed Uniform LR produces faster and smoother convergence when compared to the unit sphere noise proposed in the Noisy SGD algorithm. As mentioned in \citep{ge2015escaping}, the plot may vary depending on the instance of initialization; however, it converges consistently across all runs.

Additionally, we implemented stochastic gradient descent with additive noise in the neural network setting. However, its performance was suboptimal even with extensive tuning of hyperparameters.

\end{document}